\documentclass{article}
\usepackage{rotating}
\usepackage{microtype}
\usepackage{graphicx}
\usepackage{subcaption}
\usepackage{multirow}
\usepackage{booktabs} 
\usepackage{enumitem} 
\usepackage[table]{xcolor}
\usepackage{colortbl}
\usepackage{wrapfig}
\usepackage{algorithm}
\usepackage{textcomp}

\usepackage{hyperref}

\usepackage{booktabs, multirow, tabularx}

\usepackage[preprint]{icml2026}

\usepackage{amsmath}
\usepackage{amssymb}
\usepackage{mathtools}
\usepackage{amsthm}
\usepackage{marvosym}

\usepackage[capitalize,noabbrev]{cleveref}

\theoremstyle{plain}
\newtheorem{theorem}{Theorem}[section]
\newtheorem{proposition}[theorem]{Proposition}
\newtheorem{lemma}[theorem]{Lemma}
\newtheorem{corollary}[theorem]{Corollary}
\theoremstyle{definition}
\newtheorem{definition}[theorem]{Definition}

\theoremstyle{remark}
\newtheorem{remark}[theorem]{Remark}

\usepackage[textsize=tiny]{todonotes}

\icmltitlerunning{PRM-as-a-Judge: A Dense Evaluation Paradigm for Fine-Grained Robotic Auditing}

\begin{document}

\twocolumn[
  \icmltitle{%
  \smash{\raisebox{-0.35\height}{\includegraphics[height=1.5cm]{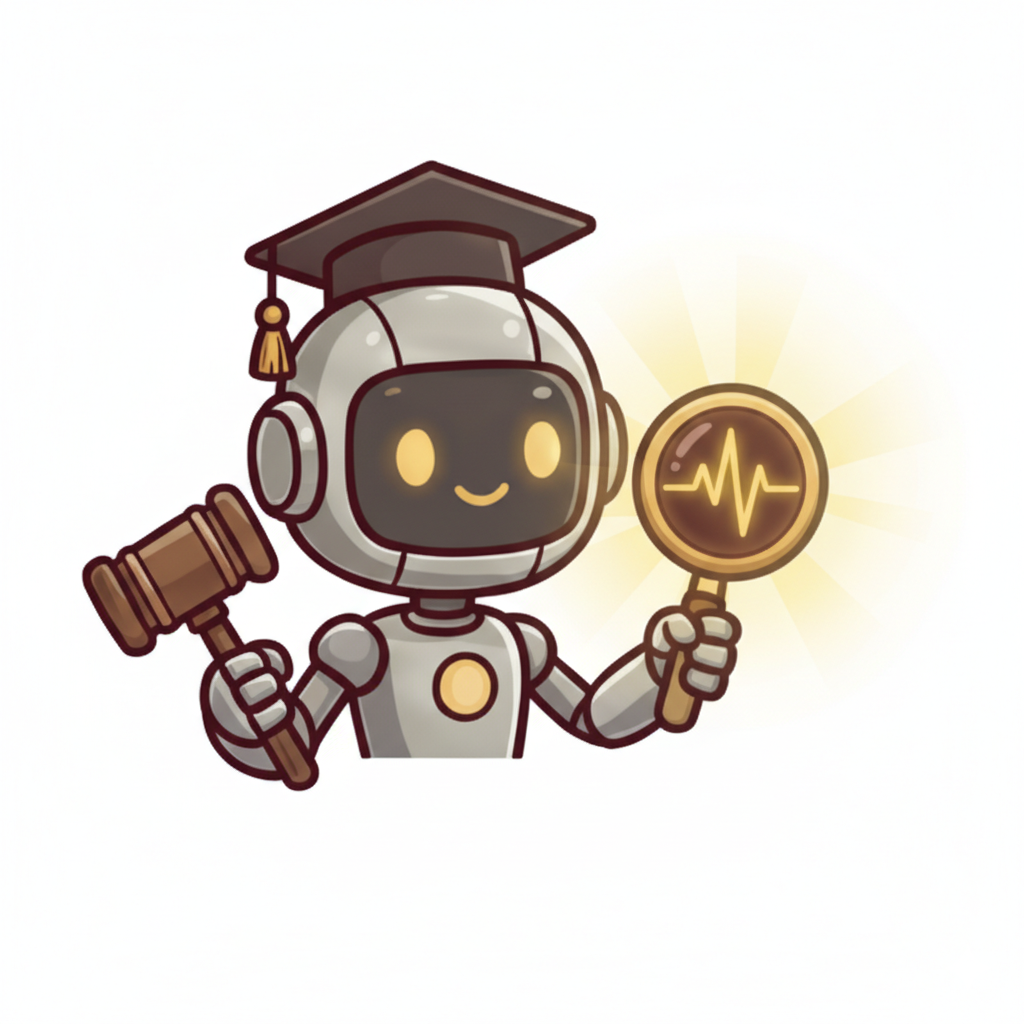}}}%
  \hspace{0.15em}%
  PRM-as-a-Judge: A Dense Evaluation Paradigm \\for Fine-Grained Robotic Auditing}



\icmlsetsymbol{equal}{*}
\icmlsetsymbol{lead}{\textdagger}
\icmlsetsymbol{cor}{\Letter}

\begin{icmlauthorlist}
  \icmlauthor{Yuheng Ji}{ia,ucas,pkuskl,equal,lead}
  \icmlauthor{Yuyang Liu}{ia,ucas,equal}
  \icmlauthor{Huajie Tan}{pkuskl,baai,equal}
  \icmlauthor{Xuchuan Huang}{pkuskl,baai}
  \icmlauthor{Fanding Huang}{baai,tsinghua}
  \icmlauthor{Yijie Xu}{baai,sydney}
  \icmlauthor{Cheng Chi}{baai}
  \icmlauthor{Yuting Zhao}{ia,ucas}
  \icmlauthor{Huaihai Lyu}{ia,ucas,pkuskl}
  \icmlauthor{Peterson Co}{pkuskl,baai}
  \icmlauthor{Mingyu Cao}{baai}
  \icmlauthor{Qiongyu Zhang}{baai,sydney}
  \icmlauthor{Zhe Li}{baai}
  \icmlauthor{Enshen Zhou}{baai,beihang}
  \icmlauthor{Pengwei Wang}{baai,lead}
  \icmlauthor{Zhongyuan Wang}{baai}
  \icmlauthor{Shanghang Zhang}{pkuskl,baai,cor}
  \icmlauthor{Xiaolong Zheng}{ia,ucas,cor}
\end{icmlauthorlist}

\icmlaffiliation{ia}{State Key Laboratory of Multimodal Artificial Intelligence Systems, Institute of Automation, Chinese Academy of Sciences}
\icmlaffiliation{ucas}{School of Artificial Intelligence, University of Chinese Academy of Sciences}
\icmlaffiliation{pkuskl}{State Key Laboratory of Multimedia Information Processing, School of Computer Science, Peking University}
\icmlaffiliation{baai}{Beijing Academy of Artificial Intelligence}
\icmlaffiliation{tsinghua}{Tsinghua University}
\icmlaffiliation{sydney}{University of Sydney}
\icmlaffiliation{beihang}{Beihang University}

\begin{center}
\textit{\textbf{Project website:}} \href{https://PRM-as-a-Judge.github.io/}{https://PRM-as-a-Judge.github.io}
\end{center}

\icmlcorrespondingauthor{Shanghang Zhang}{shanghang@pku.edu.cn}
\icmlcorrespondingauthor{Xiaolong Zheng}{xiaolong.zheng@ia.ac.cn}

\icmlkeywords{Process Reward Models, Evaluation, Robotics, Long-Horizon Manipulation, Benchmarking}

  \vskip 0.3in
]



\printAffiliationsAndNotice{$^{*}$ Equal contribution. $^{\text{\textdagger}}$ Project leaders. $^{\text{\Letter}}$ Corresponding authors.}

\begin{abstract}

Current robotic evaluation is still largely dominated by binary success rates, which collapse rich execution processes into a single outcome and obscure critical qualities such as progress, efficiency, and stability. To address this limitation, we propose \textit{\textbf{PRM-as-a-Judge}}, a dense evaluation paradigm that leverages Process Reward Models (PRMs) to audit policy execution directly from trajectory videos by estimating task progress from observation sequences.
Central to this paradigm is the \textit{OPD (Outcome–Process–Diagnosis)} metric system, which explicitly formalizes execution quality via a task-aligned progress potential. 
We characterize dense robotic evaluation through two axiomatic properties: \textit{macro-consistency}, which requires additive and path-consistent aggregation, and \textit{micro-resolution}, which requires sensitivity to fine-grained physical evolution. Under this formulation, potential-based PRM judges provide a natural instantiation of dense evaluation, with macro-consistency following directly from the induced scalar potential.
We empirically validate the micro-resolution property using \textit{RoboPulse}, a diagnostic benchmark specifically designed for probing micro-scale progress discrimination, where several trajectory-trained PRM judges outperform discriminative similarity-based methods and general-purpose foundation-model judges.
Finally, leveraging \textit{\textbf{PRM-as-a-Judge}} and the OPD metric system, we conduct a structured audit of mainstream policy paradigms across long-horizon tasks, revealing behavioral signatures and failure modes that are invisible to outcome-only metrics.
\vspace{-1em}
\end{abstract}

\begin{figure}[t!]
    \centering
    \includegraphics[width=\linewidth]{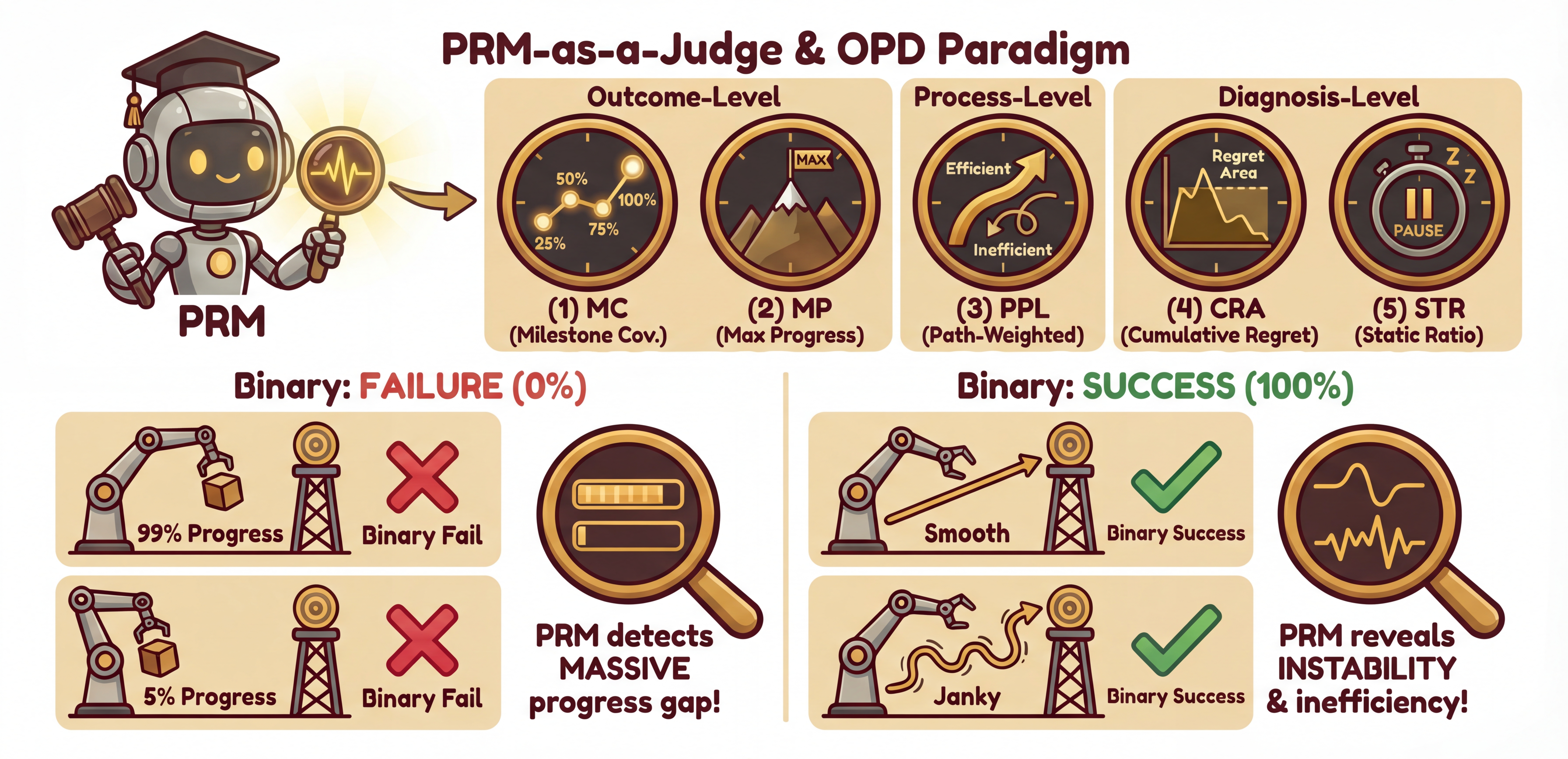}
    \caption{\textbf{PRM-as-a-Judge and OPD.}
    Binary success rate compresses an execution into a terminal outcome and obscures progress, efficiency, and stability.
    We use a Process Reward Model to induce a dense progress potential and derive OPD metrics at the outcome, process, and diagnosis levels.
    This yields fine-grained auditing that distinguishes near miss failures from early collapse and separates smooth successes from inefficient or unstable ones.}
    \label{fig:teaser}
    \vspace{-1em}
\end{figure}

\section{Introduction}
\label{sec:intro}

Robotic manipulation is rapidly moving from short, single-step skills to long-horizon, contact-rich tasks that require sustained coordination, stability, and recovery.
As policy families diversify, evaluation increasingly determines what the community optimizes, compares, and ultimately considers progress.
Yet most benchmarks still reduce an entire execution to a binary success rate~\cite{li2024simplerenv, james2020rlbench, mees2022calvin}.

This outcome-only view creates a mismatch between what we measure and what we want.
{\textit{1) Binary success lacks resolution.}}
It collapses qualitatively different executions into the same label, treating near-miss attempts and early collapses identically, and failing to separate smooth, efficient successes from ones achieved through detours, backtracking, or hesitation \cite{wang2025roboeval, elmallah2025score}.
{\textit{2) Binary success is diagnostically opaque.}}
When an episode succeeds or fails, it offers little evidence about \emph{where} progress stalled, \emph{how} the trajectory deteriorated (regression versus stagnation), or whether the outcome is limited by reachability or by instability during execution.
Fig.~\ref{fig:teaser} illustrates these two failure modes of outcome-only evaluation, motivating a dense, process-aware alternative.

A natural direction is to audit execution along the trajectory and derive intermediate signals for progress, efficiency, and stability.
However, a dense evaluator must satisfy two axioms.
\textit{Macro-consistency} requires an additive, path-independent potential so that local assessments aggregate into a coherent episode-level picture.
\textit{Micro-resolution} requires sensitivity to subtle, task-relevant physical evolution.


In this work, we propose \textit{\textbf{PRM-as-a-Judge}}, a dense evaluation paradigm that audits policy execution from trajectory videos and produces an interpretable diagnostic report beyond binary success.
The core idea is to induce a task-aligned progress signal from observations and summarize it through a multi-level metric system that separates \emph{how far} a policy reaches from \emph{how} it gets there and \emph{why} it fails.
We introduce the \textbf{OPD (Outcome–Process–Diagnosis)} metric system: outcome-level metrics describe stage-wise reachability, process-level metrics capture progress efficiency, and diagnosis-level metrics quantify regression and stagnation patterns that expose failure mechanisms and recovery behavior.
This decomposition turns trajectory videos into actionable evaluation signals.
More broadly, our goal is not only to instantiate a judge, but to establish a process-aware evaluation framework in which outcome, efficiency, and failure modes can be analyzed in a unified manner.

A key question is what kind of model can serve as such a judge.
Our central observation is that \emph{Process Reward Models} (PRMs), originally introduced as dense reward providers for reinforcement learning ~\cite{sermanet2018time, ma2022vip, gilpin2024generative, chen2025sarm}, can be naturally repurposed for fine-grained evaluation under an appropriate potential-based formulation.
PRMs are trained with dense supervision from robotic trajectories and are designed to produce progress-aware signals grounded in physical state evolution~\cite{vlac, Dopamine}.
This makes them a promising class of judges for progress auditing, rather than only a training component for improving success rates.
Our empirical study further supports this view, suggesting that trajectory-supervised PRMs can serve as effective judges for dense evaluation.

We validate \textit{\textbf{PRM-as-a-Judge}} through two complementary tracks.
First, we introduce \textbf{\textit{RoboPulse}}, a diagnostic benchmark that probes micro-scale progress discrimination under controlled relative comparison scales and diverse collection settings, and we show that PRM judges achieve substantially higher accuracy under fine-grained comparisons than alternative evaluation paradigms.
Second, we apply OPD to policy auditing on a long-horizon manipulation suite, where dense evaluation reveals stage-wise reachability profiles, separates high-quality successes from inefficient or unstable successes even when success rates appear similar, and uncovers reproducible failure fingerprints that are invisible to outcome-only metrics.
Together, these results demonstrate that dense judging not only measures more than success, 
but also provides concrete diagnostic signals that can inform future policy analysis and improvement. In summary, our main contributions are as follows:
\begin{itemize}
    \item We propose \textit{\textbf{PRM-as-a-Judge}}, a dense evaluation paradigm for auditing execution from trajectory videos using PRM judges.
    \item We introduce the \textbf{OPD} metric system that decomposes execution into outcome, process, and diagnosis signals for interpretable policy auditing beyond binary success.
    \item We introduce \textit{\textbf{RoboPulse}}, a benchmark for fine-grained progress discrimination, and show that trajectory-supervised PRM judges achieve the strongest performance under its evaluation protocol, especially on the smallest changes.
    \item We demonstrate the diagnostic utility of OPD on diverse evaluation tasks, revealing stage-wise reachability, success-conditioned execution quality trade-offs, and failure fingerprints across policy families.
\end{itemize}

\section{Related Work}

\textbf{Evaluation Metrics and Paradigms in Robotics.}
Robotic manipulation policies are predominantly evaluated using outcome-driven metrics, with Binary Success Rate (SR) serving as the standard across widely adopted benchmarks and simulation suites \cite{li2024simplerenv, james2020rlbench, bai2025embodied}. While SR supports coarse-grained comparison, it collapses the execution process into a binary verdict, obscuring distinctions between near-miss failures and catastrophic breakdowns, and masking execution pathologies such as redundancy, stagnation, or unsafe contacts \cite{wang2025roboeval, elmallah2025score,bai2025towards}. To move beyond purely outcome-based evaluation, prior work has introduced auxiliary metrics that characterize execution quality along additional dimensions, such as path length, smoothness, or mechanical effort \cite{anderson2018evaluation, zhu2020robosuite, gu2023maniskill2}. However, these metrics are typically derived from privileged simulator states such as ground-truth object poses, contact forces, or joint torques, which are unavailable in real-world or black-box settings \cite{ma2022vip, li2024simplerenv}. Consequently, despite incremental progress toward multi-dimensional evaluation, the community still lacks a general, dense, and fine-grained paradigm that operates directly on raw observations and supports systematic diagnostic assessment.

\textbf{Process Reward Models in Robotics.}
Process Reward Models (PRMs) have been extensively studied to alleviate reward sparsity in long-horizon reinforcement learning by providing dense supervision along execution trajectories. 
Early approaches employ contrastive or discriminative objectives to capture relative temporal progress or instruction–trajectory alignment, while subsequent work introduces generative or stage-aware formulations that incorporate semantic or structural priors into reward representations \cite{sermanet2018time, ma2022vip, ma2023liv, gilpin2024generative, chen2025sarm}. 
More recently, large-scale foundation reward models trained on diverse robotic datasets demonstrate the ability to learn absolute state potentials over the task manifold, yielding additive and path-independent task progress estimates \cite{vlac, Dopamine}. 
Despite their effectiveness for policy optimization, PRMs are predominantly treated as auxiliary training signals, leaving their potential as stand-alone evaluators for post-hoc execution assessment largely unexplored.

\textbf{Foundation Models as Automated Judges.}
The paradigm of using foundation models as automated judges has been widely adopted in natural language processing, where \emph{LLM-as-a-Judge} frameworks demonstrate strong agreement with human preferences in text evaluation \cite{zheng2023judging, alpaca_eval, pandalm2024, li2024llmasajudge}. 
Inspired by this success, recent studies have explored extending large vision-language models (VLMs) to robotic evaluation, such as safety assessment and heuristic reward scoring \cite{gdm2024autort, ma2023eureka}. 
However, directly transferring this paradigm to robotic manipulation reveals a fundamental limitation: general-purpose foundation models lack the physical resolution required to reliably evaluate fine-grained state transitions, contact dynamics, and geometric feasibility \cite{huang2023voxposer, wang2025roboeval}. 
This mismatch motivates the need for evaluators grounded in physical interaction data, leading us to adopt PRMs as domain-specific judges for physically meaningful assessment.

\section{The PRM-as-a-Judge Paradigm}
\label{sec:method}

To overcome the low resolution of binary success metrics, we introduce a dense evaluation paradigm that audits execution from trajectory videos and yields interpretable OPD signals.
We first state two axioms, macro-consistency and micro-resolution, then define the OPD metric system, and finally describe an effective instantiation via PRMs.

\subsection{Theoretical Formulation and Axioms}
\label{subsec:theory}

We represent an execution as an information-state trajectory
$\tau = (x_0, x_1, \dots, x_T)$, where $x_t$ denotes the task-relevant
information available to the judge at time $t$ (e.g., the current observation
or an observation-derived state). For a fixed task instruction, we assume a
potential $\Phi(x_t) \in [0,1]$ that measures progress toward the goal.
A valid dense evaluator must satisfy two properties, macro-consistency for additive and path-independent aggregation, and micro-resolution for discriminating fine-grained task-relevant state changes.

\paragraph{Property 1: Macro-Consistency via Temporal Additivity.}
A valid dense evaluator must induce a consistent progress signal across different temporal scales.
Let $S(x_i, x_j)$ denote the estimated progress from observed state $x_i$ to $x_j$.
For any trajectory segment $[t_0, t_2]$ and any intermediate time $t_1 \in (t_0, t_2)$, the evaluator should satisfy
\begin{equation}
S(x_{t_0}, x_{t_2}) = S(x_{t_0}, x_{t_1}) + S(x_{t_1}, x_{t_2}).
\end{equation}
This property means that progress measured over a long segment should agree with the sum of progress measured over its subsegments.
As a result, episode-level evaluation does not depend on how the trajectory is temporally partitioned.
Under the proposed formulation, this property is satisfied when progress is induced by a scalar potential through
\begin{equation}
S(x_i, x_j) = \Phi(x_j) - \Phi(x_i),
\end{equation}
whereas evaluators that score each state pair independently do not in general guarantee such additivity.

\paragraph{Property 2: Micro-Resolution of Progress Signals.}
Beyond temporal coherence, the progress signal must be sufficiently sensitive to fine-grained, task-relevant state evolution.
Specifically, the evaluator should assign distinguishable progress values to observed states that differ in meaningful physical execution, even when such differences occur at a small temporal or spatial scale.
Formally, for nearby observed states $(x_t, x_{t+\Delta})$ drawn from the same execution trajectory with small $\Delta$, task-relevant local changes should induce non-degenerate differences in the potential $\Phi$, rather than collapsing uniformly toward zero.
This property prevents trivial or collapsed evaluators that assign nearly constant progress scores throughout execution, rendering dense diagnosis impossible.

\begin{table*}[t]
\centering
\renewcommand{\arraystretch}{1.3}
\caption{\textbf{Overview of the OPD Metric System.} \textbf{Range}: Valid value domain, where MC takes discrete quartile values and others are continuous. \textbf{Pref.}: Preference direction ($\uparrow$: higher is better; $\downarrow$: lower is better).}
\label{tab:opd_definitions}
\resizebox{0.95\textwidth}{!}{%
\begin{tabular}{llccl}
\toprule
\textbf{Level} & \textbf{Metric} & \textbf{Range} & \textbf{Pref.} & \textbf{Interpretation} \\
\midrule
\multirow{2}{*}{Outcome} & Milestone Coverage (MC) & $\{0, 0.25, \dots, 1\}$ & $\uparrow$ & The furthest milestone reached during execution. \\
 & Max Progress (MP) & $[0, 1]$ & $\uparrow$ & The maximum progress score attained during the episode. \\
\midrule
Process & Path-weighted Progress Length (PPL) & $[0, 1]$ & $\uparrow$ & Efficiency of the execution path relative to net progress and path variation. \\
\midrule
\multirow{2}{*}{Diagnosis} & Cumulative Regret Area (CRA) & $[0, 1]$ & $\downarrow$ & Measures the severity and duration of regression from the best-so-far progress level. \\
 & Stagnation Ratio (STR) & $[0, 1]$ & $\downarrow$ & Fraction of time steps with negligible task-relevant progress change. \\
\bottomrule
\end{tabular}%
}
\end{table*}

\subsection{The OPD Metric System}
\label{subsec:opd}

Grounded in the potential function $\Phi(x)$, we construct the OPD (Outcome-Process-Diagnosis) metric system. 
Tab.~\ref{tab:opd_definitions} summarizes the definitions.
Detailed mathematical derivations, including robustness proofs against trivial solutions and {analyses of discarded alternative formulations}, are provided in Appendix~\ref{app:metric_derivation}.

\paragraph{1. Outcome Level.}
Metrics at this level quantify how far the execution progresses toward task completion.
\begin{itemize}
\item \textbf{Milestone Coverage (MC):} 
    Instead of a single binary bit, we discretize the potential space into quartiles to robustly identify the furthest stage reached.
    \begin{equation}
    \begin{split}
        \text{MC}(\tau) = \max \{q \in \{0, 0.25, 0.5, 0.75, 1\} \\
        \mid \exists t, \Phi(x_t) \ge q\}
    \end{split}
    \end{equation}
    It serves as a ``soft success rate'', distinguishing policies that fail at the approach stage (0.25) from those that fail during final alignment (0.75).
    
\item \textbf{Max Progress (MP):} 
    Defined as the peak scalar potential achieved during the episode, capturing the policy's capacity boundary:
    \begin{equation}
        \text{MP}(\tau) = \max_{t \in [0, T]} \Phi(x_t)
    \end{equation}
\end{itemize}

\paragraph{2. Process Level.}
This level evaluates the efficiency of the execution path.
\begin{itemize}
\item \textbf{Path-weighted Progress Length (PPL):}
To distinguish efficient execution from redundant motion, we define PPL as a
gated ratio between net progress and total potential variation. The terminal
potential $\Phi(x_T)$ downweights incomplete attempts, while a rectified
net-progress term and a small constant $\delta > 0$ ensure numerical stability
(see Appendix~\ref{app:metric_derivation}):
\begin{equation}
\text{PPL}(\tau)
=
\Phi(x_T)\cdot
\frac{[\Phi(x_T)-\Phi(x_0)]_{+}}
{\sum_{t=1}^{T}\left|\Phi(x_t)-\Phi(x_{t-1})\right|+\delta}
\end{equation}
A higher PPL indicates more monotonic progress with less redundant
back-and-forth.
\end{itemize}

\paragraph{3. Diagnosis Level.}
This level localizes the nature of failures, distinguishing between stability issues and planning latencies.
\begin{itemize}

    \item \textbf{Cumulative Regret Area (CRA):}
    We define regret as the deviation from the historical peak potential. Physically, CRA measures the persistence of state regression during execution.
    \begin{equation}
    \text{CRA}(\tau)
    =
    \frac{1}{T+1}\sum_{t=0}^{T}
    \left[
    \left(\max_{0\le k\le t}\Phi(x_k)\right)-\Phi(x_t)
    \right]
    \end{equation}
    CRA measures how long and how severely the execution regresses from its best-so-far state.

    \item \textbf{Stagnation Ratio (STR):} 
    To diagnose inference latency or decision uncertainty, STR measures the proportion of time steps where the potential change falls below a noise threshold $\epsilon$:
    \begin{equation}
        \text{STR}(\tau) = \frac{1}{T} \sum_{t=1}^T \mathbb{I}(|\Phi(x_t) - \Phi(x_{t-1})| < \epsilon)
    \end{equation}
    STR measures how often progress is negligible under the task-conditioned potential, indicating hesitation or stalled interaction.

\end{itemize}

\subsection{Effective Instantiation via PRMs}
\label{subsec:prm}

Having established the axiomatic requirements for dense evaluation, we consider potential-based PRM judges as a practical instantiation of the progress potential. 
In our framework, a PRM is treated as a task-conditioned evaluator that assigns each observed state a location on a latent progress manifold, inducing a scalar potential
\begin{equation}
\Phi:\mathcal{X}\rightarrow[0,1],
\end{equation}
which serves as the foundation for all OPD metrics in Sec.~\ref{subsec:opd}. 
This abstraction is policy-agnostic and enables post-hoc evaluation of execution quality.

\paragraph{Macro-consistency.}
Under the proposed potential-based formulation, a PRM judge assigns each observed state a scalar progress score under a fixed task context, thereby inducing a globally comparable progress ordering within that task.
As a result, evaluations at different time steps lie on a shared scale, and the additivity requirement in Sec.~\ref{subsec:theory} follows directly from the potential-difference construction introduced above.
A formal proof under this formulation is provided in Appendix~\ref{app:macro_consistency_proof}.
In contrast, discriminative similarity-based methods, and more generally evaluators based on relative or pairwise comparison, can drift across task-equivalent states under viewpoint or multimodal ambiguity, preventing the formation of a globally consistent potential field, as analyzed in Appendix~\ref{app:inconsistency_proof}.

\paragraph{Micro-resolution.}
Dense trajectory supervision makes PRMs plausible candidates for capturing small but task-relevant state evolution.
We evaluate this capability on RoboPulse in Sec.~\ref{sec:benchmark}, which is designed for micro-scale progress discrimination.

\paragraph{Summary.}
Under the proposed potential-based formulation, PRM-as-a-Judge is macro-consistent by construction, and dense trajectory supervision makes PRM judges well-motivated candidates for micro-resolution, which we validate empirically in Sec.~\ref{sec:experiments}.

\section{RoboPulse: A Diagnostic Benchmark for Progress Evaluation}
\label{sec:benchmark}

As established in Sec.~\ref{subsec:theory}, dense evaluation requires both macro-consistency and micro-resolution. Under the proposed potential-based formulation, the former is guaranteed structurally, whereas the latter must be examined empirically.
To this end, we introduce \textit{RoboPulse}, a diagnostic benchmark for probing judge limitations under controlled fine-grained comparisons.

\subsection{Problem Formulation}

RoboPulse formulates progress evaluation as a \emph{pairwise progress judgment} problem. Given two observations sampled from the same execution episode, the evaluator is required to determine whether the latter state represents progress or regression with respect to the task objective.
This formulation enables systematic probing of signed progress discrimination across a range of relative scales, including cases involving minimal physical change, without requiring recovery of an absolutely calibrated progress scale. We therefore construct RoboPulse within curated monotonic phases and use it to evaluate signed progress discrimination under controlled ambiguity.
Each RoboPulse instance is defined as a tuple:
\[
(c, O_{\text{ref}}^{\text{start}}, O_{\text{ref}}^{\text{end}}, O_{\text{before}}, O_{\text{after}}, y),
\]
where $c$ denotes the task instruction, $O_{\text{before}}$ and $O_{\text{after}}$ are two sets of synchronized multi-view observations sampled from the same execution episode, and $y \in \{+1, -1\}$ indicates whether the latter state represents progress or regression relative to the former.

Reference start and end observations, when available, serve as optional anchors for the overall task scope. Evaluators may utilize these references according to their respective input capabilities. RoboPulse focuses exclusively on signed progress direction and does not include a neutral category because the benchmark is constructed from curated monotonic phases, with negligible or ambiguous intervals filtered out during annotation.

\begin{figure}[t!]
    \centering
    \includegraphics[width=\linewidth]{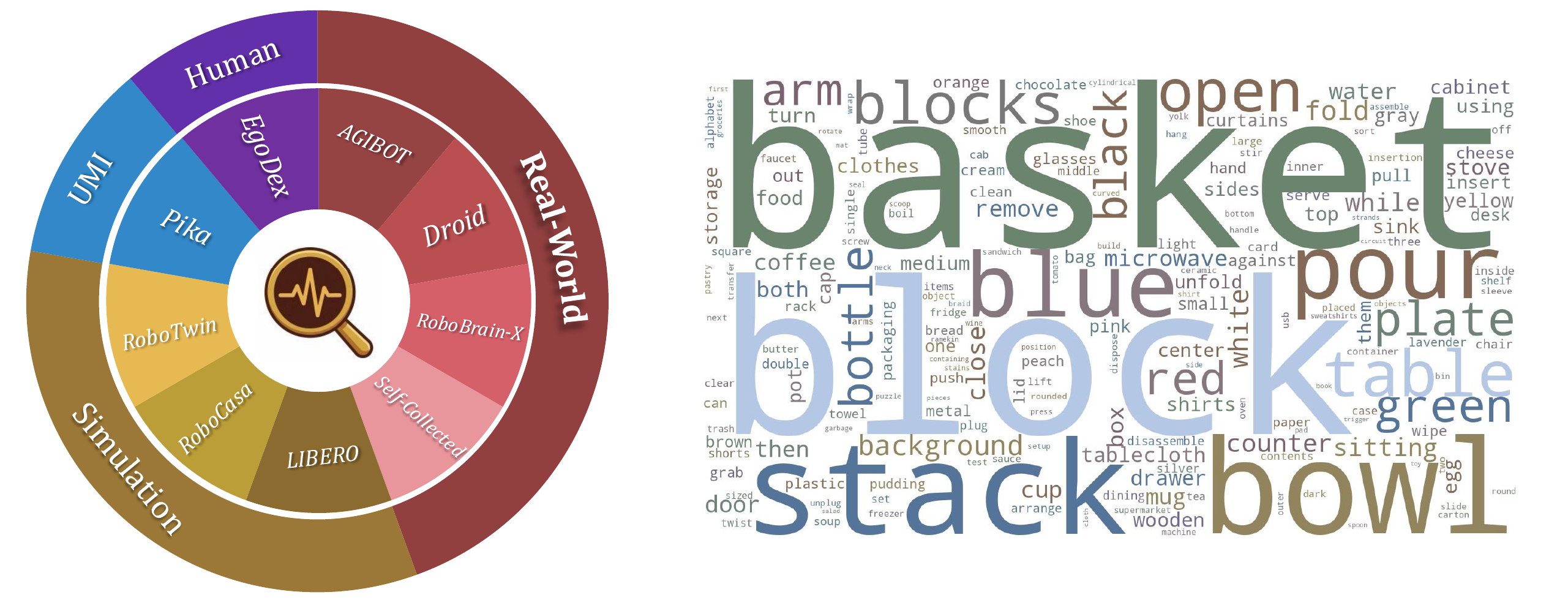}
    \caption{\textbf{RoboPulse overview.} \textbf{Left:} data composition across collection settings and embodiment--setting categories. \textbf{Right:} task semantic coverage via a token word cloud extracted from task names. RoboPulse is designed to probe micro-scale progress discrimination under diverse embodiments and visual domains.}
    \label{fig:robopulse_overview}
    \vspace{-1em}
\end{figure}

\subsection{Benchmark Overview}
RoboPulse is a large-scale, multi-source benchmark designed for evaluating fine-grained progress discrimination.
It comprises 1,800 pairwise progress judgment cases constructed from 1,622 execution episodes spanning 816 tasks, collected across 7 data sources.
The benchmark is organized into 9 embodiment--setting categories, each contributing 200 progress pairs with balanced positive and negative labels, and collectively covering multiple robot platforms, sensing configurations, and data collection settings.
These cases are further stratified into Small, Medium, and Large relative comparison scales to probe progress discrimination at different granularities.
Fig.~\ref{fig:robopulse_overview} provides an overview of the data composition and task coverage of RoboPulse, with detailed statistics reported in Appendix~\ref{app:robopulse_data}.
The hop-size distribution spans a wide range of relative comparison scales, with approximately balanced coverage over progress and regression directions (see Appendix~\ref{app:norm_sampling}).

\subsection{Benchmark Construction}
RoboPulse is built from manipulation trajectories collected in real robot teleoperation, simulation rollouts, UMI-based collection, and egocentric human demonstrations, with synchronized multiview observations when available.
To control the relative comparison scale, we adopt hop-based normalization~\cite{Dopamine} within semantically coherent phases.
We annotate key frames to segment each episode into semantically coherent phases, retain only phases in which task progress is monotonic, and exclude intervals with negligible, oscillatory, or annotation-ambiguous progress.
We then sample progress pairs from controlled normalized hop ranges (Small/Medium/Large) within each retained phase, yielding broad scale coverage and balanced progress directions, as summarized in Appendix~\ref{app:norm_sampling}.

\newcommand{\best}[1]{\textbf{#1}}
\newcommand{\second}[1]{\underline{#1}}

\section{Experiments}
\label{sec:experiments}

In this section, we validate PRM-as-a-Judge on RoboPulse and apply it to policy auditing across several long-horizon manipulation tasks.

\begin{itemize}
    \item \textbf{RQ1: Micro-Resolution of PRM-as-a-Judge.}
    Do PRM-based judges exhibit stronger micro-resolution than alternatives under the Small hop range?

    \item \textbf{RQ2: Stage-wise Decomposition of Reachability.}
    When binary success rate loses resolution on long-horizon manipulation, can OPD provide an interpretable decomposition that localizes \emph{how far} a policy can typically reach and \emph{where} failures concentrate along the execution?

    \item \textbf{RQ3: Execution Quality Conditioned on Success.}
    When outcome-level success appears comparable, can process and diagnosis metrics separate \emph{high-quality successes} from \emph{inefficient or unstable successes}, revealing systematic trade-offs in execution quality?

    \item \textbf{RQ4: Failure Fingerprints for Mechanistic Diagnosis.}
    Do policy families exhibit reproducible \emph{failure fingerprints} in the OPD space when focusing on failed episodes, enabling mechanistic diagnosis and actionable improvement directions?
\end{itemize}

\begin{table*}[!ht]
\centering
\footnotesize
\setlength{\tabcolsep}{3.2pt}
\renewcommand{\arraystretch}{1.12}
\caption{\textbf{Pairwise progress-judgment accuracy on RoboPulse under different relative comparison scales.}
We report accuracy (0--1) over Small/Medium/Large hop ranges, stratified by four collection settings (Real, Sim, UMI, Human), together with per-scale and overall averages.
All evaluators are tested zero-shot through their native input interfaces, so results reflect practical judging capability rather than a strictly input-matched comparison.}
\label{tab:robopulse_main}
\scalebox{0.99}{%
\begin{tabular}{l|ccccc|ccccc|ccccc|c}
\toprule
& \multicolumn{5}{c|}{\textbf{Small}} 
& \multicolumn{5}{c|}{\textbf{Medium}} 
& \multicolumn{5}{c|}{\textbf{Large}} 
& \textbf{AVG} \\
\cmidrule{2-16}
\textbf{Method}
& Real & Sim & UMI & Human & Avg
& Real & Sim & UMI & Human & Avg
& Real & Sim & UMI & Human & Avg
&  \\
\midrule

\multicolumn{17}{l}{\cellcolor{gray!12}\textbf{Discriminative Similarity-Based Methods}} \\
CLIP ViT-B/32 (I2I) & 0.56 & 0.52 & 0.58 & 0.58 & 0.56 & 0.56 & 0.64 & 0.52 & 0.50 & 0.56 & 0.60 & 0.69 & 0.63 & 0.51 & 0.61 & 0.57 \\
CLIP ViT-L/14 (I2I) & 0.52 & 0.56 & 0.55 & 0.56 & 0.54 & 0.61 & 0.65 & 0.50 & 0.59 & 0.59 & 0.64 & 0.72 & 0.68 & 0.54 & 0.65 & 0.59 \\
CLIP ViT-B/32 (T2I) & 0.48 & 0.49 & 0.56 & 0.49 & 0.51 & 0.51 & 0.48 & 0.35 & 0.34 & 0.42 & 0.38 & 0.42 & 0.61 & 0.54 & 0.49 & 0.47 \\
CLIP ViT-L/14 (T2I) & 0.50 & 0.46 & 0.48 & 0.52 & 0.49 & 0.49 & 0.42 & 0.35 & 0.52 & 0.45 & 0.46 & 0.42 & 0.42 & 0.51 & 0.45 & 0.46 \\

\midrule
\multicolumn{17}{l}{\cellcolor{gray!12}\textbf{General Foundation-Model Judges}} \\
Gemini 3 Pro Preview & 0.55 & 0.62 & 0.43 & 0.56 & 0.54 & 0.65 & 0.70 & 0.73 & 0.59 & 0.67 & 0.72 & \second{0.85} & 0.77 & 0.74 & 0.77 & 0.66 \\
GPT-5.2 & 0.46 & 0.46 & 0.47 & 0.49 & 0.47 & 0.58 & 0.57 & 0.54 & 0.34 & 0.51 & 0.57 & 0.62 & 0.70 & 0.59 & 0.62 & 0.53 \\
Qwen3-VL-4B-Instruct & 0.47 & 0.49 & 0.34 & 0.53 & 0.46 & 0.56 & 0.56 & 0.59 & 0.50 & 0.55 & 0.60 & 0.71 & 0.65 & 0.68 & 0.66 & 0.56 \\
Qwen3-VL-8B-Instruct & 0.49 & 0.51 & 0.44 & 0.47 & 0.48 & 0.61 & 0.65 & 0.61 & 0.41 & 0.57 & 0.68 & 0.79 & 0.77 & 0.69 & 0.74 & 0.59 \\

\midrule
\multicolumn{17}{l}{\cellcolor{gray!12}\textbf{Progress Reward Model Judges}} \\
VLAC & {0.60} & 0.61 & \second{0.66} & 0.57 & 0.61 & 0.67 & \second{0.75} & 0.70 & \second{0.75} & \second{0.72} & \best{0.79} & 0.78 & \second{0.81} & \second{0.78} & \second{0.79} & \second{0.71} \\
GVL  & \second{0.61} & \second{0.67} & {0.58} & \second{0.67} & \second{0.63} & \second{0.71} & 0.71 & \second{0.75} & 0.69 & \second{0.72} & \second{0.78} & 0.83 & 0.78 & 0.75 & 0.78 & \second{0.71} \\
Robo-Dopamine 
& \best{0.66} & \best{0.89} & \best{0.87} & \best{0.76} & \best{0.80}
& \best{0.79} & \best{0.89} & \best{0.88} & \best{0.84} & \best{0.85}
& \second{0.78} & \best{0.90} & \best{0.88} & \best{0.83} & \best{0.85}
& \best{0.83} \\

\bottomrule
\end{tabular}%
}
\vspace{0.5em}
\end{table*}

\subsection{Experimental Setup}
\label{subsec:exp_setup}

We conduct two complementary sets of experiments to validate the \textbf{\textit{PRM-as-a-Judge}} paradigm.
The first evaluates evaluator capability on RoboPulse, focusing on fine-grained progress discrimination under controlled relative comparison scales.
The second applies PRM-as-a-Judge to policy auditing on RoboTwin 2.0~\cite{robotwin20}, examining what OPD reveals beyond binary success when comparing different policy families.

\paragraph{RoboPulse protocol and evaluation paradigms.}
We follow the RoboPulse protocol introduced in Sec.~\ref{sec:benchmark}.
Evaluators take as input two observations sampled from the same execution episode and predict whether the latter state represents progress or regression with respect to the task objective.
We report pairwise judgment accuracy aggregated over the benchmark, further stratified by hop magnitude (Small/Medium/Large) and by the four collection settings (Real/Sim/UMI/Human).

We evaluate PRM-as-a-Judge against two commonly used non-PRM evaluation paradigms.
The first consists of \emph{discriminative similarity-based methods} based on CLIP-style retrieval~\cite{clip,vlc}, instantiated as image--image and text--image similarity variants.
The second consists of \emph{general foundation-model judges} that perform zero-shot pairwise progress judgment via multimodal reasoning, including Gemini 3 Pro Preview~\cite{gemini3pro}, GPT-5.2~\cite{gpt5.2}, and Qwen3-VL~\cite{qwen3vl}.
Within PRM-as-a-Judge, we instantiate the judge using representative trajectory-supervised progress or reward models, including VLAC~\cite{vlac}, GVL~\cite{gvl}, and Robo-Dopamine~\cite{Dopamine}.
All evaluators are tested without task-specific fine-tuning and are queried through interfaces compatible with their native input formats.
The comparison therefore focuses on practical zero-shot judging capability under interface-compatible prompting, rather than a strictly input-standardized evaluation.
Importantly, our goal is not to claim that current PRMs are perfect evaluators, but to establish a diagnostic paradigm whose limitations are explicit and measurable.

\paragraph{RoboTwin policy auditing protocol.}
For policy auditing, we evaluate 5 representative policy families (ACT~\cite{act}, DP~\cite{dp}, RDT~\cite{rdt}, $\pi_0$~\cite{pi0}, and OpenVLA-OFT~\cite{openvlaoft}) on seven long-horizon manipulation tasks in RoboTwin 2.0.
All policies follow the official training configuration with the same data budget and are evaluated with 50 rollouts per policy per task.
Unless otherwise specified, we use Robo-Dopamine as the default PRM judge, since it achieves the strongest micro-resolution in RQ1. We therefore instantiate OPD on RoboTwin with the strongest validated judge under RoboPulse. Due to space constraints, we report three representative tasks in the main paper, with full results in Appendix~\ref{more_results}.

\paragraph{Reporting conventions.}
On RoboPulse, we report accuracy in $[0,1]$.
On RoboTwin, all OPD metrics are reported on a percentage scale for readability (multiplying by 100).

\subsection{Micro-Resolution of PRM Judges on RoboPulse (RQ1)}
\label{subsec:rq_micro_resolution}

To validate micro-resolution, we evaluate whether a judge can distinguish progress from regression as the relative comparison scale becomes increasingly fine-grained.
We report pairwise judgment accuracy under Small, Medium, and Large hop ranges across four collection settings (Tab.~\ref{tab:robopulse_main}).

\textbf{\textit{1) PRM-based judges deliver the strongest micro-resolution among evaluated paradigms.}}
Robo-Dopamine achieves the highest overall accuracy of 0.83.
The best alternative baselines remain lower, including Gemini at 0.66 and Qwen3-VL-8B at 0.59.
Discriminative CLIP variants cluster between 0.46 and 0.59 overall, indicating limited resolution under this protocol.

\textbf{\textit{2) The performance margin widens under the smallest relative scale.}}
Under the Small hop range, Robo-Dopamine reaches 0.80 average accuracy.
The other PRM baselines reach 0.61 and 0.63, while Gemini drops to 0.54 and GPT-5.2 to 0.47.
This gap suggests that fine-grained comparisons reduce reliance on coarse visual cues and favor progress-grounded supervision.

\textbf{\textit{3) Micro-resolution is consistent across collection settings.}}
Under Small hops, Robo-Dopamine stays above 0.75 in every setting and exceeds 0.85 on simulation and UMI data.
In contrast, several non-PRM baselines show larger cross setting variance, with visible degradation on Human and UMI cases.
Overall, Tab.~\ref{tab:robopulse_main} supports micro-resolution under fine-grained scales across diverse collection settings.

\begin{table*}[t]
\centering
\setlength{\tabcolsep}{3.5pt}
\renewcommand{\arraystretch}{1.2} 
\caption{\textbf{OPD auditing on RoboTwin 2.0.}
Five policy families on three representative long-horizon tasks (50 rollouts each).
MC@25/50/75/100 denotes milestone coverage.
\textbf{P.} denotes the process level (PPL), and \textbf{D.} denotes the diagnosis level (CRA and STR).
Full results are in Appendix~\ref{more_results}.}

\label{tab:robotwin_audit_main}

\resizebox{\textwidth}{!}{%
\begin{tabular}{l | cccc | c | c | c c | cccc | c | c | c c | cccc | c | c | c c}
\toprule

\multirow{4}{*}{} 
& \multicolumn{8}{c|}{\textbf{Blocks Ranking RGB}} 
& \multicolumn{8}{c|}{\textbf{Handover Mic}} 
& \multicolumn{8}{c}{\textbf{Place Bread Basket}} \\
\cmidrule(lr){2-9} \cmidrule(lr){10-17} \cmidrule(lr){18-25}

& \multicolumn{5}{c|}{\textbf{Outcome Level}} & \textbf{P.} & \multicolumn{2}{c|}{\textbf{D.}} 
& \multicolumn{5}{c|}{\textbf{Outcome Level}} & \textbf{P.} & \multicolumn{2}{c|}{\textbf{D.}} 
& \multicolumn{5}{c|}{\textbf{Outcome Level}} & \textbf{P.} & \multicolumn{2}{c}{\textbf{D.}} \\
\cmidrule(lr){2-6} \cmidrule(lr){7-7} \cmidrule(lr){8-9}
\cmidrule(lr){10-14} \cmidrule(lr){15-15} \cmidrule(lr){16-17}
\cmidrule(lr){18-22} \cmidrule(lr){23-23} \cmidrule(lr){24-25}

& \multicolumn{4}{c|}{\textbf{MC}} & \multirow{2}{*}{\textbf{MP}} & \multirow{2}{*}{\textbf{PPL}} & \multirow{2}{*}{\textbf{CRA}} & \multirow{2}{*}{\textbf{STR}} 
& \multicolumn{4}{c|}{\textbf{MC}} & \multirow{2}{*}{\textbf{MP}} & \multirow{2}{*}{\textbf{PPL}} & \multirow{2}{*}{\textbf{CRA}} & \multirow{2}{*}{\textbf{STR}} 
& \multicolumn{4}{c|}{\textbf{MC}} & \multirow{2}{*}{\textbf{MP}} & \multirow{2}{*}{\textbf{PPL}} & \multirow{2}{*}{\textbf{CRA}} & \multirow{2}{*}{\textbf{STR}} \\
\cmidrule(lr){2-5} \cmidrule(lr){10-13} \cmidrule(lr){18-21}

& {@25} & {@50} & {@75} & {@100} & & & & 
& {@25} & {@50} & {@75} & {@100} & & & & 
& {@25} & {@50} & {@75} & {@100} & & & & \\
\midrule

ACT & 
84 & 44 & 22 & 2 & 49.9 & 11.7 & 8.99 & 59.7 & 
100 & 100 & 94 & 74 & 96.8 & 72.3 & 4.08 & 44.1 & 
100 & 74 & 46 & 4 & 73.1 & 17.6 & 15.5 & 65.4 \\ 

DP & 
94 & 40 & 18 & 0 & 51.7 & 4.07 & 16.3 & 43.8 & 
100 & 94 & 88 & 44 & 93.8 & 66.0 & 5.49 & 57.2 & 
100 & 94 & 74 & 16 & 87.6 & 21.3 & 16.9 & 48.0 \\ 

RDT & 
100 & 62 & 30 & 0 & 61.2 & 6.19 & 16.3 & 39.0 & 
100 & 100 & 100 & 100 & 100 & 84.2 & 1.45 & 39.8 & 
100 & 100 & 78 & 8 & 90.4 & 16.6 & 22.7 & 37.1 \\ 

pi0 & 
96 & 66 & 40 & 8 & 63.4 & 15.9 & 11.5 & 48.4 & 
100 & 100 & 100 & 98 & 99.4 & 88.1 & 1.03 & 42.7 & 
100 & 94 & 62 & 16 & 83.7 & 21.2 & 18.9 & 47.6 \\ 

OpenVLA-OFT & 
98 & 42 & 6 & 0 & 48.3 & 2.39 & 17.8 & 38.6 & 
100 & 100 & 100 & 76 & 94.2 & 66.2 & 5.66 & 45.1 & 
100 & 100 & 84 & 2 & 92.6 & 8.86 & 26.3 & 31.9 \\ 

\bottomrule
\end{tabular}%
}
\end{table*}

\vspace{0.5em}
\subsection{Stage-wise Decomposition of Reachability with OPD Metrics (RQ2)}
\label{subsec:rq_stage_decomposition}

To demonstrate the capability of OPD in handling long-horizon tasks, we compute milestone coverage at 25\%, 50\%, 75\%, and 100\% and visualize the resulting reachability curves in Fig.~\ref{fig:rq2_reachability}.
We report the corresponding values for three representative tasks in Tab.~\ref{tab:robotwin_audit_main}.

\textbf{\textit{1) OPD localizes failure stage by separating partial completion from terminal success.}}
On \textit{Blocks Ranking RGB}, most policies reliably reach early stages, with MC@25 percent ranging from 84 to 100, while terminal completion is rare, with MC@100 percent ranging from 0 to 8.
This separation indicates that many rollouts fail late after substantial progress, which binary success cannot resolve.

\textbf{\textit{2) OPD distinguishes qualitatively different ``zero-success'' regimes.}}
On \textit{Blocks Ranking RGB}, $\pi_0$ reaches MC@75 of 40, while OpenVLA-OFT reaches MC@75 of 6, although both have near-zero MC@100.
OPD reveals that $\pi_0$ failures are closer to completion, whereas OpenVLA-OFT tends to fall short earlier.
A similar separation occurs between RDT (MC@75 of 30) and DP (MC@75 of 18), despite both having MC@100 of 0.

\textbf{\textit{3) Late-stage bottlenecks dominate several long-horizon tasks.}}
On \textit{Handover Mic}, multiple policies reach MC@75 above 88, but differ sharply at MC@100: DP drops to 44, while $\pi_0$ reaches 98 and RDT reaches 100.
This pattern indicates that many failures concentrate in the final 75-to-100 stage, suggesting ``last-mile'' precision or stabilization bottlenecks.

\begin{figure}[t!]
    \centering
    \includegraphics[width=\linewidth]{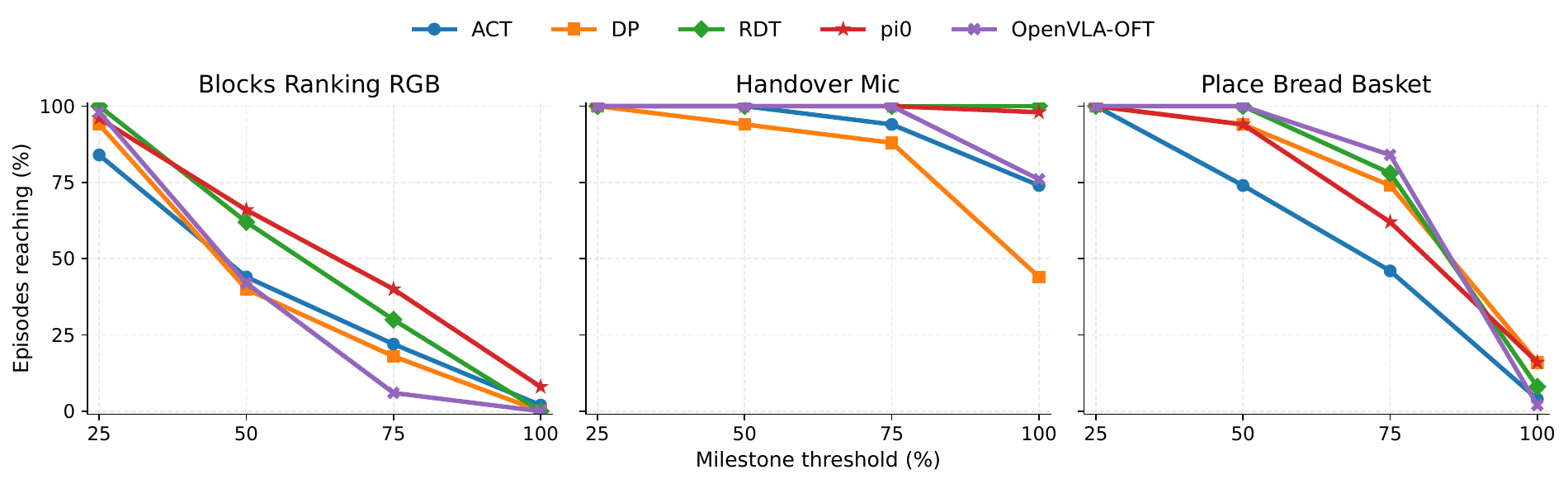}
    \caption{\textbf{Reachability and failure-stage decomposition by milestone coverage.}
For each task, we plot the fraction of rollout episodes that reach milestone thresholds (25/50/75/100\%), revealing where execution progress tends to stall along the horizon.}
    \label{fig:rq2_reachability}
    \vspace{-1.5em}
\end{figure}

\begin{figure}[t!]
    \centering
    \begin{subfigure}[t]{0.49\linewidth}
        \centering
        \includegraphics[width=\linewidth]{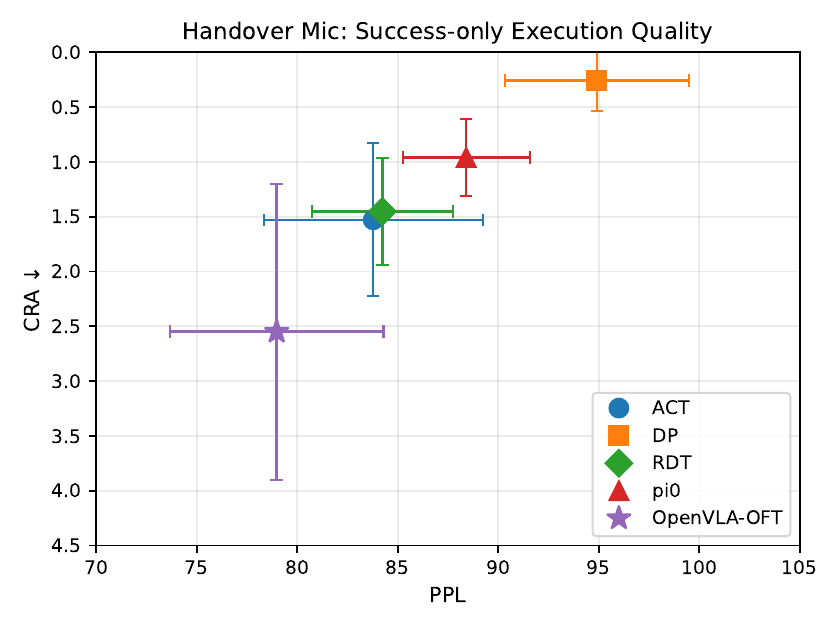}
        \label{fig:rq3_ppl_cra}
    \end{subfigure}
    \begin{subfigure}[t]{0.49\linewidth}
        \centering
        \includegraphics[width=\linewidth]{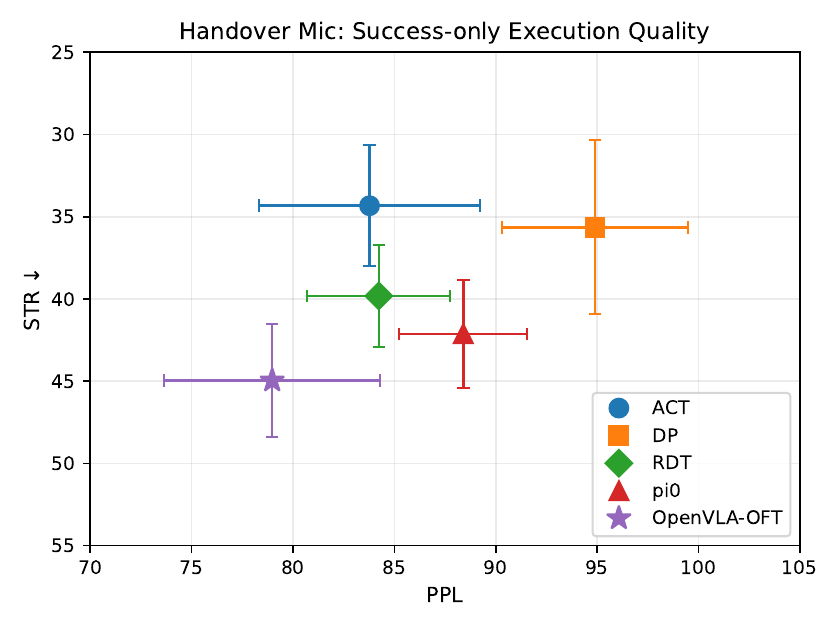}
        \label{fig:rq3_ppl_str}
    \end{subfigure}
    \vspace{-1em}
    \caption{\textbf{Success-only execution quality on Handover Mic.} We report success-conditioned OPD metrics and compare path efficiency, measured by PPL, against regression, measured by CRA, and stagnation, measured by STR, across policy families. Error bars denote standard deviation across successful episodes.}

    \label{fig:rq3_success_quality}
    \vspace{-1.5em}
\end{figure}

\subsection{Execution Quality Conditioned on Success (RQ3)}
\label{subsec:rq_success_quality}

To avoid conflating execution quality with outcome frequency, we evaluate process and diagnosis metrics on successful episodes only and visualize trade-offs in Fig.~\ref{fig:rq3_success_quality}.
We focus on \textit{Handover Mic} and use Tab.~\ref{tab:robotwin_audit_main} for outcome-level context.
We draw three observations.

\textbf{\textit{1) Success does not imply uniformly high-quality execution.}}
Among successful episodes on \textit{Handover Mic}, policies remain separable by OPD metrics.
DP achieves mean PPL of 94.9, exceeding $\pi_0$ at 88.4 and RDT at 84.5.
DP also shows lower mean CRA of 0.26, compared with $\pi_0$ at 0.96 and OpenVLA-OFT at 2.55.
These gaps indicate that even when policies succeed, their trajectories can differ substantially in efficiency and regret.

\textbf{\textit{2) The best success-conditioned quality can reflect a narrow success regime rather than broad reliability.}}
Although DP produces the most efficient and low-regret successful trajectories on \textit{Handover Mic}, its outcome-level success remains lower: MC@100 is 44 for DP, versus 98 for $\pi_0$ and 100 for RDT in Tab.~\ref{tab:robotwin_audit_main}.
This pattern suggests that DP operates in a relatively narrow success regime: when execution stays on track, it tends to succeed with high efficiency and low regret, but once it deviates from that regime, recovery appears weaker and successful completion becomes much less likely.
Read jointly with the outcome-level results, the success-conditioned plot therefore reveals a sharp trade-off between conditional execution quality and overall reliability.


\textbf{\textit{3) Process and diagnosis metrics provide interpretable axes for ``smooth'' versus ``unstable'' successes.}}
PPL differentiates efficient, direct progression from detours and redundancy, CRA captures the degree of backtracking or repeated correction, and STR reflects hesitation or stagnation.
In Fig.~\ref{fig:rq3_success_quality}, OpenVLA-OFT exhibits substantially higher CRA than DP and $\pi_0$ under comparable PPL ranges, consistent with successes that incur larger correction cost.
Overall, Fig.~\ref{fig:rq3_success_quality} supports RQ3 by showing that OPD metrics separate high-quality successes from inefficient or unstable successes beyond binary outcomes.

\begin{figure}[!t]
    \centering
    \includegraphics[width=\linewidth]{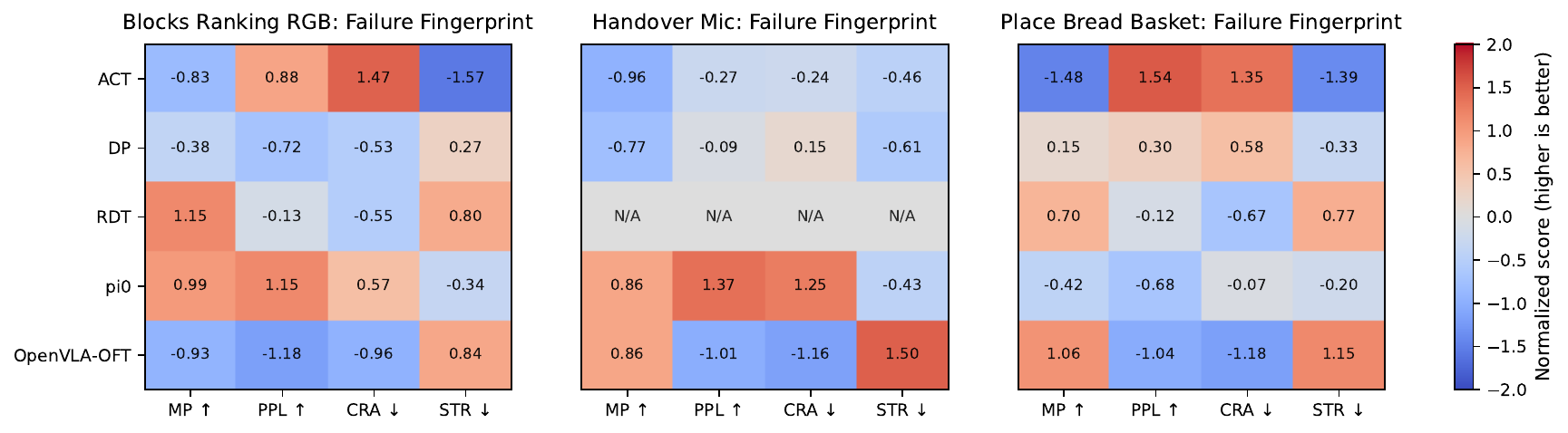}
    \caption{\textbf{Failure-only OPD fingerprints.}
We normalize OPD metrics over failed episodes within each task to highlight policy-specific failure patterns.}
    \label{fig:rq4_failure_fingerprint}
\end{figure}

\subsection{Failure Fingerprints for Mechanistic Diagnosis (RQ4)}
\label{subsec:rq_failure_fingerprint}

To validate the effectiveness of OPD in failure analysis, we aggregate MP, PPL, CRA, and STR over failed episodes and normalize each metric within a task using z-scores across policy families.
For CRA and STR, we flip the sign before normalization so that higher values consistently indicate more desirable behavior.
The resulting failure fingerprints are shown in Fig.~\ref{fig:rq4_failure_fingerprint}.
We draw three observations.

\textbf{\textit{1) Policy families exhibit distinct and task-consistent failure signatures in OPD space.}}
On \textit{Place Bread Basket}, OpenVLA-OFT fails late with mean MP of 92.6 but incurs large mean CRA of 26.3.
ACT failures reach lower mean MP of 73.1 and show higher stagnation, with mean STR of 65.4.
This contrast separates late-stage instability with backtracking from early-stage stagnation, even though both regimes map to the same terminal ``failure'' outcome.

\textbf{\textit{2) Fingerprints separate stagnation-dominant and regret-dominant failure mechanisms.}}
On \textit{Handover Mic}, DP failures show high stagnation with mean STR of 57.2.
OpenVLA-OFT failures reach MP of 94.2 but exhibit low efficiency and higher regret, with mean PPL of 66.2 and mean CRA of 5.66.
The contrast distinguishes stalled interaction from unstable correction loops.

\textbf{\textit{3) Fingerprints suggest interpretable hypotheses for improvement.}}
Stagnation-dominant failures may reflect insufficient end-effector stability or weak contact maintenance, often appearing as hesitation. Regret-dominant failures may reflect limited error absorption or corrective control, leading to repeated backtracking. Efficiency-dominant weaknesses may indicate excessive redundant motion and low progress density. 
These patterns remain visible across tasks in Fig.~\ref{fig:rq4_failure_fingerprint}, highlighting the diagnostic value of PRM-as-a-Judge beyond binary success, although the resulting interpretations remain judge-dependent.

\section{Conclusion}
\label{sec:conclusion}

We introduced \textbf{PRM-as-a-Judge} and the \textbf{OPD} metric system for dense, fine-grained robotic evaluation beyond binary success.
We formalized two axioms for dense evaluation: macro-consistency and micro-resolution.
Under the proposed formulation, potential-based PRM judges satisfy \textit{macro-consistency} by construction, and our experiments on RoboPulse show that trajectory-trained PRMs can achieve strong \textit{micro-resolution} for fine-grained relative progress discrimination.
Applying OPD to long-horizon policy auditing reveals stage-wise reachability, success-conditioned execution quality, and failure fingerprints beyond binary success.
Beyond serving as an evaluation tool, PRM-as-a-Judge provides dense progress signals that may support future training-time diagnosis and improvement.
More broadly, PRMs remain an open and promising design space for dense robotic evaluation. We hope the proposed axioms, benchmark, and OPD analysis protocol provide a concrete target for developing stronger and more physically grounded judges in future work.


\section*{Impact Statement}
This paper aims to advance machine learning research by improving how robotic manipulation policies are evaluated.
Our contributions focus on benchmarking and diagnostic metrics rather than enabling new deployment capabilities.
We do not foresee societal impacts that require specific discussion beyond those commonly associated with robotics and learned control systems.

\section*{Acknowledgements}
This work was supported in part by the National Natural Science Foundation of China under Grant 72434005, Grant 72225011 and Grant 72293575.

\bibliography{example_paper}
\bibliographystyle{icml2026}

\newpage
\appendix
\onecolumn

\section*{Appendix Overview}
This appendix includes the full formalization, theoretical proofs, and detailed descriptions of experimental protocols that were omitted from the main paper for brevity. The sections are organized as follows:

\begin{itemize}
\setlength{\itemsep}{0.5em}
\item \textbf{Appendix~\ref{app:metric_derivation}:} Provides rigorous definitions of the five metrics used within the OPD system (MC, MP, PPL, CRA, and STR). This section includes mathematical proofs about their valid ranges, invariances, and robustness to perturbations in the judging process.
\item \textbf{Appendix~\ref{app:discarded_metrics}:} Describes alternative metrics considered during the design phase, and explains why they were not adopted. It highlights recurring failure modes, such as numerical ill-conditioning, dependence on temporal discretization, and the lack of interpretable macroscopic meaning.
\item \textbf{Appendix~\ref{app:inconsistency_proof}:} Presents a formal proof showing that evaluators based on similarity or relative comparison generally fail to satisfy the axiom of macro-consistency. This section addresses how these methods induce scale drift during temporal resampling.
\item \textbf{Appendix~\ref{app:macro_consistency_proof}:} Provides the proof that the PRM-as-a-Judge framework is structurally macro-consistent under the potential-difference formulation. It demonstrates how PRM defines global progress potential and assigns consistent progress increments across task states.
\item \textbf{Appendices~\ref{app:robopulse_data} --~\ref{prompt}:} These appendices provide detailed descriptions of the experimental setup to ensure reproducibility, including:
\begin{itemize}
    \item Dataset composition and curation protocols for the RoboPulse benchmark;
    \item Methodologies employed for evaluating robustness and the corresponding experimental results;
    \item Additional quantitative results obtained from the RoboTwin environments;
    \item Guidelines for visualizing failure fingerprints across different models;
    \item A complete set of system prompts used for evaluating both baseline models and the proposed judge.
\end{itemize}
\end{itemize}

\section{OPD Metrics: Formal Definitions, Properties, and Robustness Proofs}
\label{app:metric_derivation}

Sec.~\ref{subsec:opd} defines OPD at a high level. This appendix formalizes the proposed metrics as functionals of a progress-potential trajectory $(\Phi_t)_{t=0}^T$, and records elementary properties used in the main text (range, monotonicity, and stability with respect to bounded judge perturbations).

\subsection{Preliminaries and Notation}
Let $\mathcal{X}$ denote the space of observed states available to the judge, and let a rollout trajectory be
$
\tau=(x_0,x_1,\dots,x_T),\ x_t\in\mathcal{X}.
$
Let $\Phi:\mathcal{X}\to[0,1]$ be the task-conditioned progress potential induced by a judge (Sec.~\ref{subsec:prm}).
For shorthand, define $\Phi_t\coloneqq \Phi(x_t)$ and one-step increments $d_t\coloneqq \Phi_t-\Phi_{t-1}$ for $t\ge1$.
We further define:
\begin{align}
\textbf{Net progress:}\quad & \mathrm{NP}(\tau)\coloneqq \Phi_T-\Phi_0,\\
\textbf{Total variation:}\quad & \mathrm{TV}(\tau)\coloneqq \sum_{t=1}^{T}|d_t|,\\
\textbf{Running maximum:}\quad & M_t\coloneqq \max_{0\le k\le t}\Phi_k.
\end{align}

\begin{lemma}[Variation dominates endpoint displacement]
\label{lem:tv_ge_np}
For any real sequence $(\Phi_t)_{t=0}^T$,
\[
|\Phi_T-\Phi_0|\le \sum_{t=1}^T|\Phi_t-\Phi_{t-1}|=\mathrm{TV}(\tau).
\]
\end{lemma}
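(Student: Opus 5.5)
The argument is a telescoping identity followed by the triangle inequality. Using the one-step increments $d_t=\Phi_t-\Phi_{t-1}$ from the notation above, I would first write the endpoint displacement as a telescoping sum,
\[
\Phi_T-\Phi_0=\sum_{t=1}^{T}(\Phi_t-\Phi_{t-1})=\sum_{t=1}^{T}d_t .
\]
This holds for any real sequence. It needs neither the range $[0,1]$ of $\Phi$ nor any property of the judge, which is why the lemma is stated for arbitrary real sequences. The identity can be checked by induction on $T$: the case $T=0$ gives an empty sum equal to $0=\Phi_0-\Phi_0$, and the step from $T$ to $T+1$ adds $d_{T+1}=\Phi_{T+1}-\Phi_T$.

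Next I would take absolute values and apply the finite triangle inequality $\bigl|\sum_{t=1}^T d_t\bigr|\le\sum_{t=1}^T|d_t|$, itself a routine induction from $|a+b|\le|a|+|b|$. This gives $|\Phi_T-\Phi_0|\le\sum_{t=1}^T|d_t|=\mathrm{TV}(\tau)$, which is exactly the claim.

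I would also note when equality holds, since this is what later range arguments for PPL will use. Equality holds precisely when all nonzero increments $d_t$ share the same sign, that is, when $(\Phi_t)$ is monotone. In particular $[\mathrm{NP}(\tau)]_+\le\mathrm{TV}(\tau)<\mathrm{TV}(\tau)+\delta$, which keeps the PPL ratio in $[0,1)$.

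There is no real obstacle here. The only point needing care is the degenerate case $T=0$, where both sides vanish and the convention for the empty sum has to be stated explicitly.
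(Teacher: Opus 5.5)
Your proof is correct and matches the paper's argument: write $\Phi_T-\Phi_0$ as the telescoping sum of increments, then apply the finite triangle inequality. Your extra remark on equality (all nonzero increments share one sign, i.e.\ the sequence is monotone) is also correct, and the paper uses exactly that fact later in its PPL range theorem.
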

\begin{proof}
By the triangle inequality,
$
\left|\sum_{t=1}^T(\Phi_t-\Phi_{t-1})\right|
\le \sum_{t=1}^T|\Phi_t-\Phi_{t-1}|.
$
\end{proof}

\subsection{Outcome-Level Metrics}
\label{app:opd_outcome_metrics}

\subsubsection{Milestone Coverage (MC)}
\label{app:mc_def}

\paragraph{Definition.}
Fix a milestone set $\mathcal{Q}=\{0,\frac{1}{K},\dots,1\}$ (we use $K=4$ in the paper). Define
\begin{equation}
\mathrm{MC}(\tau)\coloneqq \max\left\{q\in\mathcal{Q}\ \middle|\ \exists t\in\{0,\dots,T\}\ \text{s.t.}\ \Phi_t\ge q\right\}.
\end{equation}

\begin{lemma}[Range and monotonicity]
\label{lem:mc_basic}
$\mathrm{MC}(\tau)\in\mathcal{Q}$.
Moreover, if $\Phi_t\le \Psi_t$ for all $t$, then $\mathrm{MC}_\Phi(\tau)\le \mathrm{MC}_\Psi(\tau)$.
\end{lemma}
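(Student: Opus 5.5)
The plan is to treat $\mathrm{MC}$ as the maximum of an explicit subset of $\mathcal{Q}$ and to get both claims from elementary facts about that set. For a potential sequence $\Phi$, define
\[
A_\Phi\coloneqq\left\{q\in\mathcal{Q}\ \middle|\ \exists t\in\{0,\dots,T\}\ \text{s.t.}\ \Phi_t\ge q\right\},
\]
so that $\mathrm{MC}_\Phi(\tau)=\max A_\Phi$.

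The range claim rests on showing that the maximum is well defined. Since $\Phi:\mathcal{X}\to[0,1]$, we have $\Phi_0\ge 0$, and therefore $0\in A_\Phi$. This makes $A_\Phi$ nonempty. It is also finite, because it is a subset of the finite set $\mathcal{Q}$. A nonempty finite set of reals attains its maximum, and that maximum is an element of the set. Hence $\mathrm{MC}(\tau)\in A_\Phi\subseteq\mathcal{Q}$. The only point needing care is the nonemptiness. It uses the nonnegativity of $\Phi$, or more weakly just $\Phi_0\ge 0$. I would remark that this is exactly why $0$ is included in $\mathcal{Q}$.

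For monotonicity, I would prove the set inclusion $A_\Phi\subseteq A_\Psi$. Take any $q\in A_\Phi$. Then there is a witness $t$ with $\Phi_t\ge q$. By the hypothesis $\Psi_t\ge\Phi_t$, we get $\Psi_t\ge q$, so the same $t$ witnesses $q\in A_\Psi$. Both sets are nonempty and finite by the range argument applied to $\Psi$, which also takes values in $[0,1]$. Taking maxima over the nested sets then gives $\max A_\Phi\le\max A_\Psi$, i.e.\ $\mathrm{MC}_\Phi(\tau)\le\mathrm{MC}_\Psi(\tau)$.

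I don't expect a real obstacle. The one subtlety is making sure both maxima are taken over nonempty sets. That reduces to the codomain assumption $\Phi,\Psi\in[0,1]$, or at least $\Phi_0,\Psi_0\ge 0$, and I would state this assumption explicitly. Optionally I would add the equivalent closed form $\mathrm{MC}(\tau)=\lfloor K\cdot\max_t\Phi_t\rfloor/K$. With it, monotonicity follows at once from the monotonicity of $\max$ and $\lfloor\cdot\rfloor$, and it also links $\mathrm{MC}$ to $\mathrm{MP}$.
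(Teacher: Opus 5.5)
Your proof is correct and follows the same route as the paper's: the range claim holds because the maximum is taken over a subset of $\mathcal{Q}$, and monotonicity holds because every milestone reached by $\Phi$ is also reached by $\Psi$ (your inclusion $A_\Phi\subseteq A_\Psi$). Your explicit check that the set is nonempty via $\Phi_0\ge 0$ fills in a detail the paper leaves implicit, and the closed form $\lfloor K\cdot\mathrm{MP}(\tau)\rfloor/K$ is a correct optional addition.
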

\begin{proof}
The value lies in $\mathcal{Q}$ since the maximum is taken over $\mathcal{Q}$.
If $\Phi_t\le\Psi_t$ pointwise, then every milestone reached by $(\Phi_t)$ is also reached by $(\Psi_t)$, hence the maximal reached milestone cannot decrease.
\end{proof}

\begin{lemma}[Stability under bounded judge error]
\label{lem:mc_stability}
Let $\widehat{\Phi}_t$ satisfy $|\widehat{\Phi}_t-\Phi_t|\le \sigma$ for all $t$.
If for every $t$ and every internal boundary $b\in\{1/K,\dots,(K-1)/K\}$ we have $|\Phi_t-b|>\sigma$,
then $\mathrm{MC}(\widehat{\tau})=\mathrm{MC}(\tau)$.
\end{lemma}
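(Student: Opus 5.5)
The plan is to reduce $\mathrm{MC}$ to a statement about the running maximum $M_T=\max_{0\le t\le T}\Phi_t$ and then compare it with its perturbed counterpart $\widehat{M}_T=\max_t\widehat{\Phi}_t$ one boundary at a time. First I would rewrite the definition. Since ``$\exists t$ with $\Phi_t\ge q$'' is equivalent to $M_T\ge q$, we have
\[
\mathrm{MC}(\tau)=\max\{q\in\mathcal{Q}: M_T\ge q\}.
\]
So $\mathrm{MC}$ is determined entirely by the collection of indicators $\mathbb{I}(M_T\ge q)$, $q\in\mathcal{Q}$, and these indicators are monotone in $q$. It therefore suffices to show $\mathbb{I}(M_T\ge q)=\mathbb{I}(\widehat{M}_T\ge q)$ for every $q\in\mathcal{Q}$.

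Second, I would record that the maximum is $1$-Lipschitz in the sup norm: $|\widehat{M}_T-M_T|\le\max_t|\widehat{\Phi}_t-\Phi_t|\le\sigma$. This follows from $\widehat{\Phi}_t\le\Phi_t+\sigma\le M_T+\sigma$ for all $t$, together with the symmetric bound. Next, I would note that $M_T=\Phi_{t^*}$ for some maximizing index $t^*$. The margin hypothesis then gives $|M_T-b|>\sigma$ for every internal boundary $b\in\{1/K,\dots,(K-1)/K\}$. Now split into two cases:
\begin{itemize}
\item If $M_T>b$, then $M_T>b+\sigma$, so $\widehat{M}_T\ge M_T-\sigma>b$.
\item If $M_T<b$, then $M_T<b-\sigma$, so $\widehat{M}_T\le M_T+\sigma<b$.
\end{itemize}
Hence the indicators agree at every internal boundary.

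The remaining step concerns the endpoint milestones $q=0$ and $q=1$, and I expect this to be the main obstacle, since the hypothesis says nothing about them.
\begin{itemize}
\item For $q=0$: both $\Phi$ and $\widehat{\Phi}$ are $[0,1]$-valued as judge outputs, so $M_T\ge0$ and $\widehat{M}_T\ge0$, and the indicator is $1$ in both cases.
\item For $q=1$: the problem is that $M_T\ge 1$ means $M_T=1$ exactly. A perturbation with $\widehat{M}_T=1-\sigma$ would then drop $\mathrm{MC}$ from $1$ to $(K-1)/K$. So as literally stated, the internal-boundary condition alone does not control the top milestone.
\end{itemize}
I would handle this by making explicit an assumption that the statement's phrasing suggests is intended. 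The cleanest choice is to also impose the margin at $b=1$, i.e. $|\Phi_t-1|>\sigma$ for all $t$. Because $\Phi_t\le1$, this forces $\Phi_t<1-\sigma$, hence $\widehat{\Phi}_t<1$ for all $t$, so neither trajectory reaches milestone $1$. An alternative is a convention that the terminal milestone is read off a success flag shared by both judges. With either fix in place, all indicators coincide, and the monotone characterization from the first step yields $\mathrm{MC}(\widehat{\tau})=\mathrm{MC}(\tau)$.
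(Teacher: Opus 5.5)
Your argument is correct, and your handling of the top milestone is warranted: the statement as written is false at $q=1$, and the paper's proof has the same gap you found. The paper's one-line proof says that under the margin assumption, $\Phi_t$ and $\widehat{\Phi}_t$ lie in the same quantization bin for every $t$, so the set of achieved milestones is unchanged. But the bins induced by $\mathcal{Q}$ are $[k/K,(k+1)/K)$ together with the singleton $\{1\}$, and the boundary at $1$ is not one of the internal boundaries the hypothesis covers. For example, take $K=4$, $\sigma=0.1$, $T=0$, $\Phi_0=1$ and $\widehat{\Phi}_0=0.95$. Every hypothesis holds, yet $\mathrm{MC}(\tau)=1\neq 0.75=\mathrm{MC}(\widehat{\tau})$; swapping the two values breaks it in the other direction.

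Of your two repairs, the second (both judges agree on whether $\Phi_t=1$, i.e.\ a shared success flag) is the one worth adopting. Imposing the margin at $b=1$ for all $t$ forces $\Phi_t<1-\sigma$ throughout. That is valid, but it only covers rollouts that never come within $\sigma$ of completion, so it excludes exactly the successful episodes. If you add no extra assumption, your indicator argument still proves equality of the truncated coverage $\min\{\mathrm{MC},(K-1)/K\}$.

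Your route also differs from the paper's in a useful way. The paper preserves bin membership pointwise in $t$, which needs the margin at every time step. In return it gives a stronger invariance: the whole quantized trace, and hence the first time each internal milestone is reached, is unchanged. You instead pass to $M_T=\max_t\Phi_t$ and use that the maximum is $1$-Lipschitz in the sup norm. That way you only need the margin at the peak value $\Phi_{t^*}$, a weaker hypothesis that still suffices because $\mathrm{MC}$ depends on $\tau$ only through $M_T$. Checking the indicators $\mathbb{I}(M_T\ge q)$ one $q$ at a time, including the endpoints, is also what exposes the $q=1$ problem. It likewise exposes the implicit requirement $\widehat{\Phi}_t\in[0,1]$ needed at $q=0$; the paper's bin language hides both.
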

\begin{proof}
Under the margin assumption, $\Phi_t$ and $\widehat{\Phi}_t$ fall into the same quantization bin induced by $\mathcal{Q}$ for every $t$.
Therefore the set of milestones achieved is unchanged, and so is its maximum.
\end{proof}

\subsubsection{Max Progress (MP)}
\label{app:mp_def}

\paragraph{Definition.}
\begin{equation}
\mathrm{MP}(\tau)\coloneqq \max_{0\le t\le T}\Phi_t.
\end{equation}

\begin{lemma}[Range and refinement monotonicity]
\label{lem:mp_basic}
$\mathrm{MP}(\tau)\in[0,1]$.
If $\tau'$ is obtained from $\tau$ by inserting intermediate states, then $\mathrm{MP}(\tau')\ge \mathrm{MP}(\tau)$, with equality if no inserted state exceeds the original maximum.
\end{lemma}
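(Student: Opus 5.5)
The range claim is immediate from the codomain of the judge: since $\Phi:\mathcal{X}\to[0,1]$, every term $\Phi_t$ lies in $[0,1]$, and a maximum over a finite, nonempty index set $\{0,\dots,T\}$ is attained by one of these terms. Hence $\mathrm{MP}(\tau)\in[0,1]$.

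For refinement monotonicity, I would formalize "inserting intermediate states" as follows. The refined trajectory $\tau'=(x'_0,\dots,x'_{T'})$ has $T'\ge T$, and there is a strictly increasing map $\iota:\{0,\dots,T\}\to\{0,\dots,T'\}$ with $x'_{\iota(t)}=x_t$ for all $t$. Because $\Phi$ is a fixed function of the observed state, the evaluation at an original state does not change after insertion. The multiset $\{\Phi(x'_s)\}_{s}$ therefore contains $\{\Phi(x_t)\}_t$, and the inclusion becomes
\[
\mathrm{MP}(\tau')=\max_{0\le s\le T'}\Phi(x'_s)\ \ge\ \max_{0\le t\le T}\Phi(x'_{\iota(t)})=\mathrm{MP}(\tau).
\]

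For the equality clause, let $N=\{0,\dots,T'\}\setminus\iota(\{0,\dots,T\})$ be the inserted indices. Then
\[
\mathrm{MP}(\tau')=\max\Bigl(\mathrm{MP}(\tau),\ \max_{s\in N}\Phi(x'_s)\Bigr),
\]
with the inner maximum omitted if $N=\emptyset$. If no inserted state exceeds the original maximum, that is $\Phi(x'_s)\le \mathrm{MP}(\tau)$ for all $s\in N$, the outer maximum equals $\mathrm{MP}(\tau)$. Combined with the inequality above, this gives equality.

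The only real subtlety is the modeling assumption that the judge scores states pointwise. If the PRM were context-dependent, for example conditioning on the history, then inserting frames could alter $\Phi$ at original states and the inclusion argument would fail. I would state explicitly that $\Phi$ is a function on $\mathcal{X}$, as set up in the Preliminaries. With that assumption in place, everything else is elementary.
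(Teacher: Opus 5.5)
Your proof is correct and follows the same route as the paper: the range comes from $\Phi_t\in[0,1]$, and refinement only adds indices to the set over which the maximum is taken, so the maximum cannot decrease. You go slightly beyond the paper's one-line argument in two places: you prove the equality clause explicitly, and you note that $\Phi$ must score states pointwise. The paper leaves both points implicit, and the second matters because pointwise scoring could fail if $x_t$ is a history-based information state such as $x_t=(o_{0:t},a_{0:t-1})$, as the appendix allows.
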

\begin{proof}
Range follows from $\Phi_t\in[0,1]$.
Refinement inserts additional indices into the maximization set, so the maximum cannot decrease.
\end{proof}

\subsection{Process-Level Metric}
\label{app:opd_process_metrics}

\subsubsection{Path-weighted Progress Length (PPL)}
\label{app:ppl_derivation_strict}

\paragraph{Definition.}
We define
\begin{equation}
\mathrm{PPL}(\tau)
\coloneqq
\Phi_T\cdot
\frac{[\Phi_T-\Phi_0]_+}{\mathrm{TV}(\tau)+\delta},
\qquad [x]_+\coloneqq \max(x,0),
\label{eq:ppl_strict}
\end{equation}
where $\delta>0$ is a small numerical constant (e.g., $10^{-8}$) used only to avoid division by zero when $\mathrm{TV}(\tau)=0$.

\begin{theorem}[Range and tight characterization]
\label{thm:ppl_range}
For any trajectory $\tau$, $\mathrm{PPL}(\tau)\in[0,1]$.
Moreover, if $\Phi_T\ge \Phi_0$ and $\delta=0$, then
$\frac{\Phi_T-\Phi_0}{\mathrm{TV}(\tau)}\in[0,1]$ with equality $=1$ if and only if $(\Phi_t)_{t=0}^T$ is monotone non-decreasing.
\end{theorem}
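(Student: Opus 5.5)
The plan is to treat the two claims separately, each resting on Lemma~\ref{lem:tv_ge_np} together with $\Phi_t\in[0,1]$.

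\textbf{Range.} Nonnegativity is immediate: $\Phi_T\ge 0$, $[\Phi_T-\Phi_0]_+\ge 0$, and the denominator $\mathrm{TV}(\tau)+\delta$ is strictly positive. For the upper bound, split on the sign of the net progress.
\begin{itemize}
\item If $\Phi_T\le\Phi_0$, the rectified numerator vanishes, so $\mathrm{PPL}(\tau)=0$.
\item If $\Phi_T>\Phi_0$, Lemma~\ref{lem:tv_ge_np} gives $0<\Phi_T-\Phi_0\le \mathrm{TV}(\tau)<\mathrm{TV}(\tau)+\delta$. Hence the ratio lies in $(0,1)$, and multiplying by $\Phi_T\le 1$ keeps the product in $[0,1]$.
\end{itemize}

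\textbf{Tight characterization.} Set $\delta=0$ and assume $\Phi_T\ge\Phi_0$. Lemma~\ref{lem:tv_ge_np} again yields $0\le \Phi_T-\Phi_0\le \mathrm{TV}(\tau)$, so the ratio lies in $[0,1]$. The degenerate case $\mathrm{TV}(\tau)=0$ gives $0/0$ and must be handled by convention. There every $d_t=0$, so the sequence is constant and hence monotone. I would declare the ratio equal to $1$ in this case, consistent with the limiting behaviour, and state this convention explicitly. This convention is the only real subtlety in the theorem.

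For $\mathrm{TV}(\tau)>0$, the ratio equals $1$ iff
\[
\Bigl|\sum_{t=1}^T d_t\Bigr|=\sum_{t=1}^T |d_t|
\]
with the left-hand sum nonnegative, i.e.\ iff $\sum_t d_t=\sum_t |d_t|$. Rearranging, this is $\sum_t (|d_t|-d_t)=0$. Each summand is nonnegative, so the condition holds iff $|d_t|=d_t$ for every $t$, that is, iff $d_t\ge 0$ for all $t$. This is exactly monotone non-decreasing. The converse direction is the same computation read backwards.
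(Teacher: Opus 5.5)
Your proposal is correct and follows the paper's route: Lemma~\ref{lem:tv_ge_np} together with $\Phi_T\le 1$ gives the range, and the equality case of the triangle inequality (your $\sum_t(|d_t|-d_t)=0$ argument, which the paper cites as a standard fact) gives the characterization; you are also more careful than the paper in using $\delta>0$ directly for the range and in flagging the $\mathrm{TV}(\tau)=0$ case, which the paper's proof passes over. One small correction: setting the ratio to $1$ when $\mathrm{TV}(\tau)=0$ is justified only by wanting the ``iff'' to hold (alternatively, exclude that case), not by any limiting behaviour, since $[\Phi_T-\Phi_0]_+/(\mathrm{TV}(\tau)+\delta)=0$ for every $\delta>0$ on a constant sequence and so the $\delta\to 0^+$ limit there is $0$.
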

\begin{proof}
Nonnegativity is immediate. For the upper bound (with $\delta=0$), Lemma~\ref{lem:tv_ge_np} gives
$0\le \Phi_T-\Phi_0\le \mathrm{TV}(\tau)$ when $\Phi_T\ge\Phi_0$, hence the ratio is at most $1$; multiplying by $\Phi_T\le1$ preserves the bound.
The characterization follows from the standard fact that total variation equals endpoint displacement iff all increments have the same sign, here $d_t\ge0$ for all $t$.
\end{proof}

\begin{remark}[Why the completion gate is included]
The efficiency ratio $\frac{[\Phi_T-\Phi_0]_+}{\mathrm{TV}(\tau)+\delta}$ alone can be close to $1$ even when the final progress is small.
For example, take $\Phi_0=0$, $\Phi_1=\epsilon$, and $\Phi_t=\epsilon$ for $t\ge1$, then $\mathrm{TV}(\tau)=\epsilon$ and the ratio equals $\epsilon/(\epsilon+\delta)\approx 1$ whenever $\epsilon\gg \delta$.
The multiplicative factor $\Phi_T$ prevents such early-stopping trajectories from receiving a large score, since $\mathrm{PPL}(\tau)\le \Phi_T=\epsilon$.
\end{remark}

\subsection{Diagnosis-Level Metrics}
\label{app:opd_diag_metrics}

\subsubsection{Cumulative Regret Area (CRA)}
\label{app:cra_derivation_strict}

\paragraph{Definition.}
Define
\begin{equation}
R_t\coloneqq M_t-\Phi_t,
\qquad
\mathrm{CRA}(\tau)\coloneqq \frac{1}{T+1}\sum_{t=0}^{T}R_t.
\label{eq:cra_def_strict}
\end{equation}

\begin{theorem}[Range and zero-regret characterization]
\label{thm:cra_range}
$\mathrm{CRA}(\tau)\in[0,1]$.
Moreover, $\mathrm{CRA}(\tau)=0$ if and only if $(\Phi_t)_{t=0}^T$ is monotone non-decreasing.
\end{theorem}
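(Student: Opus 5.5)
The plan is to work pointwise with the regret terms $R_t = M_t - \Phi_t$ and then average. For the range, two bounds are needed. First, since $M_t = \max_{0\le k\le t}\Phi_k$ includes the index $k=t$, we have $M_t \ge \Phi_t$, so $R_t \ge 0$. Second, since $\Phi_k\in[0,1]$ for all $k$, we get $M_t\le 1$ and $\Phi_t\ge 0$, so $R_t \le 1$. Averaging over the $T+1$ indices then gives $\mathrm{CRA}(\tau)\in[0,1]$ directly.

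The zero-regret characterization splits into two directions.

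\emph{Monotone implies zero regret.} If $(\Phi_t)$ is monotone non-decreasing, then for each $t$ the maximum over $\{0,\dots,t\}$ is attained at $k=t$. Hence $M_t=\Phi_t$, every $R_t=0$, and $\mathrm{CRA}=0$.

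\emph{Zero regret implies monotone.} Because every $R_t\ge 0$ by the range argument, a zero average forces $R_t=0$ for each $t$, that is, $\Phi_t = M_t$ for all $t$. For $t\ge 1$ this gives
\[
\Phi_t = M_t \ge M_{t-1} = \Phi_{t-1},
\]
where $M_t\ge M_{t-1}$ holds because the running maximum is taken over a nested family of index sets. This is exactly monotone non-decreasingness.

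The only step that needs any care is the converse direction, where we pass from a vanishing average of nonnegative terms to the vanishing of each term. This is where the nonnegativity from the range argument is essential, so the range claim should be proved first and then reused. Nothing else from earlier in the paper is needed, although the argument mirrors the monotonicity characterization in Theorem~\ref{thm:ppl_range}.
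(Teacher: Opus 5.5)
Your proof is correct and follows the paper's argument essentially step for step: bound each $R_t$ in $[0,1]$ and average over the $T+1$ indices; use $M_t=\Phi_t$ under monotonicity; and for the converse, note that each nonnegative $R_t$ must vanish, so $\Phi_t=M_t\ge\Phi_{t-1}$. The only cosmetic difference is that you route the final inequality through $M_t\ge M_{t-1}=\Phi_{t-1}$, whereas the paper uses $M_t\ge\Phi_{t-1}$ directly because the index $t-1$ lies in the maximization set.
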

\begin{proof}
Since $M_t,\Phi_t\in[0,1]$ and $M_t\ge\Phi_t$, we have $0\le R_t\le1$, hence $\mathrm{CRA}(\tau)\in[0,1]$.
If $\Phi_t$ is monotone non-decreasing then $M_t=\Phi_t$ and all $R_t=0$.
Conversely, if $\mathrm{CRA}(\tau)=0$ then $R_t=0$ for all $t$, implying $\Phi_t=M_t\ge \Phi_{t-1}$ and thus monotonicity.
\end{proof}

\begin{remark}[CRA captures persistence beyond local regressions]
A related alternative is the local regression mass
$
\mathrm{RR}(\tau)\coloneqq \sum_{t=1}^T[\Phi_{t-1}-\Phi_t]_+,
$
which aggregates instantaneous decreases. CRA depends on the running maximum and thus reflects how long the trajectory stays below its best-so-far value.
\end{remark}

\begin{lemma}[CRA separates persistent drops from quick recovery]
\label{lem:cra_persistence}
There exist two trajectories with the same $\mathrm{RR}$ but different $\mathrm{CRA}$.
\end{lemma}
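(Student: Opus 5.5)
The statement is existential, so I will prove it by writing down two explicit trajectories with the same local regression mass $\mathrm{RR}$ and computing their $\mathrm{CRA}$ directly. I will make both share one drop of fixed size that differs only in how long it persists before recovery. Fix a horizon $T\ge 3$ and a drop magnitude $a\in(0,1]$.

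\emph{Quick recovery.} Take $\tau_1$ with $\Phi_0=a$, $\Phi_1=0$, and $\Phi_t=a$ for all $t\ge 2$.

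\emph{Persistent drop.} Take $\tau_2$ with $\Phi_0=a$, $\Phi_t=0$ for $1\le t\le T-1$, and $\Phi_T=a$.

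In both trajectories the only negative increment is the single step $a\to 0$. Hence
\[
\mathrm{RR}(\tau_1)=\mathrm{RR}(\tau_2)=a .
\]
The later rise back to $a$ contributes nothing to $\mathrm{RR}$, since only decreases are counted.

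Next I compute the running maximum $M_t$ and the regret $R_t=M_t-\Phi_t$. In both trajectories $M_t=a$ for every $t$, because $\Phi_0=a$ is never exceeded.
\begin{itemize}
\item For $\tau_1$, $R_t=a$ only at $t=1$ and is $0$ elsewhere, so $\mathrm{CRA}(\tau_1)=a/(T+1)$.
\item For $\tau_2$, $R_t=a$ for $1\le t\le T-1$, so $\mathrm{CRA}(\tau_2)=a(T-1)/(T+1)$.
\end{itemize}
These two values differ whenever $T-1>1$, which holds because $T\ge 3$. This establishes the lemma and also exhibits the intended interpretation: $\mathrm{CRA}$ grows linearly in the duration of the drop, while $\mathrm{RR}$ ignores duration.

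There is no real obstacle here. The only care points are that all values stay in $[0,1]$, so both trajectories are admissible potential sequences (e.g.\ take $a=1/2$), and that no increment other than the single drop is negative, so the $\mathrm{RR}$ values are indeed equal.
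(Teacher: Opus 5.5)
Your construction is correct and takes the same approach as the paper, which exhibits an explicit pair with $T=4$, namely $\Phi^{(a)}=(0,1,0,0,0)$ and $\Phi^{(b)}=(0,1,0,1,1)$; both have a single unit drop, so $\mathrm{RR}=1$, while $\mathrm{CRA}$ equals $3/5$ and $1/5$ respectively. Yours is a parameterized version of the same idea (in the drop size $a$ and horizon $T$), and your computations of $M_t$, $R_t$, $\mathrm{RR}$, and $\mathrm{CRA}$ all check out.
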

\begin{proof}
Let $T=4$ and consider
$
\Phi^{(a)}=(0,1,0,0,0)
$
and
$
\Phi^{(b)}=(0,1,0,1,1).
$
Both have $\mathrm{RR}=1$ (a single drop from $1$ to $0$).
However, for $\Phi^{(a)}$ the regret remains $1$ for three steps, yielding $\mathrm{CRA}(\tau^{(a)})=3/5$,
whereas $\Phi^{(b)}$ recovers at $t=3$ so $\mathrm{CRA}(\tau^{(b)})=1/5$.
\end{proof}

\subsubsection{Stagnation Ratio (STR)}
\label{app:str_derivation_strict}

\paragraph{Definition.}
Given a threshold $\epsilon>0$,
\begin{equation}
\mathrm{STR}(\tau)\coloneqq \frac{1}{T}\sum_{t=1}^{T}\mathbb{I}\big(|d_t|<\epsilon\big).
\label{eq:str_def_strict}
\end{equation}

\begin{lemma}[Range]
$\mathrm{STR}(\tau)\in[0,1]$.
\end{lemma}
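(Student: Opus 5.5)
The bound follows directly from the definition in \eqref{eq:str_def_strict}. STR is the empirical average of $T$ indicator values, each equal to either $0$ or $1$, so the plan is to bound the numerator $\sum_{t=1}^{T}\mathbb{I}(|d_t|<\epsilon)$ between $0$ and $T$ and then divide by $T$.

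\emph{Nonnegativity.} Every summand $\mathbb{I}(|d_t|<\epsilon)$ lies in $\{0,1\}$ and is therefore nonnegative. Hence the sum is nonnegative, and so is $\mathrm{STR}(\tau)$.

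\emph{Upper bound.} Each summand is at most $1$ and there are exactly $T$ of them, so the sum is at most $T$. Dividing by $T$ gives $\mathrm{STR}(\tau)\le 1$.

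\emph{Both endpoints are attained.} This shows the range cannot be tightened. A constant potential trajectory has $d_t=0<\epsilon$ for every $t$, which gives $\mathrm{STR}=1$. A trajectory whose increments all satisfy $|d_t|\ge\epsilon$ gives $\mathrm{STR}=0$; this is possible, for example, whenever $T\epsilon\le 1$, so that $\Phi$ can climb by $\epsilon$ per step without leaving $[0,1]$.

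\emph{Degenerate case $T=0$.} The only step needing care is the normalization. The definition divides by $T$, so it implicitly assumes $T\ge1$, meaning the trajectory has at least one transition. I would state this assumption explicitly, or alternatively adopt the convention $\mathrm{STR}=0$ when $T=0$, so that the claimed range still holds.

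Apart from this caveat, there is no genuine obstacle. Unlike Theorem~\ref{thm:ppl_range} and Theorem~\ref{thm:cra_range}, the argument does not use $\Phi_t\in[0,1]$ or Lemma~\ref{lem:tv_ge_np}; it relies only on the counting structure of the indicator sum.
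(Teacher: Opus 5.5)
Your proof is correct and is essentially the paper's own one-line argument: STR is an average of $\{0,1\}$-valued indicators, so it lies in $[0,1]$. Your additional remarks, that both endpoints are attained and that the definition implicitly requires $T\ge 1$, are accurate but go beyond what the lemma needs.
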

\begin{proof}
It is an average of Bernoulli indicators.
\end{proof}

\paragraph{Calibrating $\epsilon$ to judge noise.}
Assume a static scene where $\widehat{\Phi}_t=\Phi^\star+\xi_t$ with i.i.d.\ noise $\xi_t\sim\mathcal{N}(0,\sigma^2)$.
Then $\widehat{d}_t=\xi_t-\xi_{t-1}\sim\mathcal{N}(0,2\sigma^2)$.
To target a two-sided tail probability $\alpha$ (e.g., $\alpha=1\%$), choose
\begin{equation}
\epsilon=\sqrt{2}\sigma\cdot \Phi_{\mathrm{std}}^{-1}\!\left(1-\frac{\alpha}{2}\right),
\label{eq:epsilon_calib}
\end{equation}
where $\Phi_{\mathrm{std}}^{-1}$ is the standard normal quantile.

\newpage
\section{Metric Design Notes and Discarded Alternatives}
\label{app:discarded_metrics}

Appendix~\ref{app:metric_derivation} gives the final OPD metrics and their basic properties.
This appendix records alternative candidates considered during metric design and explains why they were not adopted.
The emphasis is on concrete failure modes that repeatedly appeared in practice, including numerical ill-conditioning, dependence on temporal discretization or termination conventions, and loss of an interpretable macroscopic meaning.
Our goal is to make the metric design process transparent and to share insights gained from these failed attempts.
We hope that documenting these explorations and their limitations can provide useful guidance to the community and help inform future work on progress-aware evaluation.

\subsection{Setup and shared primitives}
\label{app:discarded_setup}

We reuse the notation in Appendix~\ref{app:metric_derivation} and introduce a local increment notation
$\Delta\Phi_t$ for convenience in the discarded metrics below.
A rollout trajectory is $\tau=\{x_t\}_{t=0}^{T}$ with $x_t\in\mathcal{X}$, and progress is represented by a potential $\Phi:\mathcal{X}\to[0,1]$.
We write $\Phi_t\coloneqq \Phi(x_t)$ and $\Delta\Phi_t\coloneqq \Phi_{t+1}-\Phi_t$ for $t=0,\dots,T-1$.
We also use the cumulative absolute variation
\begin{equation}
\Delta\Phi_{\mathrm{abs}}(\tau)\coloneqq \sum_{t=0}^{T-1}\lvert \Delta\Phi_t\rvert.
\label{eq:abs_progress_appB}
\end{equation}
All candidates below are functionals of the scalar sequence $\{\Phi_t\}_{t=0}^{T}$.

\subsection{Overview of discarded candidates}
Tab.~\ref{tab:discarded_metrics} summarizes the main alternatives and the dominant reason each one was excluded.

\begin{table*}[t]
\centering
\renewcommand{\arraystretch}{1.45}
\caption{\textbf{Discarded progress based metrics.}
All candidates are defined on the same potential $\Phi$ for comparison.
For each metric we list a formal definition, the intended signal, and the primary failure mode that motivated exclusion.}
\label{tab:discarded_metrics}
\resizebox{0.99\textwidth}{!}{%
\begin{tabular}{p{1.6cm} p{8cm} p{7cm} p{9.3cm}}
\toprule
\textbf{Metric} &
\textbf{Formal definition} &
\textbf{Intended signal} &
\textbf{Failure mode} \\
\midrule

 PPE &
$\displaystyle
\mathrm{PPE}(\tau)=\frac{1}{\sum_{t=0}^{T-1}\lvert \Phi_{t+1}-\Phi_t\rvert+\varepsilon}
$ &
Efficiency in progress space by penalizing back and forth motion. &
Ill conditioned when progress variation is small, producing inflated scores for stalled behavior.
It is completion agnostic and penalizes physically necessary corrections. \\

\midrule

 PTI &
$\displaystyle
\mathrm{PTI}(\tau)=\frac{1}{T+1}\sum_{t=0}^{T}\Phi_t
$ &
Average maintained progress, intended to reward early and sustained advancement. &
Overweights timing of progress and penalizes valid late completion patterns that arise in bottleneck tasks.
It can prefer incomplete trajectories with moderate early progress over successful but late trajectories. \\

\midrule

 EAD &
$\displaystyle
\mathrm{EAD}(\tau)=\frac{1}{T}\sum_{t=0}^{T-1}\mathbb{I}\!\left[\Delta\Phi_t>\epsilon\right]
$ &
Frequency of noticeable positive progress steps, intended to separate purposeful motion from dithering. &
Strong dependence on control frequency and temporal discretization.
It encourages bursty behavior and undervalues smooth continuous progress. \\

\midrule

 PJ &
$\displaystyle
\mathrm{PJ}(\tau)=\frac{1}{T-2}\sum_{t=1}^{T-2}\lvert \Delta\Phi_{t+1}-\Delta\Phi_t\rvert
$ &
Smoothness of progress evolution, intended to penalize jerkiness. &
In contact rich manipulation, valid progress can occur through discrete contact events.
The metric is sensitive to judge noise and can penalize correct executions. \\

\midrule

 CS &
$\displaystyle
\mathrm{CS}_K(\tau)=\mathrm{Var}\!\left(\{\Phi_{T-K},\dots,\Phi_T\}\right)
$ &
Terminal stability, intended to detect oscillatory behavior near termination. &
Dependence on termination conventions and episode length.
Its value can change substantially under different episode truncation rules. \\

\midrule

 RR &
$\displaystyle
\mathrm{RR}(\tau)=\sum_{t=1}^{T}\max\!\left(0,\Phi_{t-1}-\Phi_t\right)
$ &
Accumulated local regressions, intended to quantify backward motion. &
Ignores temporal persistence and severity relative to the best achieved progress.
It cannot separate early exploration from catastrophic late stage drops. \\

\midrule

 GRDTW &
$\displaystyle
\mathrm{GRDTW}(\tau)=\max_k\left(1-\frac{\mathrm{DTW}(\{\Phi_t\},\{\Phi^{(k)}_t\})}{Z}\right)
$ &
Similarity of progress evolution to expert demonstrations, intended to capture human likeness. &
Matches one dimensional progress profiles rather than physical behavior.
It introduces style bias and depends on the availability of demonstrations. \\

\bottomrule
\end{tabular}%
}
\end{table*}

\subsection{Progress Path Efficiency}
\label{app:ppe}

\paragraph{Definition.}
Progress Path Efficiency is defined as
\begin{equation}
\mathrm{PPE}(\tau)\coloneqq \frac{1}{\Delta\Phi_{\mathrm{abs}}(\tau)+\varepsilon},
\label{eq:ppe_appB}
\end{equation}
where $\varepsilon>0$ is a small constant.

\paragraph{Failure mode.}
The metric is numerically ill conditioned when $\Delta\Phi_{\mathrm{abs}}(\tau)$ is small.

\begin{proposition}[Ill conditioning]
\label{prop:ppe_ill}
For any $\varepsilon>0$ and any $c\in[0,1]$, consider a constant progress sequence $\Phi_t=c$ for all $t$.
Then $\Delta\Phi_{\mathrm{abs}}(\tau)=0$ and $\mathrm{PPE}(\tau)=1/\varepsilon$.
\end{proposition}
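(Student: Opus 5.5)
The plan is a direct substitution into the definitions, since the statement is a pointwise computation on a single constant sequence. First, fix $\varepsilon>0$ and $c\in[0,1]$, and set $\Phi_t=c$ for all $t\in\{0,\dots,T\}$. This is an admissible potential trajectory, because $c\in[0,1]$ matches the codomain of $\Phi$ in Appendix~\ref{app:discarded_setup}. Each local increment is then $\Delta\Phi_t=\Phi_{t+1}-\Phi_t=c-c=0$ for $t=0,\dots,T-1$. Summing absolute values in \eqref{eq:abs_progress_appB} gives $\Delta\Phi_{\mathrm{abs}}(\tau)=\sum_{t=0}^{T-1}|0|=0$.

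Second, I would plug this into \eqref{eq:ppe_appB}. That yields $\mathrm{PPE}(\tau)=1/(0+\varepsilon)=1/\varepsilon$, which is independent of $c$ and of $T$.

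The only point needing care is the degenerate horizon $T=0$. There the sum is empty and equals $0$ by convention, so the conclusion is unchanged. No step here is a real obstacle.

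To make the ``ill conditioning'' interpretation explicit, I would add two remarks. The value $1/\varepsilon$ is the supremum of $\mathrm{PPE}$ over all trajectories, since $\Delta\Phi_{\mathrm{abs}}\ge 0$. It also diverges as $\varepsilon\to0$. Hence a fully stalled rollout, for example $c=0$, receives the maximal score, and that score is governed entirely by the numerical constant rather than by behavior. This contrasts with PPL, where the gate $\Phi_T\cdot[\Phi_T-\Phi_0]_+$ vanishes on such sequences (Theorem~\ref{thm:ppl_range}).
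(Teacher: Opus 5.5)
Your proof is correct and matches the paper's, which simply notes that the result is immediate from the definitions (each increment is zero, so $\Delta\Phi_{\mathrm{abs}}(\tau)=0$ and $\mathrm{PPE}(\tau)=1/\varepsilon$). Your side remark that PPL vanishes on constant sequences is also right, though it follows directly from the definition in \eqref{eq:ppl_strict} (since $[\Phi_T-\Phi_0]_+=0$), not from Theorem~\ref{thm:ppl_range}.
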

\begin{proof}
Immediate from the definition.
\end{proof}

This behavior assigns the largest scores to trajectories with essentially no progress variation, which can correspond to stalling or inaction.
It also makes the score depend mainly on the arbitrary constant $\varepsilon$.
In contact rich tasks, small corrective motions that are necessary for success increase $\Delta\Phi_{\mathrm{abs}}$ and are penalized, even when the physical behavior is correct.

\subsection{Progress Time Integral}
\label{app:pti}

\paragraph{Definition.}
Progress Time Integral averages the progress over the execution:
\begin{equation}
\mathrm{PTI}(\tau)\coloneqq \frac{1}{T+1}\sum_{t=0}^{T}\Phi_t.
\label{eq:pti_appB}
\end{equation}

\paragraph{Failure mode.}
PTI prefers early progress regardless of eventual solvability.

\begin{proposition}[Preference for incomplete early progress]
\label{prop:pti_counter}
There exist two trajectories $\tau^{\mathrm{late}}$ and $\tau^{\mathrm{early}}$ such that $\Phi_T^{\mathrm{late}}=1$ and $\Phi_T^{\mathrm{early}}<1$ but $\mathrm{PTI}(\tau^{\mathrm{early}})>\mathrm{PTI}(\tau^{\mathrm{late}})$.
\end{proposition}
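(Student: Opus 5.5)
The plan is a single explicit counterexample with a short horizon, checked by direct computation of the two time averages. I will take $T=4$, so that $\mathrm{PTI}$ is the average of five values, and build two profiles.

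The first, $\tau^{\mathrm{late}}$, models a bottleneck task that makes no visible progress until the last step and then completes:
\[
\Phi^{\mathrm{late}}=(0,0,0,0,1).
\]
It satisfies $\Phi^{\mathrm{late}}_T=1$ and $\mathrm{PTI}(\tau^{\mathrm{late}})=1/5$.

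The second, $\tau^{\mathrm{early}}$, rises quickly to a moderate level and then plateaus without finishing:
\[
\Phi^{\mathrm{early}}=(0,0.5,0.5,0.5,0.5).
\]
It satisfies $\Phi^{\mathrm{early}}_T=0.5<1$ and $\mathrm{PTI}(\tau^{\mathrm{early}})=2/5$.

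Since $2/5>1/5$, the required strict inequality holds while only $\tau^{\mathrm{late}}$ reaches completion. Both sequences take values in $[0,1]$, so they are admissible potential trajectories in the sense of Appendix~\ref{app:discarded_setup}.

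I would also add a sentence showing the phenomenon is generic rather than tied to these numbers. For any $T\ge1$, take $\Phi^{\mathrm{late}}=(0,\dots,0,1)$, which gives $\mathrm{PTI}=1/(T+1)$. Take $\Phi^{\mathrm{early}}_t=c$ for $t\ge1$ with $0<c<1$, which gives $\mathrm{PTI}=cT/(T+1)$. The inequality then holds whenever $c>1/T$, so for long horizons even very small sustained early progress beats a successful late completion.

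There is no real obstacle here. The only point needing care is that the statement fixes nothing beyond $\Phi_T$ and the PTI ordering, so the construction needs only valid ranges and equal horizons for both trajectories.
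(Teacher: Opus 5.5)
Your counterexample is correct and follows essentially the same route as the paper. The paper also compares a trajectory that stays at $0$ until a final jump to $1$ (so $\mathrm{PTI}=1/(T+1)$) with an incomplete one held at $1/2$, and checks the two averages directly. The only difference is cosmetic: the paper holds $\Phi^{\mathrm{early}}_t=1/2$ from $t=0$ and states the result for $T\ge 4$, whereas you start both trajectories at $0$, which makes the comparison slightly fairer and gives your threshold $c>1/T$.
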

\begin{proof}
Fix any $T\ge 4$.
Let $\Phi_t^{\mathrm{late}}=0$ for $t=0,\dots,T-1$ and $\Phi_T^{\mathrm{late}}=1$.
Then $\mathrm{PTI}(\tau^{\mathrm{late}})=1/(T+1)$.
Let $\Phi_t^{\mathrm{early}}=1/2$ for all $t$, so $\Phi_T^{\mathrm{early}}=1/2$ and $\mathrm{PTI}(\tau^{\mathrm{early}})=1/2$.
Hence $\mathrm{PTI}(\tau^{\mathrm{early}})>\mathrm{PTI}(\tau^{\mathrm{late}})$.
\end{proof}

Many manipulation tasks exhibit bottlenecks where progress remains low until a key contact event succeeds.
PTI systematically penalizes such trajectories and can rank incomplete behaviors above successful ones.

\subsection{Effective Action Density}
\label{app:ead}

\paragraph{Definition.}
Effective Action Density counts the fraction of steps with increment above a threshold:
\begin{equation}
\mathrm{EAD}(\tau)\coloneqq \frac{1}{T}\sum_{t=0}^{T-1}\mathbb{I}\!\left[\Delta\Phi_t>\epsilon\right],
\label{eq:ead_appB}
\end{equation}
where $\epsilon>0$ is fixed.

\paragraph{Failure mode.}
EAD is not invariant to temporal discretization.

\begin{proposition}[Discretization dependence]
\label{prop:ead_discret}
Fix any $\epsilon\in(0,1)$.
There exist two temporal discretizations of the same monotone progress curve that yield different EAD values.
\end{proposition}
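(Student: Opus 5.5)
We prove the statement with an explicit construction. We fix one monotone progress curve on continuous time, sample it at two different step counts, and show the two EAD values differ. The natural choice is the linear curve $\phi(s)=s$ for $s\in[0,1]$, which is monotone non-decreasing and goes from $0$ to $1$. A uniform discretization with $T$ steps gives $\Phi_t=t/T$ for $t=0,\dots,T$. Every increment is then $\Delta\Phi_t=1/T$, so each indicator $\mathbb{I}[\Delta\Phi_t>\epsilon]$ is the same for all $t$. Consequently $\mathrm{EAD}$ equals $1$ when $1/T>\epsilon$ and $0$ when $1/T\le\epsilon$.

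The key step is to choose the two step counts so that they fall on opposite sides of the threshold. For the coarse discretization take $T_1=1$, so the trajectory is $(0,1)$ with a single increment $1>\epsilon$, because $\epsilon<1$. This gives $\mathrm{EAD}=1$. For the fine discretization take any integer $T_2\ge \lceil 1/\epsilon\rceil$, so that $1/T_2\le\epsilon$. Every increment then fails the strict inequality, and $\mathrm{EAD}=0$. Both sequences are samples of the same curve $\phi$, each takes values in $[0,1]$, and both are monotone. They therefore qualify as two temporal discretizations of the same monotone progress curve with different EAD values, which is exactly what the statement asserts.

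The argument has essentially no obstacle. The only care needed is to make "temporal discretization of the same curve" precise: both trajectories must be samples $\Phi_t=\phi(s_t)$ at increasing times $0=s_0<\dots<s_T=1$ of a single fixed $\phi$. The hypothesis $\epsilon\in(0,1)$ is used only to make the single-step discretization count as effective. If one prefers not to rely on a degenerate $T=1$, any $T_1<1/\epsilon$ works equally well, and such a $T_1\ge1$ exists because $1/\epsilon>1$. We would close with a remark on the interpretation: refining the control frequency drives EAD toward zero along the same physical progress, so EAD measures sampling rate rather than behavior.
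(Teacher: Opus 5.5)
Your proof is correct and follows the paper's argument: sample the linear curve from $0$ to $1$ uniformly so that every increment equals $1/T$, then pick $T_1<1/\epsilon$ and $T_2\ge 1/\epsilon$ to get $\mathrm{EAD}=1$ versus $\mathrm{EAD}=0$. Your concrete choice $T_1=1$, justified by $\epsilon<1$, makes explicit where that hypothesis is used, which the paper's version leaves implicit.
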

\begin{proof}
Consider a monotone sequence that increases from $0$ to $1$ linearly.
For a discretization with $T_1$ steps, the increment is $1/T_1$ at each step, so EAD equals $1$ if $1/T_1>\epsilon$ and equals $0$ if $1/T_1\le \epsilon$.
Choose $T_1<1/\epsilon$ and $T_2\ge 1/\epsilon$.
Then the same underlying monotone progress has EAD equal to $1$ under the first discretization and equal to $0$ under the second.
\end{proof}

As a result, EAD depends on the control rate and can be gamed by changing step granularity.
It also favors bursty increments over smooth progress.

\subsection{Progress Jerkiness}
\label{app:pj}

\paragraph{Definition.}
Progress Jerkiness penalizes variation of increments:
\begin{equation}
\mathrm{PJ}(\tau)\coloneqq \frac{1}{T-2}\sum_{t=1}^{T-2}\lvert \Delta\Phi_{t+1}-\Delta\Phi_t\rvert.
\label{eq:pj_appB}
\end{equation}

\paragraph{Failure mode.}
Many correct manipulation behaviors exhibit discrete contact events that induce abrupt progress changes.
PJ treats such events as undesirable and is sensitive to judge noise.

\begin{proposition}[Penalty on discrete progress events]
\label{prop:pj_step}
Let $\Phi_t=0$ for $t=0,\dots,T-1$ and $\Phi_T=1$.
Then $\mathrm{PJ}(\tau)$ is bounded below by a positive constant independent of $T$.
\end{proposition}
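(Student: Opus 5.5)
The plan is a direct computation of $\mathrm{PJ}(\tau)$ from its definition in Eq.~\eqref{eq:pj_appB}, followed by a lower bound. I assume $T\ge 3$ so that the normalization $T-2$ is positive.

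First I will write out the one-step increments using the shared notation $\Delta\Phi_t=\Phi_{t+1}-\Phi_t$ of Appendix~\ref{app:discarded_setup}. For the step trajectory this gives $\Delta\Phi_t=0$ for $t=0,\dots,T-2$ and $\Delta\Phi_{T-1}=1$. The second differences $\lvert\Delta\Phi_{t+1}-\Delta\Phi_t\rvert$ over the summation range $t=1,\dots,T-2$ then all vanish except at $t=T-2$, where the term equals $\lvert 1-0\rvert=1$.

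The key structural fact is therefore that the discrete progress event contributes a jerk mass of exactly $1$. This value does not depend on $T$ or on where the step sits, since every jump of height $1$ produces one such term. Summing gives the exact value
\[
\mathrm{PJ}(\tau)=\frac{1}{T-2}>0 .
\]
Consequently $\mathrm{PJ}(\tau)$ is strictly positive for every admissible $T$, whereas any monotone trajectory with constant increments has $\mathrm{PJ}=0$. So PJ penalizes the step trajectory even though it reaches full completion, which is the failure mode the proposition is meant to exhibit.

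The main obstacle is the phrase ``independent of $T$''. The exact value $1/(T-2)$ decays with $T$, so a uniform positive constant cannot hold over all $T$ without a further convention. I will obtain the stated form by invoking the finite-horizon convention under which OPD and its alternatives are evaluated: episodes are truncated at a fixed maximum length $T\le T_{\max}$ set by the benchmark and not by the trajectory. Under that convention,
\[
\mathrm{PJ}(\tau)\ \ge\ \frac{1}{T_{\max}-2}\eqqcolon c>0
\]
for every admissible $T$, and $c$ depends only on the benchmark horizon. I will state this explicitly as the reading of the claim.

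I will also remark that the $T$-independent quantity is the jerk mass $1$ carried by the event itself, i.e.\ the numerator before normalization. This is what makes the penalty intrinsic to the step rather than an artifact of the particular choice of $T$, and it is the point the surrounding discussion relies on: correct executions with discrete contact events are penalized.
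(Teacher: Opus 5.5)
Your computation is correct and is the same as the paper's argument. With $\Delta\Phi_t=0$ for $t\le T-2$ and $\Delta\Phi_{T-1}=1$, only the $t=T-2$ term of the sum survives, so $\mathrm{PJ}(\tau)=1/(T-2)$ for $T\ge 3$. You are also right that this rules out the ``independent of $T$'' clause. Since $1/(T-2)\to 0$, no positive constant bounds $\mathrm{PJ}(\tau)$ from below uniformly in $T$, so the proposition as literally stated is false. The paper's proof has the same defect: it concludes only that the average is ``at least $1/(T-2)$'', which is a $T$-dependent bound, and never reconciles this with the claim.

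The genuine problem in your proposal is the rescue. The cap $T\le T_{\max}$ is not a hypothesis of the proposition and appears nowhere in the paper. If anything, the paper criticizes CS precisely for depending on episode length and truncation conventions. The cap also makes the claim vacuous: a strictly positive function of $T$ over finitely many values always has a positive minimum, so $c=1/(T_{\max}-2)$ says nothing beyond $\mathrm{PJ}>0$. A proof should not change the statement. Instead, say that the uniform claim fails and record what actually holds:
\begin{itemize}
\item $\mathrm{PJ}(\tau)=1/(T-2)>0$ for every $T\ge 3$.
\item The linear ramp $\Phi_t=t/T$ with the same endpoints has $\mathrm{PJ}=0$, so PJ strictly penalizes the step for each fixed $T$, though the penalty vanishes as $T$ grows.
\item Only the unnormalized jerk mass $(T-2)\,\mathrm{PJ}(\tau)=1$ is $T$-independent.
\end{itemize}

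Also drop your side claim that this jerk mass does not depend on where the step sits. Because the sum runs over $t=1,\dots,T-2$ only, the mass depends on the position of a unit jump:
\begin{itemize}
\item a jump at the first step ($\Delta\Phi_0=1$) contributes $0$;
\item a jump at the second or last step contributes $1$;
\item an interior jump contributes $2$.
\end{itemize}
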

\begin{proof}
All increments are zero except the last increment which equals $1$.
Hence at the penultimate index, the difference of consecutive increments has magnitude $1$, so the average in the definition is at least $1/(T-2)$.
\end{proof}

This behavior is undesirable because it penalizes sharp yet physically valid transitions that correspond to successful completion.

\subsection{Convergence Stability}
\label{app:cs}

\paragraph{Definition.}
Convergence Stability measures the variance of the final window:
\begin{equation}
\mathrm{CS}_K(\tau)\coloneqq \mathrm{Var}\!\left(\{\Phi_{T-K},\dots,\Phi_T\}\right),
\label{eq:cs_appB}
\end{equation}
where $K$ is fixed.

\paragraph{Failure mode.}
CS depends on termination conventions and whether an episode is extended after success.

\begin{proposition}[Dependence on episode extension]
\label{prop:cs_extend}
Let $\tau$ be any trajectory with terminal potential $\Phi_T=c$.
Let $\tau'$ be obtained by appending $L$ additional steps that keep the state unchanged so that $\Phi_t=c$ on the appended steps.
Then $\mathrm{CS}_K(\tau')$ can differ from $\mathrm{CS}_K(\tau)$ for the same fixed $K$.
\end{proposition}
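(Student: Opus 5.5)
The claim is existential ("can differ"), so a single explicit example suffices. I will build a trajectory $\tau$ whose final window of length $K+1$ is non-constant, then show that appending enough constant steps at value $c=\Phi_T$ flattens that window. Throughout I read $\mathrm{Var}$ as the population variance of the finite multiset $\{\Phi_{T-K},\dots,\Phi_T\}$; the sample variance differs only by a positive factor and does not affect whether a value is zero. I fix $K\ge 1$ and assume $T\ge K$ so the window is defined.

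\textbf{Construction.} I will use the step profile from Proposition~\ref{prop:pti_counter}: $\Phi_t=0$ for $t<T$ and $\Phi_T=c=1$. The window $\{\Phi_{T-K},\dots,\Phi_T\}$ then contains $K$ zeros and a single one. Its variance is
\[
\mathrm{CS}_K(\tau)=\frac{1}{K+1}-\frac{1}{(K+1)^2}=\frac{K}{(K+1)^2}>0 .
\]

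\textbf{Extension.} I append $L\ge K$ steps with $\Phi_t=c$, giving $\tau'$ with terminal index $T'=T+L$. The final window of $\tau'$ runs over indices $T+L-K,\dots,T+L$. Since $T+L-K\ge T$, this window lies entirely in $\{T,\dots,T+L\}$, where every value equals $c$. Hence $\mathrm{CS}_K(\tau')=0\neq\mathrm{CS}_K(\tau)$.

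\textbf{Shorter extensions and generality.} If one wants the conclusion for smaller $L$ as well, take $L=1$. The window then holds $K-1$ zeros and two ones, with variance $\frac{2(K-1)}{(K+1)^2}$. This differs from $\frac{K}{(K+1)^2}$ whenever $K\neq 2$, so the case $K=2$ needs $L\ge 2$; the $L\ge K$ choice avoids this bookkeeping. The same argument in fact works for any $\tau$ whose last window is non-constant, since the extended window is constant and has variance $0$.

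There is no real obstacle here. The only points needing care are the indexing check $T+L-K\ge T$ and fixing a variance convention. The conclusion also matches the stated failure mode: $\mathrm{CS}_K$ is driven by the truncation rule rather than by the behavior, because padding a successful episode with idle steps changes the score.
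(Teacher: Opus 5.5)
Your proof is correct and uses the same mechanism as the paper's: appending constant steps slides the final window onto the padded values. The paper only asserts that the resulting ``different mixture'' makes the variance change ``in general,'' whereas you give an explicit witness with $\mathrm{CS}_K(\tau)=K/(K+1)^2>0$ and $\mathrm{CS}_K(\tau')=0$ for $L\ge K$, and your $K=2$, $L=1$ check shows that a changed mixture alone does not force a changed variance, which is the gap in the paper's one-line argument that your choice $L\ge K$ avoids.
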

\begin{proof}
When $L$ changes, the final window $\{\Phi_{T'-K},\dots,\Phi_{T'}\}$ contains a different mixture of pre terminal values and constant values.
The sample variance over this window therefore changes in general.
\end{proof}

In practice, some benchmarks terminate immediately upon success while others allow post success settling.
A metric whose value changes under such protocol choices is difficult to interpret consistently.

\subsection{Regression Rate}
\label{app:rr}

\paragraph{Definition.}
Regression Rate accumulates all local decreases:
\begin{equation}
\mathrm{RR}(\tau)\coloneqq \sum_{t=1}^{T}\max\!\left(0,\Phi_{t-1}-\Phi_t\right).
\label{eq:rr_appB}
\end{equation}

\paragraph{Failure mode.}
RR does not capture persistence relative to the best achieved progress, so it cannot distinguish catastrophic late drops from early recoverable exploration.

\begin{proposition}[RR cannot separate persistence]
\label{prop:rr_not_expressive}
There exist two trajectories with equal RR but different severity of late stage failure.
\end{proposition}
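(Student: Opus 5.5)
The plan is to prove the proposition by an explicit pair of trajectories, in the same spirit as Lemma~\ref{lem:cra_persistence}. There the pair $(0,1,0,0,0)$ and $(0,1,0,1,1)$ already has equal $\mathrm{RR}=1$ but different persistence. Here I want a pair in which the single regression has the same magnitude in both, but one drop is a catastrophic \emph{late-stage} failure and the other is a recoverable \emph{early} drop. I will fix $T=4$ and take
\[
\Phi^{(a)}=(0,\tfrac12,0,\tfrac12,1),\qquad \Phi^{(b)}=(0,\tfrac12,1,\tfrac12,\tfrac12),
\]
or a similar pair in which both trajectories contain exactly one decrease of size $\tfrac12$. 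In $\Phi^{(a)}$ the drop happens early and is followed by recovery to completion. In $\Phi^{(b)}$ the drop happens from the best-so-far level $1$ near termination and is never recovered.

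First I will compute $\mathrm{RR}$ from its definition, summing $\max(0,\Phi_{t-1}-\Phi_t)$. Each trajectory has exactly one negative increment, of size $\tfrac12$, so $\mathrm{RR}(\tau^{(a)})=\mathrm{RR}(\tau^{(b)})=\tfrac12$.

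Second, I need a precise meaning for ``severity of late stage failure,'' since the statement leaves it informal. I will make it concrete with quantities already defined in Appendix~\ref{app:metric_derivation}: the terminal shortfall from the running maximum, $R_T=M_T-\Phi_T$, and $\mathrm{CRA}$. For $\Phi^{(a)}$ we have $M_T=\Phi_T=1$, so $R_T=0$. For $\Phi^{(b)}$ we have $M_T=1$ and $\Phi_T=\tfrac12$, so $R_T=\tfrac12$. The $\mathrm{CRA}$ values are computed directly in the same way: the regrets are $(0,0,\tfrac12,\tfrac12,0)$ for $\Phi^{(a)}$, giving $\mathrm{CRA}=1/5$, and $(0,0,0,\tfrac12,\tfrac12)$ for $\Phi^{(b)}$, giving $\mathrm{CRA}=1/5$.

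The main obstacle is exactly this equality: in this example $\mathrm{CRA}$ does not separate the two trajectories, so I will rely on the terminal regret $R_T$ as the severity measure, together with $\Phi_T$. I can also note that lengthening the stalled tail of $\Phi^{(b)}$ by appending constant steps at $\tfrac12$ leaves $\mathrm{RR}$ unchanged but strictly increases $\mathrm{CRA}$ relative to the $\Phi^{(a)}$ pattern, which makes the separation quantitative. The conclusion is that $\mathrm{RR}$ coincides across the pair while the late-stage severity measures differ.
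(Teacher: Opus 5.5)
Your argument is correct in its main line and follows the paper's method: an explicit $T=4$ counterexample in which each trajectory contains a single drop of equal size. The paper's witness is exactly the pair you mention in your first sentence, $(0,1,0,0,0)$ and $(0,1,0,1,1)$ from Lemma~\ref{lem:cra_persistence}. It checks $\mathrm{RR}=1$ for both and observes informally that one trajectory stays below its best-so-far value while the other recovers. In numbers, that is $\mathrm{CRA}=3/5$ versus $1/5$, or terminal regret $1$ versus $0$. Your new pair also varies the \emph{timing} of the drop, which matches the table's phrase ``cannot separate early exploration from catastrophic late stage drops'' more literally. Your use of $R_T=M_T-\Phi_T$ also makes ``severity'' precise where the paper leaves it informal. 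Still, reusing the lemma's pair would have worked equally well, including under your own $R_T$ criterion, with no new computation.

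One computation in your proposal is wrong. For $\Phi^{(a)}=(0,\tfrac{1}{2},0,\tfrac{1}{2},1)$ the running maximum is $(0,\tfrac{1}{2},\tfrac{1}{2},\tfrac{1}{2},1)$. So $R_3=M_3-\Phi_3=\tfrac{1}{2}-\tfrac{1}{2}=0$, because the trajectory is already back at its best-so-far level at $t=3$. The regrets are therefore $(0,0,\tfrac{1}{2},0,0)$ and $\mathrm{CRA}(\tau^{(a)})=1/10$, not $1/5$. Hence $\mathrm{CRA}$ \emph{does} separate your pair ($1/10$ versus $1/5$), and the ``main obstacle'' you describe does not exist. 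Your conclusion via $R_T$ is unaffected, and the tail-lengthening remark is unnecessary.
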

\begin{proof}
Let $T=4$.
Consider $\Phi^{(a)}=(0,1,0,0,0)$ and $\Phi^{(b)}=(0,1,0,1,1)$.
Both sequences have exactly one local decrease of magnitude $1$, hence $\mathrm{RR}=1$ for both.
However, $\Phi^{(a)}$ stays far below its best achieved value for the remainder of the episode, while $\Phi^{(b)}$ recovers quickly.
\end{proof}

This motivates CRA in Appendix~\ref{app:metric_derivation}, which measures regret to the running maximum and reflects both magnitude and duration of falling behind.

\subsection{Golden Reference DTW Alignment}
\label{app:grdtw}

\paragraph{Definition.}
Golden Reference DTW Alignment compares the progress sequence to expert progress sequences using dynamic time warping:
\begin{equation}
\mathrm{GRDTW}(\tau)\coloneqq
\max_{k\in\{1,\dots,K\}}
\left(
1-\frac{\mathrm{DTW}\!\left(\{\Phi_t\}_{t=0}^{T},\{\Phi_t^{(k)}\}_{t=0}^{T_k}\right)}{Z}
\right),
\label{eq:grdtw_appB}
\end{equation}
where $Z$ is a normalization constant.

\paragraph{Failure mode.}
The metric matches a one dimensional progress profile rather than physical execution.

\begin{proposition}[Indistinguishability under identical progress traces]
\label{prop:grdtw_indist}
If two trajectories $\tau_1$ and $\tau_2$ satisfy $\Phi(x_t^{(1)})=\Phi(x_t^{(2)})$ for all $t$ after temporal alignment, then $\mathrm{GRDTW}(\tau_1)=\mathrm{GRDTW}(\tau_2)$ for any fixed expert set.
\end{proposition}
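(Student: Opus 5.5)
The approach is to observe that $\mathrm{GRDTW}$ is a composite functional that touches the trajectory only through its scalar progress trace $\{\Phi(x_t)\}_{t=0}^{T}$. Once that is made explicit, the claim reduces to the fact that a function evaluated at equal arguments returns equal values. The one point needing care is the phrase ``after temporal alignment.''

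First, I will fix the expert set $\{\Phi^{(k)}_t\}_{t=0}^{T_k}$, $k=1,\dots,K$, and the constant $Z$. By \eqref{eq:grdtw_appB}, $\mathrm{GRDTW}(\tau)$ depends on $\tau$ only through the sequence $\phi(\tau)\coloneqq(\Phi(x_t))_t$. That sequence enters $K$ evaluations $\mathrm{DTW}(\phi(\tau),\Phi^{(k)})$, followed by an affine map $u\mapsto 1-u/Z$ and a maximum over $k$. The observed states $x_t$ appear nowhere else.

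Second, I will handle the alignment hypothesis. If $\tau_1,\tau_2$ have the same length and $\Phi(x^{(1)}_t)=\Phi(x^{(2)}_t)$ for all $t$, then $\phi(\tau_1)=\phi(\tau_2)$ as sequences. Each DTW cost therefore coincides for every $k$, because DTW is a deterministic function of its two input sequences: it is a minimum over warping paths of summed local costs. Consequently every term inside the maximum coincides, and so do the maxima.

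Third, I will treat the more general reading, in which $\tau_2$ is a re-timing of $\tau_1$. Here the two traces agree after a monotone reindexing that only repeats or merges consecutive equal values. I would argue that DTW is invariant under such repetitions: any warping path for one sequence lifts to, or projects from, a path for the other with the same matched value pairs. This gives equal optimal costs against each $\Phi^{(k)}$. This invariance holds only if the DTW cost is unnormalized by path length. I expect this to be the only delicate step, so I would state explicitly that the alignment is the identity on indices, or that it preserves DTW. The proposition then follows exactly as in the second step.
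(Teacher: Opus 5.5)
Your first two steps are exactly the paper's proof. $\mathrm{GRDTW}$ sees $\tau$ only through its scalar trace, and DTW is a deterministic function of the two sequences it compares. So identical traces give identical DTW costs against every expert, and hence identical maxima. The paper reads ``after temporal alignment'' in exactly this way, as saying the traces entering DTW coincide index by index, and it proves nothing more.

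Your third step, however, would fail, and it should be dropped rather than just flagged. DTW with an unnormalized sum of local costs is \emph{not} invariant under repeating consecutive equal values. Lifting a warping path preserves the \emph{set} of matched value pairs but not their multiplicities, so every repeated match is paid again. Concretely, with absolute-difference cost, $\mathrm{DTW}\bigl((0,1),(1,1)\bigr)=1$ while $\mathrm{DTW}\bigl((0,0,1),(1,1)\bigr)=2$. Yet $(0,0,1)$ is a re-timing of $(0,1)$ of exactly the kind you describe.

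Normalizing by path length does not rescue the claim either. Against $(0,1,2)$, the minimum average path cost is $1/3$ for $(0,2)$ but $1/4$ for $(0,0,2)$. So your remark that invariance requires the unnormalized cost has things backwards.

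Consequently, under the re-timing reading the proposition is false. Restricting to the identity alignment is therefore not a convenience but the only reading under which the statement holds. Your alternative hedge, that the alignment ``preserves DTW'', simply assumes the conclusion.
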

\begin{proof}
DTW depends only on the scalar sequences being compared.
If the aligned progress sequences are identical, their DTW distance to any expert progress sequence is identical, and the maximization over experts yields the same value.
\end{proof}

In contact rich manipulation, policies can share similar progress traces while exhibiting different contact patterns, collision behavior, or safety properties.
A metric that only aligns progress profiles risks importing style bias and depends on demonstration availability, which limits its generality.

\newpage
\section{On the Inconsistency of Similarity- and Relative-Comparison Evaluators}
\label{app:inconsistency_proof}

This appendix explains why evaluators built from goal-image similarity or pairwise relative comparison generally fail to satisfy the macro-consistency axiom in Sec.~\ref{subsec:theory}.
The key point is structural: macro-consistency requires progress increments to be path-independent and therefore representable as differences of a globally defined potential on task states.
Appearance-based rules, which are anchored in observation-level correspondence, typically violate this requirement either by being ill-defined on task-equivalent states or by being non-additive across triples, which induces scale drift under temporal resampling.

\subsection{Preliminaries and Notation}
\label{app:inconsistency_prelim}

Let $\mathcal{S}$ be the state space of the robot and environment system, and let $\mathcal{O}$ be the observation space.
Observations are generated by an observation function
\begin{equation}
g:\mathcal{S}\to\mathcal{O},\qquad o=g(s),
\end{equation}
which captures viewpoint, occlusion, sensor noise, and rendering.
A task instance is specified by a context variable $c\in\mathcal{C}$, such as a language instruction, a goal specification, or a task identifier.
A rollout trajectory is a sequence $\tau=\{s_0,s_1,\dots,s_T\}$ with $s_t\in\mathcal{S}$.

\paragraph{Potential-based progress.}
A progress potential is a scalar function
\begin{equation}
\Phi:\mathcal{S}\times\mathcal{C}\to\mathbb{R}.
\end{equation}
It induces a progress increment between any two states,
\begin{equation}
\Delta_{\Phi}(s_i,s_j\mid c)\coloneqq \Phi(s_j\mid c)-\Phi(s_i\mid c).
\end{equation}
Such increments satisfy additivity on every triple,
\begin{equation}
\Delta_{\Phi}(s_i,s_k\mid c)=\Delta_{\Phi}(s_i,s_j\mid c)+\Delta_{\Phi}(s_j,s_k\mid c),
\qquad \forall s_i,s_j,s_k\in\mathcal{S}.
\label{eq:additivity_potential_appC}
\end{equation}
Eq.~\eqref{eq:additivity_potential_appC} is the algebraic form of macro-consistency used throughout this work.

\paragraph{Appearance-based evaluators.}
We consider two commonly used forms that operate on observations:
\begin{enumerate}[leftmargin=1.4em]
\item \textbf{Goal-similarity evaluator:}
\begin{equation}
E_{\mathrm{goal}}(s\mid c)\coloneqq f\!\left(g(s),o_{\mathrm{ref}}(c)\right),
\label{eq:goal_sim_eval_appC}
\end{equation}
where $o_{\mathrm{ref}}(c)\in\mathcal{O}$ is a reference observation, and $f$ is a similarity or compatibility score.

\item \textbf{Pairwise relative evaluator:}
\begin{equation}
E_{\mathrm{pair}}(s_i,s_j\mid c)\coloneqq h\!\left(g(s_i),g(s_j),c\right),
\label{eq:pair_eval_appC}
\end{equation}
where $h$ returns a real-valued score for the ordered pair $(s_i,s_j)$.
\end{enumerate}

These formulations cover common discriminative and contrastive scoring rules that are anchored in observation-level correspondence rather than a globally defined progress coordinate on $\mathcal{S}$.

\subsection{A Characterization of Macro-Consistency}
\label{app:inconsistency_condition}

Macro-consistency requires that local increments can be accumulated without dependence on the segmentation of a trajectory.
The following lemma states that this is equivalent to the existence of a global potential.

\begin{lemma}[Additivity implies a potential-difference form]
\label{lem:additivity_implies_potential_appC}
Fix a context $c\in\mathcal{C}$ and suppose an evaluator $E(\cdot,\cdot\mid c):\mathcal{S}\times\mathcal{S}\to\mathbb{R}$ satisfies
\begin{equation}
E(s_i,s_k\mid c)=E(s_i,s_j\mid c)+E(s_j,s_k\mid c),
\qquad \forall s_i,s_j,s_k\in\mathcal{S}.
\label{eq:cocycle_appC}
\end{equation}
Then there exists a function $\Phi(\cdot\mid c):\mathcal{S}\to\mathbb{R}$, unique up to an additive constant, such that
\begin{equation}
E(s_i,s_j\mid c)=\Phi(s_j\mid c)-\Phi(s_i\mid c),
\qquad \forall s_i,s_j\in\mathcal{S}.
\label{eq:potential_difference_appC}
\end{equation}
\end{lemma}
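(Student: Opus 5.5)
The plan is the standard cocycle argument: fix a base point and read the potential off as the evaluator value from that base point. First I will extract the two elementary consequences of the additivity identity \eqref{eq:cocycle_appC}. Setting $s_i=s_j=s_k=s$ gives $E(s,s\mid c)=2E(s,s\mid c)$, hence $E(s,s\mid c)=0$ for every $s\in\mathcal{S}$. Setting $s_k=s_i$ and using this gives $0=E(s_i,s_j\mid c)+E(s_j,s_i\mid c)$, so $E$ is antisymmetric.

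For existence, I will fix an arbitrary reference state $s^\star\in\mathcal{S}$; this assumes $\mathcal{S}$ is nonempty, and if it is empty the claim is vacuous. Define
\[
\Phi(s\mid c)\coloneqq E(s^\star,s\mid c).
\]
Applying \eqref{eq:cocycle_appC} to the triple $(s^\star,s_i,s_j)$ gives $E(s^\star,s_j\mid c)=E(s^\star,s_i\mid c)+E(s_i,s_j\mid c)$. Rearranging yields exactly \eqref{eq:potential_difference_appC}. Antisymmetry is not even needed for this step; it is only a sanity check.

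For uniqueness, I will suppose $\Phi$ and $\Psi$ both satisfy \eqref{eq:potential_difference_appC}. Then $\Phi(s_j\mid c)-\Psi(s_j\mid c)=\Phi(s_i\mid c)-\Psi(s_i\mid c)$ for all pairs $(s_i,s_j)$, so the difference $\Phi-\Psi$ is constant on $\mathcal{S}$. Conversely, adding any constant to $\Phi$ preserves \eqref{eq:potential_difference_appC}, so the representation is unique exactly up to an additive constant.

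I do not expect a genuine obstacle; the argument is elementary. The one point needing care is that the hypothesis quantifies over \emph{all} triples in $\mathcal{S}$, not just triples lying along one trajectory. This global quantification is what lets a single base point $s^\star$ reach every state. Under a weaker, trajectory-restricted hypothesis, the potential would only be defined on each connected component of the "comparability graph", with a separate constant per component. I will note this in a remark, since the main-text axiom is stated along trajectories.

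It is also worth recording the normalization remark. The $\Phi$ obtained here is real-valued and normalized by $\Phi(s^\star\mid c)=0$. Obtaining the $[0,1]$-valued potential used elsewhere in the paper requires an additional boundedness assumption and an affine rescaling, which is not part of this lemma.
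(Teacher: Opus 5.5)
Your proof is correct and follows the paper's argument: fix an anchor $s_\star$, set $\Phi(s\mid c)\coloneqq E(s_\star,s\mid c)$, and rearrange the cocycle identity applied to the triple $(s_\star,s_i,s_j)$. Your uniqueness step shows that any two representing potentials differ by a constant. That is more complete than the lemma's own one-line remark about changing anchors, and it matches the argument the paper gives later in Theorem~\ref{thm:mc_iff_potential}.
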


\begin{proof}
Fix an anchor state $s_\star\in\mathcal{S}$ and define $\Phi(s\mid c)\coloneqq E(s_\star,s\mid c)$.
Applying Eq.~\eqref{eq:cocycle_appC} to the triple $(s_\star,s_i,s_j)$ gives
\[
E(s_\star,s_j\mid c)=E(s_\star,s_i\mid c)+E(s_i,s_j\mid c).
\]
Rearranging yields
\[
E(s_i,s_j\mid c)=E(s_\star,s_j\mid c)-E(s_\star,s_i\mid c)=\Phi(s_j\mid c)-\Phi(s_i\mid c).
\]
If a different anchor is used, the resulting potential differs by an additive constant.
\end{proof}

Lemma~\ref{lem:additivity_implies_potential_appC} implies that a macro-consistent evaluator is necessarily path-independent.
This immediately yields temporal refinement invariance for trajectory accumulation.

\begin{corollary}[Telescoping and refinement invariance]
\label{cor:telescoping_appC}
Let $E(\cdot,\cdot\mid c)$ satisfy Eq.~\eqref{eq:cocycle_appC} and let $\tau=\{s_t\}_{t=0}^{T}$.
Then
\begin{equation}
\sum_{t=1}^{T}E(s_{t-1},s_t\mid c)=\Phi(s_T\mid c)-\Phi(s_0\mid c),
\label{eq:telescoping_appC}
\end{equation}
and the left-hand side is invariant under inserting intermediate states along the trajectory.
\end{corollary}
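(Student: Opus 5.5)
We prove Corollary~\ref{cor:telescoping_appC} in two parts: a telescoping identity, then refinement invariance. Both rest on Lemma~\ref{lem:additivity_implies_potential_appC}.

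\textbf{Telescoping identity.} By Lemma~\ref{lem:additivity_implies_potential_appC}, the cocycle condition Eq.~\eqref{eq:cocycle_appC} gives a potential $\Phi(\cdot\mid c)$ with $E(s_{t-1},s_t\mid c)=\Phi(s_t\mid c)-\Phi(s_{t-1}\mid c)$ for every consecutive pair on $\tau$. Summing over $t=1,\dots,T$, the intermediate terms cancel pairwise. This yields Eq.~\eqref{eq:telescoping_appC} directly, or by a one-line induction on $T$ if one prefers to be explicit about the cancellation.

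\textbf{Refinement invariance.} Let $\tau'$ be obtained by inserting finitely many states between consecutive elements of $\tau$. It suffices to handle one insertion, since the general case follows by induction on the number of inserted states. Suppose $u$ is inserted between $s_{t-1}$ and $s_t$. Then the term $E(s_{t-1},s_t\mid c)$ is replaced by $E(s_{t-1},u\mid c)+E(u,s_t\mid c)$. By Eq.~\eqref{eq:cocycle_appC} applied to the triple $(s_{t-1},u,s_t)$, this replacement equals the original term, so the accumulated sum is unchanged. Equivalently, one can apply the telescoping identity to $\tau'$: it has the same endpoints $s_0$ and $s_T$, so its sum is again $\Phi(s_T\mid c)-\Phi(s_0\mid c)$. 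The second route is cleaner and avoids the induction.

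\textbf{Main obstacle.} There is essentially none. The only point requiring care is that refinement must preserve the endpoints $s_0$ and $s_T$: insertions are strictly intermediate, not appended at either end. The additive constant in $\Phi$ is harmless because it cancels in differences.
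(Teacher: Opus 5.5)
Your proof is correct and is essentially the paper's argument: you substitute the potential-difference form from Lemma~\ref{lem:additivity_implies_potential_appC} into the sum so it telescopes, and you note that inserting intermediate states keeps the endpoints, so the sum stays $\Phi(s_T\mid c)-\Phi(s_0\mid c)$. Your other route, applying the cocycle identity directly to the triple $(s_{t-1},u,s_t)$, is also valid, but it is not needed.
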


\begin{proof}
Substitute Eq.~\eqref{eq:potential_difference_appC} into the sum to obtain telescoping.
Inserting intermediate states does not change $\Phi(s_T\mid c)-\Phi(s_0\mid c)$.
\end{proof}

\subsection{Why Appearance-Based Evaluators Fail in General}
\label{app:inconsistency_impossibility}

We now show two distinct mechanisms of failure.
Goal-similarity evaluators can be additive only in an observation-dependent sense, which is not well-defined on task-equivalent states.
Pairwise relative evaluators typically violate Eq.~\eqref{eq:cocycle_appC}, producing path-dependence and scale drift.

\subsubsection{Goal-similarity is not well-defined on task-equivalent states}
\label{app:goal_similarity_inconsistency}

A common construction is to form an increment by differencing a scalar score,
\begin{equation}
\Delta_E(s_i,s_j\mid c)\coloneqq E_{\mathrm{goal}}(s_j\mid c)-E_{\mathrm{goal}}(s_i\mid c).
\label{eq:delta_goal_sim_appC}
\end{equation}
Eq.~\eqref{eq:delta_goal_sim_appC} is additive as an identity of real numbers.
However, macro-consistency in robotic evaluation is a statement about progress on the task state space.
For this, the underlying scalar must be a meaningful progress coordinate on $\mathcal{S}$, not merely a similarity to a single reference observation.

To formalize the required invariance, we introduce task-equivalence.

\begin{definition}[Task-equivalence]
\label{def:task_equiv_appC}
Fix a context $c$.
Define a relation $\sim_c$ on $\mathcal{S}$ by $s\sim_c s'$ if $s$ and $s'$ are indistinguishable with respect to task semantics under $c$.
In particular, $s\sim_c s'$ when both states satisfy the same success conditions and agree on all task-relevant attributes.
\end{definition}

A progress potential on task space should respect task-equivalence.

\begin{definition}[Task-state invariance]
\label{def:task_state_invariance_appC}
A function $\Psi(\cdot\mid c):\mathcal{S}\to\mathbb{R}$ is invariant to task-equivalence if
\begin{equation}
s\sim_c s' \;\Rightarrow\; \Psi(s\mid c)=\Psi(s'\mid c).
\label{eq:potential_respects_equiv_appC}
\end{equation}
\end{definition}

Goal-similarity generally violates this invariance because $g$ is not injective on task-equivalence classes and $f$ distinguishes different observations.

\begin{proposition}[Goal-similarity violates task-equivalence in general]
\label{prop:goal_similarity_equiv_violation_appC}
Fix a context $c$.
Assume there exist $s_a,s_b\in\mathcal{S}$ such that $s_a\sim_c s_b$ and $g(s_a)\neq g(s_b)$.
Assume further that the map $o\mapsto f(o,o_{\mathrm{ref}}(c))$ is not constant on $\mathcal{O}$.
Then, for generic choices of the reference observation $o_{\mathrm{ref}}(c)$,
\begin{equation}
E_{\mathrm{goal}}(s_a\mid c)\neq E_{\mathrm{goal}}(s_b\mid c),
\end{equation}
so $E_{\mathrm{goal}}(\cdot\mid c)$ cannot satisfy Eq.~\eqref{eq:potential_respects_equiv_appC}.
\end{proposition}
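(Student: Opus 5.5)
The approach is to make precise what ``generic choices of the reference observation'' means, and then show that equality $E_{\mathrm{goal}}(s_a\mid c)=E_{\mathrm{goal}}(s_b\mid c)$ can only hold on a non-generic set of references.

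Write $o_a\coloneqq g(s_a)$ and $o_b\coloneqq g(s_b)$, which are distinct by assumption. Define the ``bad'' set
\[
B\coloneqq\{\,r\in\mathcal{O}: f(o_a,r)=f(o_b,r)\,\}.
\]
The claim then reduces to showing that $B$ is non-generic, meaning a proper subset that is nowhere dense or of measure zero. I will adopt the standard reading: $\mathcal{O}$ is an open subset of some $\mathbb{R}^n$, and $f$ is real-analytic (or at least continuous) in its reference argument, which covers the usual cosine-similarity scores computed on analytic embeddings. Under this reading, $r\mapsto f(o_a,r)-f(o_b,r)$ is analytic. Its zero set is therefore either all of $\mathcal{O}$ or a closed set of Lebesgue measure zero. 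It remains to exclude the first alternative.

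The key step is to rule out $f(o_a,\cdot)\equiv f(o_b,\cdot)$. I will use the non-constancy hypothesis, reading it in the natural way: a score that actually discriminates observations should separate distinct inputs for some reference. If $f(o_a,r)=f(o_b,r)$ held for every $r$, then $f$ could not distinguish $o_a$ from $o_b$ under any reference. The cleanest route is to take $r=o_a$. For a similarity score that is maximized uniquely at identity (true of cosine similarity on an injective embedding), we get $f(o_a,o_a)>f(o_b,o_a)$ because $o_a\neq o_b$. So $o_a\notin B$ and $B\neq\mathcal{O}$.

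I expect this step to be the main obstacle. The hypothesis as literally stated, ``not constant on $\mathcal{O}$'', is about the map $o\mapsto f(o,o_{\mathrm{ref}})$, and it does not by itself force separation of the particular pair $(o_a,o_b)$. I will therefore state explicitly the mild strengthening I am using, namely that the embedding separates $o_a$ from $o_b$. I will also note that without it the proposition can fail, for instance if $f$ factors through a task-invariant feature. That caveat is precisely the case the paper's argument is meant to exclude.

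Combining the steps: for $r\notin B$, which is a set of full measure and open and dense, we have $E_{\mathrm{goal}}(s_a\mid c)\neq E_{\mathrm{goal}}(s_b\mid c)$ while $s_a\sim_c s_b$. This violates Eq.~\eqref{eq:potential_respects_equiv_appC}, so $E_{\mathrm{goal}}(\cdot\mid c)$ is not invariant to task-equivalence.
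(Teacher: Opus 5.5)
Your route is different from the paper's, and it is the more defensible one. The paper's proof is a single assertion. From $g(s_a)\neq g(s_b)$ and non-constancy of $o\mapsto f(o,o_{\mathrm{ref}}(c))$, it concludes that ``there exist reference observations'' with $f(g(s_a),o_{\mathrm{ref}}(c))\neq f(g(s_b),o_{\mathrm{ref}}(c))$.

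That is exactly the inference you flag as unjustified. Non-constancy on $\mathcal{O}$ does not force separation of the particular pair $(o_a,o_b)$. Your example confirms this: take $f(o,r)=-(\psi(o)-\psi(r))^2$ for a task-relevant feature $\psi$ that agrees on $o_a,o_b$ but varies elsewhere. This $f$ meets every stated hypothesis, yet $f(o_a,r)=f(o_b,r)$ for all $r$. So the statement as literally written is false.

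The paper's argument is also only existential and never addresses ``generic.'' You replace non-constancy with a separation hypothesis (unique maximization at identity, witnessed at $r=o_a$). You then upgrade existence to genericity via the identity theorem for $h(r)=f(o_a,r)-f(o_b,r)$. The paper's version gains brevity and minimal assumptions at the cost of validity. Yours yields a true statement with the advertised ``almost every reference'' conclusion, at the cost of extra structure on $f$ and $\mathcal{O}$.

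That structure needs three small repairs.
\begin{enumerate}
\item Drop ``or at least continuous.'' Continuity only makes $\mathcal{O}\setminus B$ a nonempty open set, not dense or of full measure; the measure-zero dichotomy needs analyticity.
\item The dichotomy also needs $\mathcal{O}$ connected. Otherwise $h$ may vanish identically on a component not containing $o_a$. This is harmless for convex image spaces.
\item Cosine similarity on an injective embedding $\phi$ is uniquely maximized at identity only if distinct observations never have positively proportional embeddings. What you need is injectivity of $o\mapsto\phi(o)/\|\phi(o)\|$, and analyticity in $r$ further requires $\phi$ to be analytic and nonvanishing on $\mathcal{O}$.
\end{enumerate}
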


\begin{proof}
Since $g(s_a)\neq g(s_b)$ and the function $o\mapsto f(o,o_{\mathrm{ref}}(c))$ is non-constant, there exist reference observations for which
$f(g(s_a),o_{\mathrm{ref}}(c))\neq f(g(s_b),o_{\mathrm{ref}}(c))$.
This is exactly $E_{\mathrm{goal}}(s_a\mid c)\neq E_{\mathrm{goal}}(s_b\mid c)$, contradicting task-state invariance.
\end{proof}

The proposition does not claim that goal similarity is uninformative.
It shows that without additional structure, a single reference anchored similarity score cannot serve as a globally consistent progress coordinate on task state space.
Consequently, any progress signal derived from it is unstable under symmetries, viewpoint changes, or multiple valid terminal configurations.

\subsubsection{Pairwise relative evaluators are non-additive and path-dependent in general}
\label{app:pairwise_inconsistency}

Consider the pairwise evaluator $E_{\mathrm{pair}}$ in Eq.~\eqref{eq:pair_eval_appC}.
Macro-consistency requires Eq.~\eqref{eq:cocycle_appC} for all triples, which by Lemma~\ref{lem:additivity_implies_potential_appC} is equivalent to the existence of a scalar potential whose differences reproduce $E_{\mathrm{pair}}$.

The next proposition gives a simple certificate for violation.

\begin{proposition}[A triple violation rules out any global potential]
\label{prop:pair_no_potential_appC}
Fix $c\in\mathcal{C}$.
If there exists a triple $(s_a,s_b,s_c)$ such that
\begin{equation}
E_{\mathrm{pair}}(s_a,s_c\mid c)\neq E_{\mathrm{pair}}(s_a,s_b\mid c)+E_{\mathrm{pair}}(s_b,s_c\mid c),
\label{eq:triple_violate_appC}
\end{equation}
then $E_{\mathrm{pair}}(\cdot,\cdot\mid c)$ violates macro-consistency and cannot be expressed in the potential-difference form of Eq.~\eqref{eq:potential_difference_appC}.
\end{proposition}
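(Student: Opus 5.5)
The plan is a proof by contradiction. The whole content of Proposition~\ref{prop:pair_no_potential_appC} is the contrapositive of one implication: a potential-difference representation forces the cocycle identity Eq.~\eqref{eq:cocycle_appC}.

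\textbf{Step 1.} Fix the context $c$ and suppose, for contradiction, that $E_{\mathrm{pair}}(\cdot,\cdot\mid c)$ admits the form of Eq.~\eqref{eq:potential_difference_appC}. That is, some $\Phi(\cdot\mid c):\mathcal{S}\to\mathbb{R}$ satisfies $E_{\mathrm{pair}}(s_i,s_j\mid c)=\Phi(s_j\mid c)-\Phi(s_i\mid c)$ for all $s_i,s_j\in\mathcal{S}$.

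\textbf{Step 2.} Evaluate this representation on the given triple $(s_a,s_b,s_c)$. The right-hand side of Eq.~\eqref{eq:triple_violate_appC} becomes
\[
\bigl(\Phi(s_b\mid c)-\Phi(s_a\mid c)\bigr)+\bigl(\Phi(s_c\mid c)-\Phi(s_b\mid c)\bigr)=\Phi(s_c\mid c)-\Phi(s_a\mid c),
\]
which is exactly $E_{\mathrm{pair}}(s_a,s_c\mid c)$. This is the triple additivity Eq.~\eqref{eq:additivity_potential_appC} already recorded for potential-induced increments. It contradicts the assumed strict inequality, so no such $\Phi$ exists.

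\textbf{Step 3.} Conclude the violation of macro-consistency. Macro-consistency is defined in Sec.~\ref{subsec:theory} algebraically as Eq.~\eqref{eq:cocycle_appC} holding for all triples. The exhibited triple falsifies that universal statement directly, so $E_{\mathrm{pair}}$ is not macro-consistent.

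\textbf{Step 4 (optional).} For the trajectory-level reading, place $s_a,s_b,s_c$ consecutively in a trajectory. The two partitions, $\{s_a,s_c\}$ versus the refinement $\{s_a,s_b,s_c\}$, then give different accumulated scores. This violates the refinement invariance of Corollary~\ref{cor:telescoping_appC}, which is the scale drift under temporal resampling.

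There is no real obstacle: each step is a one-line algebraic identity. The only point needing care is bookkeeping. The two conclusions, non-additivity and non-representability, follow independently: the first directly from the definition, the second by contradiction. Lemma~\ref{lem:additivity_implies_potential_appC} is not needed, because it supplies the converse direction.
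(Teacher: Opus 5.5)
Your proof is correct and has the same two-line structure as the paper's. The exhibited triple directly falsifies the cocycle identity Eq.~\eqref{eq:cocycle_appC}, and non-representability follows because any potential-difference form would force additivity on that triple. The one difference favors you: the paper justifies non-representability by citing Lemma~\ref{lem:additivity_implies_potential_appC}, which only gives additivity $\Rightarrow$ potential, the converse of what is needed. Your Step~2 instead uses the correct, trivial direction, potential $\Rightarrow$ additivity as recorded in Eq.~\eqref{eq:additivity_potential_appC}, exactly as your closing remark notes.
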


\begin{proof}
Eq.~\eqref{eq:triple_violate_appC} contradicts the required cocycle identity in Eq.~\eqref{eq:cocycle_appC}.
Lemma~\ref{lem:additivity_implies_potential_appC} then implies that no such potential exists.
\end{proof}

Such triple violations are common for relative comparison models that operate by heuristic matching, discriminative classification, or preference scoring on observation pairs.
Without explicit constraints enforcing additivity, the resulting rule is non-conservative and therefore path-dependent.

\subsection{Scale Drift Under Temporal Resampling}
\label{app:inconsistency_drift}

We now formalize scale drift as dependence of accumulated score on how the same physical execution is segmented.

Given a trajectory $\tau=\{s_t\}_{t=0}^{T}$, define the cumulative progress under a pairwise evaluator as
\begin{equation}
P_E(\tau\mid c)\coloneqq \sum_{t=1}^{T}E_{\mathrm{pair}}(s_{t-1},s_t\mid c).
\label{eq:cum_pair_appC}
\end{equation}
If macro-consistency holds, Corollary~\ref{cor:telescoping_appC} implies that $P_E(\tau\mid c)$ depends only on endpoints and is invariant to temporal refinement.

When Eq.~\eqref{eq:cocycle_appC} fails, scale drift occurs even under a minimal refinement.

\begin{theorem}[Constructive scale drift from a single triple violation]
\label{thm:scale_drift_constructive_appC}
Fix $c\in\mathcal{C}$.
Suppose there exist $s_a,s_b,s_c\in\mathcal{S}$ such that Eq.~\eqref{eq:triple_violate_appC} holds.
Consider the length-one trajectory $\tau=(s_a,s_c)$ and its refinement $\tau'=(s_a,s_b,s_c)$.
Then
\begin{equation}
P_E(\tau\mid c)\neq P_E(\tau'\mid c).
\end{equation}
\end{theorem}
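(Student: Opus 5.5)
The plan is a direct computation from the definition of cumulative progress in Eq.~\eqref{eq:cum_pair_appC}, applied to the two trajectories in the statement. No appeal to Lemma~\ref{lem:additivity_implies_potential_appC} is needed; the triple violation already yields the result.

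First, evaluate $P_E$ on the unrefined trajectory $\tau=(s_a,s_c)$. Here $T=1$, so the sum in Eq.~\eqref{eq:cum_pair_appC} has the single term $t=1$, giving $P_E(\tau\mid c)=E_{\mathrm{pair}}(s_a,s_c\mid c)$.

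Second, evaluate $P_E$ on the refinement $\tau'=(s_a,s_b,s_c)$. Now $T=2$, and the sum has exactly two terms:
\[
P_E(\tau'\mid c)=E_{\mathrm{pair}}(s_a,s_b\mid c)+E_{\mathrm{pair}}(s_b,s_c\mid c).
\]
Third, compare the two expressions. The hypothesis Eq.~\eqref{eq:triple_violate_appC} states precisely that they are unequal, so $P_E(\tau\mid c)\neq P_E(\tau'\mid c)$.

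To connect this with Corollary~\ref{cor:telescoping_appC}, I would add a remark. The same computation shows that refinement invariance of $P_E$ on all length-one trajectories and their one-point insertions is equivalent to the cocycle identity Eq.~\eqref{eq:cocycle_appC}. A single failure of additivity therefore already produces scale drift, in contrast to the telescoping behavior of potential-based evaluators.

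The argument has no real obstacle. The only point needing care is that $P_E$ takes ordered consecutive pairs with the orientation of Eq.~\eqref{eq:pair_eval_appC}, so that the terms match the ordered triple in the hypothesis exactly. No symmetry or antisymmetry of $h$ is assumed or required.
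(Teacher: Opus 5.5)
Your proposal is correct and is essentially the paper's own proof: evaluate $P_E$ on $\tau$ (one term) and on $\tau'$ (two terms) directly from the definition of cumulative progress, then observe that the assumed triple violation says exactly that these two quantities differ. Your side remark is also correct: refinement invariance under one-point insertion, required for all triples, is precisely the cocycle identity.
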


\begin{proof}
By definition,
\[
P_E(\tau\mid c)=E_{\mathrm{pair}}(s_a,s_c\mid c),
\qquad
P_E(\tau'\mid c)=E_{\mathrm{pair}}(s_a,s_b\mid c)+E_{\mathrm{pair}}(s_b,s_c\mid c).
\]
These are unequal by Eq.~\eqref{eq:triple_violate_appC}.
\end{proof}

Theorem~\ref{thm:scale_drift_constructive_appC} shows that once additivity fails on a single triple, segmentation dependence is unavoidable.
Under higher-frequency resampling, the discrepancy can accumulate over many inserted intermediate states, producing systematic drift with the control rate.

\subsection{Physically Grounded Manipulation Cases}
\label{app:inconsistency_cases}

We list representative manipulation scenarios where the above failure modes arise in non-pathological ways.

\paragraph{Multi-view terminal grasps.}
In pick and lift tasks, success is typically defined by a stable grasp and an object height threshold.
There can exist multiple terminal success states that differ in wrist orientation or camera viewpoint while satisfying identical success predicates.
Such states are task-equivalent but yield different observations, so Proposition~\ref{prop:goal_similarity_equiv_violation_appC} implies that a goal-similarity evaluator can assign inconsistent terminal progress values.

\paragraph{Occlusion during insertion.}
In insertion tasks, physical progress correlates with insertion depth.
As the object enters a cavity, visual evidence of the inserted portion decreases due to occlusion.
An appearance-based score can therefore decrease even when physical progress increases, creating spurious regressions that reflect observation changes rather than execution errors.

\paragraph{Symmetry and multiple valid end configurations.}
Tasks involving symmetric objects or goal regions admit multiple end configurations that are equally valid under task semantics.
A similarity score anchored to a single reference observation effectively breaks this symmetry and induces different scores across equivalent solutions, violating task-state invariance.

\subsection{Implications}
\label{app:inconsistency_discussion}

Lemma~\ref{lem:additivity_implies_potential_appC} identifies macro-consistency with the existence of a globally defined potential on task states.
Goal-similarity scores are not guaranteed to be well-defined on task-equivalence classes and therefore cannot reliably serve as progress coordinates.
Pairwise relative evaluators generally fail the cocycle identity and become path-dependent, which yields scale drift under temporal resampling by Theorem~\ref{thm:scale_drift_constructive_appC}.
These observations motivate evaluators that explicitly induce a globally consistent progress potential, as required by the axiomatic framework in Sec.~\ref{subsec:theory}.

\newpage

\section{Macro-Consistency of PRM-as-a-Judge}
\label{app:macro_consistency_proof}

Appendix~\ref{app:inconsistency_proof} shows that many similarity- and relative-comparison judges fail macro-consistency because their pairwise scores do not reduce to a single global progress coordinate. Under the proposed potential-based formulation, PRM-as-a-Judge is macro-consistent by construction: it assigns each state (or information state) a scalar progress value and defines pairwise increments as differences of that value. This section makes the argument explicit.

\subsection{Macro-Consistency as an Additivity Axiom}
Fix a task context $c\in\mathcal{C}$. Let $\Delta(\cdot,\cdot\mid c):\mathcal{X}\times\mathcal{X}\to\mathbb{R}$ denote the progress increment assigned to an ordered state pair. The macro-consistency axiom in Sec.~\ref{subsec:theory} requires that for any triple of states,
\begin{equation}
\Delta(x_i,x_k\mid c)
=
\Delta(x_i,x_j\mid c)+\Delta(x_j,x_k\mid c),
\qquad \forall x_i,x_j,x_k\in\mathcal{X}.
\label{eq:mc_additivity}
\end{equation}
Intuitively, the total progress from $x_i$ to $x_k$ must be independent of how we split the transition.

\subsection{Additivity Implies the Existence of a Global Potential}
\begin{theorem}[Additivity is equivalent to a potential-difference form]
\label{thm:mc_iff_potential}
Fix $c\in\mathcal{C}$. The additivity condition in Eq.~\eqref{eq:mc_additivity} holds for all triples if and only if there exists a scalar potential function $\Phi(\cdot\mid c):\mathcal{X}\to\mathbb{R}$ such that
\begin{equation}
\Delta(x_i,x_j\mid c)=\Phi(x_j\mid c)-\Phi(x_i\mid c),
\qquad \forall x_i,x_j\in\mathcal{X}.
\label{eq:potential_form_app}
\end{equation}
Moreover, $\Phi(\cdot\mid c)$ is unique up to an additive constant.
\end{theorem}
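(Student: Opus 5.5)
The plan is to prove the two directions separately and then establish uniqueness, following the same anchor-state construction used in Lemma~\ref{lem:additivity_implies_potential_appC}, now on the information-state space $\mathcal{X}$ instead of $\mathcal{S}$.

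\emph{Sufficiency.} Assume $\Delta(x_i,x_j\mid c)=\Phi(x_j\mid c)-\Phi(x_i\mid c)$ for some scalar $\Phi(\cdot\mid c)$. For any triple, the right-hand side of Eq.~\eqref{eq:mc_additivity} becomes $[\Phi(x_j)-\Phi(x_i)]+[\Phi(x_k)-\Phi(x_j)]$. This telescopes to $\Phi(x_k)-\Phi(x_i)=\Delta(x_i,x_k\mid c)$, so additivity holds.

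\emph{Necessity.} Assume Eq.~\eqref{eq:mc_additivity} holds for all triples. First, taking $x_i=x_j=x_k$ gives $\Delta(x,x\mid c)=0$. This is not strictly needed, but it confirms the construction is consistent. Next, fix any anchor $x_\star\in\mathcal{X}$ and define $\Phi(x\mid c)\coloneqq\Delta(x_\star,x\mid c)$. Applying additivity to the triple $(x_\star,x_i,x_j)$ gives $\Delta(x_\star,x_j)=\Delta(x_\star,x_i)+\Delta(x_i,x_j)$. Rearranging yields exactly Eq.~\eqref{eq:potential_form_app}.

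\emph{Uniqueness.} Suppose $\Phi$ and $\Phi'$ both satisfy Eq.~\eqref{eq:potential_form_app}. Then $\Phi(x_j)-\Phi(x_i)=\Phi'(x_j)-\Phi'(x_i)$ for all $x_i,x_j$. Fixing $x_i=x_\star$ shows $\Phi-\Phi'\equiv\Phi(x_\star)-\Phi'(x_\star)$, which is a constant. Conversely, adding any constant to $\Phi$ preserves all differences, so the potential is unique exactly up to an additive constant.

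\emph{Main subtlety.} The only real point of care is that $\mathcal{X}$ must be nonempty so an anchor exists; the statement is vacuous otherwise. Beyond that, the argument is purely algebraic. Also, the theorem only gives $\Phi:\mathcal{X}\to\mathbb{R}$, not $[0,1]$. Any normalization to $[0,1]$, as used by PRM judges, is a separate modeling choice and is permitted by the additive-constant freedom only when $\Delta$ is bounded. I would mention this in a remark rather than fold it into the proof.
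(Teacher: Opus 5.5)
Your proposal is correct and follows the paper's proof essentially step for step: telescoping for sufficiency, the anchor construction $\Phi(x\mid c)\coloneqq\Delta(x_\star,x\mid c)$ applied to the triple $(x_\star,x_i,x_j)$ for necessity, and fixing a reference state to show that any two potentials differ by a constant. One small sharpening of your closing remark: an additive shift alone places $\Phi(\cdot\mid c)$ in $[0,1]$ exactly when $\sup_{x_i,x_j}\Delta(x_i,x_j\mid c)\le 1$, so boundedness of $\Delta$ is necessary but not sufficient without also rescaling, and rescaling changes $\Delta$ itself.
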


\begin{proof}
We prove both directions.

\paragraph{Sufficiency.}
Assume Eq.~\eqref{eq:potential_form_app} holds. Then for any $x_i,x_j,x_k$,
\[
\Delta(x_i,x_k\mid c)
=
\Phi(x_k\mid c)-\Phi(x_i\mid c)
=
\bigl(\Phi(x_j\mid c)-\Phi(x_i\mid c)\bigr)+\bigl(\Phi(x_k\mid c)-\Phi(x_j\mid c)\bigr)
=
\Delta(x_i,x_j\mid c)+\Delta(x_j,x_k\mid c),
\]
which is Eq.~\eqref{eq:mc_additivity}.

\paragraph{Necessity.}
Assume Eq.~\eqref{eq:mc_additivity} holds. Fix an arbitrary reference state $x_{\mathrm{ref}}\in\mathcal{X}$ and define
\begin{equation}
\Phi(x\mid c)\coloneqq \Delta(x_{\mathrm{ref}},x\mid c).
\label{eq:phi_from_delta}
\end{equation}
Apply Eq.~\eqref{eq:mc_additivity} to the triple $(x_{\mathrm{ref}},x_i,x_j)$:
\[
\Delta(x_{\mathrm{ref}},x_j\mid c)
=
\Delta(x_{\mathrm{ref}},x_i\mid c)+\Delta(x_i,x_j\mid c).
\]
Rearranging yields
\[
\Delta(x_i,x_j\mid c)=\Delta(x_{\mathrm{ref}},x_j\mid c)-\Delta(x_{\mathrm{ref}},x_i\mid c)=\Phi(x_j\mid c)-\Phi(x_i\mid c),
\]
which is Eq.~\eqref{eq:potential_form_app}.

\paragraph{Uniqueness up to a constant.}
If $\Phi$ and $\Psi$ both satisfy Eq.~\eqref{eq:potential_form_app}, then for any $x$,
\[
\Phi(x\mid c)-\Psi(x\mid c)=\Phi(x_{\mathrm{ref}}\mid c)-\Psi(x_{\mathrm{ref}}\mid c),
\]
so $\Phi(\cdot\mid c)$ and $\Psi(\cdot\mid c)$ differ by a constant.
\end{proof}

\paragraph{Remark.}
Theorem~\ref{thm:mc_iff_potential} is stated in full generality: the potential
$\Phi(\cdot\mid c)$ may take values in $\mathbb{R}$, since additivity only requires
a scalar potential-difference form. In the main paper, however, the OPD metrics are
instantiated on a normalized task progress potential in $[0,1]$, so that the resulting
outcome-, process-, and diagnosis-level quantities are bounded and directly interpretable.

\subsection{PRM-as-a-Judge is Macro-Consistent by Construction}
In this work, PRM-as-a-Judge first produces a single scalar progress score for each input under context $c$. At the theoretical level, this score may be viewed as a real-valued potential. For OPD evaluation, however, we use its normalized form as the task-conditioned
progress potential, denoted by
\begin{equation}
\Phi_\theta(\,\cdot\mid c): \mathcal{X}\to [0,1].
\end{equation}
Here $\mathcal{X}$ is the domain on which the judge is single-valued. In fully observed settings, one may take $\mathcal{X}=\mathcal{S}$. Under partial observability, one may take $\mathcal{X}$ to be an information state that is sufficient for judging progress, for example $x_t=(o_{0:t},a_{0:t-1})$.

We then define the induced increment functional by the difference of the normalized
judge outputs:
\begin{equation}
\Delta_\theta(x_i,x_j\mid c)\coloneqq \Phi_\theta(x_j\mid c)-\Phi_\theta(x_i\mid c).
\label{eq:delta_theta_def}
\end{equation}

\begin{corollary}[Macro-consistency of PRM-as-a-Judge]
\label{cor:prm_macro_consistent}
For any fixed $c$, $\Delta_\theta(\cdot,\cdot\mid c)$ in Eq.~\eqref{eq:delta_theta_def} satisfies the additivity axiom in Eq.~\eqref{eq:mc_additivity} for all triples in $\mathcal{X}$.
\end{corollary}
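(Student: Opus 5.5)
The plan is to obtain Corollary~\ref{cor:prm_macro_consistent} as an immediate instance of the sufficiency direction of Theorem~\ref{thm:mc_iff_potential}, with the normalized judge output $\Phi_\theta(\cdot\mid c)$ playing the role of the potential. The only hypothesis of that direction is that the increment functional has the potential-difference form of Eq.~\eqref{eq:potential_form_app} for some scalar function on $\mathcal{X}$. Eq.~\eqref{eq:delta_theta_def} states exactly this, with $\Phi=\Phi_\theta$.

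The steps run in this order. First, fix $c$ and note that $\Phi_\theta(\cdot\mid c)$ is a genuine function on $\mathcal{X}$, that is, single-valued. This is guaranteed by the choice of $\mathcal{X}$ as the domain on which the judge is single-valued: the full state in the fully observed case, or an information state such as $x_t=(o_{0:t},a_{0:t-1})$ under partial observability. Second, observe that $[0,1]\subset\mathbb{R}$, so $\Phi_\theta(\cdot\mid c)$ qualifies as a real-valued potential in the sense of Theorem~\ref{thm:mc_iff_potential}; the remark after that theorem already notes that normalization does not interfere. Third, for arbitrary $x_i,x_j,x_k\in\mathcal{X}$, write out the telescoping identity
\[
\Phi_\theta(x_k\mid c)-\Phi_\theta(x_i\mid c)=\bigl(\Phi_\theta(x_j\mid c)-\Phi_\theta(x_i\mid c)\bigr)+\bigl(\Phi_\theta(x_k\mid c)-\Phi_\theta(x_j\mid c)\bigr).
\]
Substituting Eq.~\eqref{eq:delta_theta_def} turns this into Eq.~\eqref{eq:mc_additivity}. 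Alternatively, simply cite sufficiency in Theorem~\ref{thm:mc_iff_potential}.

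As a by-product, Corollary~\ref{cor:telescoping_appC} (read with $\mathcal{X}$ in place of $\mathcal{S}$) shows that summed stepwise increments along any trajectory equal $\Phi_\theta(x_T\mid c)-\Phi_\theta(x_0\mid c)$ and are invariant under temporal refinement. This is the operational meaning of macro-consistency used by the OPD metrics.

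The algebra itself poses no difficulty. The one point needing care is well-definedness: additivity must hold on the same domain where $\Phi_\theta$ is single-valued. If the judge's output depended on extra inputs not encoded in $x$, such as resampled context frames or stochastic decoding, then $\Phi_\theta$ would not be a function of $x$ and the identity could fail. I would handle this by stating explicitly that $c$, any reference anchors, and the judge's inference procedure are held fixed and deterministic, and by absorbing any history dependence into the information state $x$. With that convention in place, the corollary holds exactly, with no approximation.
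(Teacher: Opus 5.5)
Your proposal is correct and matches the paper's proof. The paper likewise notes that Eq.~\eqref{eq:delta_theta_def} is exactly the potential-difference form with $\Phi=\Phi_\theta$, then applies the sufficiency direction of Theorem~\ref{thm:mc_iff_potential}; your well-definedness remark just spells out the paper's stipulation that $\mathcal{X}$ is the domain on which the judge is single-valued.
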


\begin{proof}
Eq.~\eqref{eq:delta_theta_def} is exactly the potential-difference form in Eq.~\eqref{eq:potential_form_app} with potential $\Phi=\Phi_\theta$. Theorem~\ref{thm:mc_iff_potential} then implies additivity.
\end{proof}

\subsection{Temporal Resampling Invariance via Telescoping}
Let a trajectory be $\tau=(x_0,x_1,\dots,x_T)$ in $\mathcal{X}$. Define accumulated progress by summing local increments:
\begin{equation}
P_\theta(\tau\mid c)\coloneqq \sum_{t=1}^{T}\Delta_\theta(x_{t-1},x_t\mid c).
\label{eq:accumulated_progress}
\end{equation}

\begin{theorem}[Telescoping and invariance to segmentation]
\label{thm:telescoping_prm}
For any $\tau$, $P_\theta(\tau\mid c)=\Phi_\theta(x_T\mid c)-\Phi_\theta(x_0\mid c)$. Consequently, $P_\theta$ is invariant to how the same execution is temporally segmented, and unchanged under temporal refinement obtained by inserting intermediate states along the same rollout.
\end{theorem}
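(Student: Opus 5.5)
The plan is a direct telescoping computation followed by a short argument that refinement cannot change the endpoints. First, substitute the definition of $\Delta_\theta$ from Eq.~\eqref{eq:delta_theta_def} into the accumulated progress in Eq.~\eqref{eq:accumulated_progress}. This gives
\[
P_\theta(\tau\mid c)=\sum_{t=1}^{T}\bigl(\Phi_\theta(x_t\mid c)-\Phi_\theta(x_{t-1}\mid c)\bigr).
\]
I will then reindex the two resulting sums. The first runs over $x_1,\dots,x_T$ and the second over $x_0,\dots,x_{T-1}$, so every interior term $\Phi_\theta(x_t\mid c)$ with $1\le t\le T-1$ appears once with each sign and cancels. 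Only $\Phi_\theta(x_T\mid c)-\Phi_\theta(x_0\mid c)$ survives. Formally this is a one-line induction on $T$: the case $T=1$ is the definition itself, and the step from $T$ to $T+1$ adds $\Phi_\theta(x_{T+1}\mid c)-\Phi_\theta(x_T\mid c)$. Equivalently, the identity follows from Corollary~\ref{cor:prm_macro_consistent} applied repeatedly, since additivity on triples iterates to additivity along chains. It is also a special case of Corollary~\ref{cor:telescoping_appC} with $E=\Delta_\theta$.

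For the invariance claim, I will define a segmentation or refinement of the same execution as any finite sequence $\tau'=(x'_0,\dots,x'_{T'})$ in $\mathcal{X}$ with $x'_0=x_0$ and $x'_{T'}=x_T$. Inserting intermediate states along the rollout, or grouping consecutive steps into coarser segments, is exactly such an operation. Applying the telescoping identity to $\tau'$ gives $P_\theta(\tau'\mid c)=\Phi_\theta(x_T\mid c)-\Phi_\theta(x_0\mid c)=P_\theta(\tau\mid c)$. Coarsening works the same way, because a sum of increments over a block collapses to the increment between the block's endpoints.

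The only step needing care, which I take to be the main obstacle (a modest one), is checking that $\Phi_\theta(\cdot\mid c)$ is genuinely single-valued on $\mathcal{X}$. The identity requires that the value assigned to a shared endpoint or an inserted state not depend on which segment it is evaluated in. For this I will appeal to the standing choice of $\mathcal{X}$ as the domain on which the judge is single-valued: either the full state, or the information state $x_t=(o_{0:t},a_{0:t-1})$ under partial observability. Under that choice the inserted states carry well-defined potentials, and the cancellation is exact.
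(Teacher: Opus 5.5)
Your proposal is correct and follows the paper's proof. It substitutes the potential-difference form of $\Delta_\theta$ into $P_\theta$, telescopes the interior terms away, and notes that inserted intermediate states only add further cancelling pairs, so the sum equals $\Phi_\theta(x_T\mid c)-\Phi_\theta(x_0\mid c)$. Your single-valuedness concern is automatic, since $\Phi_\theta(\cdot\mid c)$ is a function on $\mathcal{X}$; the only real assumption, which your definition of refinement states explicitly and the paper leaves tacit, is that the refined sequence shares its endpoints with $\tau$ as elements of $\mathcal{X}$.
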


\begin{proof}
Substituting Eq.~\eqref{eq:delta_theta_def} into Eq.~\eqref{eq:accumulated_progress} gives
\[
P_\theta(\tau\mid c)
=
\sum_{t=1}^{T}\bigl(\Phi_\theta(x_t\mid c)-\Phi_\theta(x_{t-1}\mid c)\bigr)
=
\Phi_\theta(x_T\mid c)-\Phi_\theta(x_0\mid c),
\]
by telescoping cancellation. Inserting intermediate states only adds terms that cancel in the same manner.
\end{proof}

\paragraph{Remark.}
The conclusion is structural: any judge that outputs a single absolute score $\Phi_\theta(x\mid c)$ induces an additive increment by differencing. This guarantees macro-consistency of the induced increments, but it does not by itself guarantee that $\Phi_\theta$ is a faithful representation of physical progress. That aspect is evaluated empirically in the main experiments.

\newpage

\section{RoboPulse Data Composition}
\label{app:robopulse_data}

RoboPulse consists of 1,800 pairwise progress judgment cases constructed from trajectories across 7 different data sources.
These sources include real-world robot teleoperation~\cite{agibot,RoboBrainX0,ji2025robobrain,tan2026robobrain,team2025robobrain}, simulation rollouts~\cite{libero,robocasa,robotwin20}, UMI-based data collection, and egocentric human demonstrations~\cite{egodex,ji2025visualtrans,wang2025towards}.
The cases span diverse robot embodiments, sensing configurations, and manipulation tasks.
Tab.~\ref{tab:embodiment_stats} summarizes their distribution across these categories.

\begin{table}[!ht]
    \centering
    \caption{\textbf{Distribution of RoboPulse cases by robot embodiment.} RoboPulse spans diverse robot hardware, from industrial robots to humanoids.}
    \label{tab:embodiment_stats}
    \resizebox{0.95\linewidth}{!}{%
    \begin{tabular}{l|l|r}
        \toprule
        \textbf{Robot Embodiment} & \textbf{Primary Data Sources} & \textbf{\# Cases} \\
        \midrule
        Franka Emika Panda & DROID~\cite{khazatsky2024droid}, LIBERO~\cite{libero}, RoboCasa~\cite{robocasa} & 600 \\
        AGIBot-A2D & AGIBot-World~\cite{agibot} & 200 \\
        Agilex Piper & RoboBrain-X~\cite{RoboBrainX0}, RoboTwin~\cite{robotwin20} & 400\\
        Galaxea R1Lite & RoboBrain-X~\cite{RoboBrainX0} & 200 \\
        Pika & RoboBrain-X~\cite{RoboBrainX0} & 200 \\
        Human & Egodex~\cite{egodex} & 200 \\
        \bottomrule
    \end{tabular}
    }
\end{table}


\section{Normalization and Sampling Protocol}
\label{app:norm_sampling}

To prevent the benchmark from being dominated by trivial frame transitions or biased by specific data collection frequencies, we implement a three-stage construction pipeline. This process transforms raw heterogeneous trajectories into a standardized set of evaluation pairs, ensuring balanced coverage across both physical progress magnitude and temporal duration.

\subsection{Dense Progress Discretization}
We construct a dense benchmark progress signal from curated expert demonstrations.
During annotation, we first segment each episode into semantically coherent phases using manually selected key frames.
We retain only phases in which task progress is monotonic for the given task context.
Intervals that do not support a stable signed progress label are excluded, including near-static intervals with visually negligible task-relevant change, oscillatory intervals with back-and-forth motion but no net advancement, and cases where annotators cannot assign a reliable progress direction under the task context.

Given the retained phases, raw multi-view video trajectories are partitioned using human-annotated keyframes $\{K_0, K_1, \dots, K_N\}$, delimiting the task from start ($K_0$) to completion ($K_N$).
To generate dense benchmark labels, we apply adaptive interpolation within each retained semantic phase.
Given a trajectory length $L$ and a target density chunk size $C = 30$, we calculate the number of sampling points $m$ for each segment $[K_j, K_{j+1}]$ as:
\begin{equation}
m = \left\lfloor \frac{1}{N} \left\lfloor \frac{L}{C} \right\rfloor \right\rfloor.
\end{equation}
This procedure yields a discrete state sequence $\{x_0, x_1, \dots, x_M\}$, where each $x_i$ contains synchronized observations. We assign a linear scalar potential to each state, serving as the benchmark progress coordinate for pair construction:
\begin{equation}
\Phi(x_i) = \frac{i}{M}, \quad \Phi(x_i) \in [0, 1].
\end{equation}

\subsection{Context-Aware Relative Normalization}

A naive approach to labeling progress is to compute the absolute potential difference $\Delta \Phi = \Phi(x_q) - \Phi(x_p)$. However, this formulation suffers from a severe \textit{\textbf{distributional imbalance}}. In dense robotic trajectories, the vast majority of frame pairs exhibit only minute state changes, while large progress leaps are statistically rare. This long-tailed distribution biases the dataset toward near-zero labels, making it difficult to evaluate the model's ability to distinguish significant milestones.

To rectify this skew and ensure a uniform distribution of labels across the $[-1, 1]$ range, we adopt a \textit{\textbf{Hop-based formulation}}. This formulation rescales local state changes according to the current stage of execution, making progress judgments more comparable across the trajectory and preserving sensitivity to small but task-relevant changes near completion.

For a given pair of states, i.e., a \textit{pre-state} $x_p$ and a \textit{post-state} $x_q$, we define the normalized hop score $\mathcal{H}(x_p, x_q)$ based on the direction of the state evolution: Forward Progress and Backward Regression.

\begin{itemize}
    \item \textbf{Forward Progress.} When the agent advances toward the goal (i.e., $\Phi(x_q) \geq \Phi(x_p)$), the progress is normalized by the \textit{remaining potential} to the terminal state:
    \begin{equation}
    \label{eq:hop_progress}
    \mathcal{H}(x_p, x_q) = \frac{\Phi(x_q) - \Phi(x_p)}{\Phi(x_M) - \Phi(x_p)}.
    \end{equation}
    \textit{Interpretation:} The denominator measures how much progress remains from the pre-state. As a result, late-stage changes occupy a larger relative scale than the same absolute change would earlier in the trajectory. This helps preserve discrimination for small but task-critical adjustments near completion, such as final alignment or insertion.

    \item \textbf{Backward Regression.} Conversely, when the execution degrades (i.e., $\Phi(x_q) < \Phi(x_p)$), the negative score is normalized by the \textit{accumulated potential} from the start:
    \begin{equation}
    \label{eq:hop_regress}
    \mathcal{H}(x_p, x_q) = \frac{\Phi(x_q) - \Phi(x_p)}{\Phi(x_p) - \Phi(x_0)}.
    \end{equation}
    \textit{Interpretation:} The denominator measures how much progress has already been accumulated by the pre-state. This yields a stage-aware normalization for backward transitions, so that regressions are compared relative to where they occur along the trajectory rather than only by absolute drop size.
\end{itemize}

\subsection{Dual-Variable Stratified Sampling}
The final dataset is constructed using a stratified sampling technique that operates on two dimensions simultaneously. This approach decouples the physical magnitude of a change from the time interval required to execute it.

\begin{itemize}
    \item \textbf{Magnitude Stratification ($N_{\text{hop}}$):} We categorize the continuous hop values $\mathcal{H}$ into three distinct scales based on their absolute magnitude $|\mathcal{H}|$. These bins are defined as \textit{Small} ($[0, 33.3\%]$), \textit{Medium} ($(33.3\%, 66.7\%]$), and \textit{Large} ($(66.7\%, 100\%]$). This stratification ensures that the benchmark systematically evaluates the model's sensitivity across different granularities, ranging from fine-grained nudges to major state transitions.
    
    \item \textbf{Temporal Decoupling ($N_{\text{dis}}$):} Within each magnitude bucket, we vary the frame distance $\Delta t$ between $s_p$ and $s_q$. By sampling across different time horizons, we prevent evaluators from relying on temporal shortcuts (e.g., assuming large time gaps always imply large progress).
\end{itemize}

The resulting dataset comprises 1,800 pairs distributed across these dimensions, offering comprehensive coverage with {748} Small cases, {500} Medium cases, and {552} Large cases for final {{RoboPulse}} benchmark.

\clearpage


\section{More Results on RoboPulse}
\label{more_results_on_robopulse}

\begin{table*}[!t]
\centering
\caption{\textbf{Comparison of discriminative similarity-based methods and progress reward model judges under different noise levels.} 
We report the performance across Real-World, Simulation, UMI, and Human settings, along with the average (AVG) for Noise levels of 0, 0.05, and 0.10.}
\label{tab:noise_robustness}
\resizebox{\linewidth}{!}{%
\begin{tabular}{l|ccccc|ccccc|ccccc}
\toprule
\multirow{2}{*}{\textbf{Method}} 
& \multicolumn{5}{c|}{\textbf{Noise = 0}} 
& \multicolumn{5}{c|}{\textbf{Noise = 0.05}} 
& \multicolumn{5}{c}{\textbf{Noise = 0.10}} \\
\cmidrule{2-16}
& Real. & Sim. & UMI & Human & AVG
& Real. & Sim. & UMI & Human & AVG
& Real. & Sim. & UMI & Human & AVG \\
\midrule

\multicolumn{16}{l}{\cellcolor{gray!12}\textbf{Discriminative Similarity-Based Methods}} \\
\midrule
CLIP ViT-B/32 (I2I) & 0.57 & 0.62 & 0.58 & 0.53 & 0.57 & 0.57 & 0.63 & 0.62 & 0.59 & 0.60 & 0.53 & 0.63 & 0.59 & 0.53 & 0.57 \\
CLIP ViT-L/14 (I2I) & 0.59 & 0.64 & 0.58 & 0.56 & 0.59 & 0.56 & 0.62 & 0.64 & 0.51 & 0.58 & 0.56 & 0.57 & 0.63 & 0.54 & 0.58 \\
CLIP ViT-B/32 (T2I) & 0.46 & 0.46 & 0.51 & 0.46 & 0.47 & 0.47 & 0.46 & 0.53 & 0.49 & 0.49 & 0.47 & 0.47 & 0.53 & 0.54 & 0.50 \\
CLIP ViT-L/14 (T2I) & 0.48 & 0.44 & 0.42 & 0.52 & 0.46 & 0.48 & 0.44 & 0.43 & 0.49 & 0.46 & 0.49 & 0.47 & 0.49 & 0.51 & 0.49 \\

\midrule
\multicolumn{16}{l}{\cellcolor{gray!12}\textbf{Progress Reward Model Judges}} \\
\midrule
VLAC & 0.68 & 0.71 & 0.72 & 0.70 & 0.71 & 0.71 & 0.70 & 0.70 & 0.63 & 0.69 & 0.63 & 0.64 & 0.72 & 0.65 & 0.66 \\
RoboReward & 0.16 & 0.22 & 0.19 & 0.23 & 0.20 & 0.18 & 0.23 & 0.25 & 0.25 & 0.23 & 0.16 & 0.17 & 0.13 & 0.17 & 0.16 \\
Robo-Dopamine & 0.74 & 0.89 & 0.88 & 0.81 & 0.83 & 0.73 & 0.88 & 0.87 & 0.76 & 0.81 & 0.70 & 0.84 & 0.78 & 0.68 & 0.75 \\

\bottomrule
\end{tabular}%
}
\end{table*}

\subsection{Robustness to Visual Perturbations}
\label{sec:robustness_noise}

Real-world robotic feedback often comes with imperfect visual observations (e.g., sensor noise, compression artifacts, motion blur, or illumination changes). To evaluate robustness under such perturbations, we conduct controlled noise injection experiments by corrupting the input frames with additive Gaussian noise before feeding them into each scorer.

\textbf{Noise injection protocol.}
Given an RGB image $I \in [0,1]^{H \times W \times 3}$, we sample i.i.d. Gaussian noise $\epsilon \sim \mathcal{N}(0, \mathbf{I})$ and construct a corrupted image
\begin{equation}
\tilde{I} = (1-\alpha) I + \alpha \epsilon,
\end{equation}
where $\alpha \in [0,1]$ denotes the noise level. This design yields an intuitive interpretation: $\alpha=0$ corresponds to the original image (no perturbation), while larger $\alpha$ progressively increases corruption; at $\alpha=1$, the input becomes pure noise. In this section, we focus on two practical perturbation levels, $\alpha \in \{0.05, 0.10\}$, which represent mild and moderate observation noise. We apply the same corruption to all methods and all frames used by the evaluator, keeping every other evaluation setting unchanged.

\textbf{Discussion and analysis.}
Tab.~\ref{tab:noise_robustness} reports performance for discriminative baselines and reward models under Noise$=0$, $0.05$, and $0.10$ across Real-World, Simulation, UMI, and Human settings (with AVG denoting the mean over the four categories). Overall, we observe that modern reward models remain comparatively robust under mild noise, while purely discriminative similarity-based baselines vary more across settings.
Moreover, two trends emerge, as follow:
\vspace{-1em}
\begin{itemize}
    \item  \textit{\textbf{First, reward models are generally more noise-tolerant than discriminative CLIP-style baselines}}. For instance, Robo-Dopamine achieves the best overall performance at Noise$=0$ (AVG $0.83$) and exhibits a gradual degradation as noise increases (AVG $0.81$ at $0.05$ and $0.75$ at $0.10$), suggesting that the learned reward signal is not overly sensitive to small pixel-level perturbations. VLAC shows a similar pattern, with a modest drop from AVG $0.71$ (Noise$=0$) to $0.69$ (Noise$=0.05$) and $0.66$ (Noise$=0.10$), indicating stable behavior under mild-to-moderate corruption.

    \item \textit{\textbf{Second, the impact of noise is heterogeneous across evaluation settings}}. Under Noise$=0.10$, Robo-Dopamine degrades most notably in Human and UMI (Human: $0.81 \rightarrow 0.68$, UMI: $0.88 \rightarrow 0.78$), consistent with these settings being visually diverse and potentially requiring finer-grained cues that are more vulnerable to corruption. In contrast, the Simulation setting remains relatively resilient (Sim: $0.89 \rightarrow 0.84$), likely due to cleaner rendering and reduced background complexity.
\end{itemize}
\vspace{-1em}
For discriminative similarity-based methods, the behavior is less consistent. Image-to-image CLIP variants can remain competitive under Noise$=0.05$ (e.g., \textit{CLIP ViT-B/32 (I2I)} improves from AVG $0.57$ to $0.60$), but these gains do not persist at higher noise (AVG returns to $0.57$ at Noise$=0.10$). Text-to-image CLIP baselines are consistently weaker and show limited sensitivity to noise, suggesting that their main bottleneck lies more in semantic alignment with task descriptions than in pixel-level perturbations.
Finally, RoboReward performs poorly across all noise levels (AVG $0.20 \rightarrow 0.23 \rightarrow 0.16$). This pattern suggests that its weakness is not primarily due to visual corruption. Instead, its coarse discrete reward space appears fundamentally mismatched to the fine-grained pairwise progress judgment required by RoboPulse, since large quantization steps make subtle but task-relevant state changes difficult to distinguish reliably.
These results demonstrate that our strongest reward-model evaluator (Robo-Dopamine) maintains high accuracy under realistic noise levels ($0.05$ and $0.10$), and that learned reward models with sufficiently fine progress resolution are preferable when robustness to visual corruption is required.

\subsection{Fine-Grained Analysis of RoboPulse}
\label{sec:detail_gran_eval}

\begin{table*}[!t]
\centering
\caption{Performance comparison on the \textbf{Small} scale datasets.}
\label{tab:robopulse_small}
\resizebox{\linewidth}{!}{%
\begin{tabular}{l|cccc|ccc|c|c}
\toprule
& \multicolumn{4}{c|}{Real-World} & \multicolumn{3}{c|}{Simulation} & UMI & Human \\
\cmidrule(lr){2-5} \cmidrule(lr){6-8} \cmidrule(lr){9-9} \cmidrule(lr){10-10}
\textbf{Method} 
& Agibot-World & Agilex & Droid & Galaxea R1Lite & Libero & RoboCasa & RoboTwin2.0 & Pika & Egodex \\
\midrule

\multicolumn{10}{l}{\cellcolor{gray!12}\textbf{Discriminative Similarity-Based Methods}} \\
CLIP ViT-B/32 (I2I) & 0.54 & 0.53 & 0.75 & 0.45 & 0.55 & 0.41 & 0.61 & 0.58 & 0.58 \\
CLIP ViT-L/14 (I2I) & 0.55 & 0.53 & 0.52 & 0.47 & 0.59 & 0.54 & 0.53 & 0.55 & 0.56 \\
CLIP ViT-B/32 (T2I) & 0.50 & 0.47 & 0.55 & 0.42 & 0.54 & 0.44 & 0.50 & 0.56 & 0.49 \\
CLIP ViT-L/14 (T2I) & 0.54 & 0.46 & 0.55 & 0.47 & 0.45 & 0.50 & 0.44 & 0.48 & 0.52 \\

\midrule
\multicolumn{10}{l}{\cellcolor{gray!12}\textbf{General Foundation-Model Judges}} \\
Gemini 3 Pro Preview & 0.56 & 0.53 & 0.60 & 0.51 & 0.69 & 0.56 & 0.61 & 0.43 & 0.56 \\
GPT-5.2 & 0.45 & 0.48 & 0.48 & 0.42 & 0.45 & 0.47 & 0.47 & 0.47 & 0.49 \\
Qwen3-VL-4B-Instruct & 0.53 & 0.45 & 0.55 & 0.37 & 0.50 & 0.46 & 0.50 & 0.34 & 0.53 \\
Qwen3-VL-8B-Instruct & 0.53 & 0.47 & 0.54 & 0.42 & 0.57 & 0.51 & 0.45 & 0.44 & 0.47 \\

\midrule
\multicolumn{10}{l}{\cellcolor{gray!12}\textbf{Progress Reward Model Judges}} \\
VLAC & 0.64 & 0.51 & 0.76 & 0.52 & 0.72 & 0.47 & 0.66 & 0.66 & 0.57 \\
GVL & 0.73 & 0.58 & 0.65 & 0.49 & 0.66 & 0.63 & 0.73 & 0.58 & 0.67 \\
RoboReward & 0.06 & 0.06 & 0.13 & 0.08 & 0.13 & 0.08 & 0.14 & 0.10 & 0.10 \\
Robo-Dopamine & 0.89 & 0.74 & 0.67 & 0.68 & 0.99 & 1.00 & 0.98 & 0.97 & 0.86 \\

\bottomrule
\end{tabular}%
}
\end{table*}

In this section, we provide a granular analysis of the RoboPulse benchmark by decomposing the pairwise progress-judgment task into three difficulty levels: \textbf{Small}, \textbf{Medium}, and \textbf{Large} time intervals (hops). This breakdown allows us to evaluate the sensitivity of different scorers to varying degrees of visual state changes. The detailed results per dataset are presented in Tab.~\ref{tab:robopulse_small} (Small), Tab.~\ref{tab:robopulse_medium} (Medium), and Tab.~\ref{tab:robopulse_large} (Large).

\textbf{Fine-grained sensitivity (Small Scale).} 
Tab.~\ref{tab:robopulse_small} presents the results for small temporal hops, representing the most challenging setting where visual changes between frames are subtle. In this regime, general-purpose foundation models and discriminative baselines struggle significantly. For instance, large VLMs like \textit{GPT-5.2} and \textit{Qwen3-VL} hover near random chance (avg $\sim0.47-0.50$), indicating they lack the fine-grained understanding of low-level kinematics required to detect immediate progress. 
In contrast, specialized reward models demonstrate superior sensitivity. \textit{Robo-Dopamine} achieves remarkable accuracy even on these short intervals, reaching near-perfect scores in simulation environments (e.g., $0.99$ in \textit{Libero} and $1.00$ in \textit{RoboCasa}) and maintaining strong performance in real-world settings ($0.89$ in \textit{Agibot-World}). This suggests that contrastive learning on dense video data equips the model with a precise understanding of micro-progressions that semantic discriminators miss.

\textbf{Semantic progress understanding (Large Scale).}
As we increase the temporal interval to the \textbf{Large} scale (Tab.~\ref{tab:robopulse_large}), the visual disparity between the starting and ending frames becomes more pronounced, often reflecting the completion of a sub-goal. Consequently, general foundation-model judges show substantial improvement. \textit{Gemini 3 Pro Preview}, for example, improves its performance on \textit{Libero} from $0.69$ (Small) to $0.90$ (Large). This trend confirms that while VLMs may miss fine-grained dynamics, they are capable of identifying high-level semantic state changes (e.g., ``door closed'' vs. ``door open''). However, discriminative similarity-based methods like \textit{CLIP (T2I)} remain ineffective, reinforcing that static text-image alignment is insufficient for capturing temporal progress regardless of the scale.

\textbf{Domain-specific observations.}
Across all three scales, we observe a performance gap between Simulation and Real-World/Human data. Simulation environments (e.g., \textit{Libero}, \textit{RoboCasa}) generally yield higher accuracy for top-performing models due to cleaner visual renderings and consistent lighting. Real-world datasets (e.g., \textit{Droid}, \textit{Agibot}) introduce complexity via lighting variations and camera noise. Notably, \textit{Robo-Dopamine} exhibits the strongest domain robustness, maintaining high accuracy on the challenging \textit{Egodex} dataset ($0.93$ at Large scale), whereas \textit{VLAC} and \textit{GVL} see a performance dip in these diverse human-centric domains.

\textbf{Failure modes.}
Consistent with the main results, \textit{RoboReward} displays extremely low accuracy ($<0.25$) across most sub-tasks and scales. This consistently poor performance suggests that its coarse discrete reward space is poorly matched to the fine-grained progress discrimination required by RoboPulse. In particular, large quantization steps make subtle but task-relevant state changes difficult to distinguish reliably, especially at smaller comparison scales.

In summary, while general VLMs become competitive for judging long-term progress, specialized reward models such as \textit{Robo-Dopamine} remain more reliable when fine-grained discrimination and short-horizon feedback are required.

\begin{table*}[!t]
\centering
\caption{Performance comparison on the \textbf{Medium} scale datasets.}
\label{tab:robopulse_medium}
\resizebox{\linewidth}{!}{%
\begin{tabular}{l|cccc|ccc|c|c}
\toprule
& \multicolumn{4}{c|}{Real-World} & \multicolumn{3}{c|}{Simulation} & UMI & Human \\
\cmidrule(lr){2-5} \cmidrule(lr){6-8} \cmidrule(lr){9-9} \cmidrule(lr){10-10}
\textbf{Method} 
& Agibot-World & Agilex & Droid & Galaxea R1Lite & Libero & RoboCasa & RoboTwin2.0 & Pika & Egodex \\
\midrule

\multicolumn{10}{l}{\cellcolor{gray!12}\textbf{Discriminative Similarity-Based Methods}} \\
CLIP ViT-B/32 (I2I) & 0.57 & 0.49 & 0.57 & 0.62 & 0.69 & 0.59 & 0.66 & 0.52 & 0.50 \\
CLIP ViT-L/14 (I2I) & 0.65 & 0.55 & 0.54 & 0.73 & 0.81 & 0.50 & 0.66 & 0.50 & 0.59 \\
CLIP ViT-B/32 (T2I) & 0.51 & 0.47 & 0.57 & 0.48 & 0.44 & 0.60 & 0.40 & 0.35 & 0.34 \\
CLIP ViT-L/14 (T2I) & 0.49 & 0.53 & 0.46 & 0.48 & 0.33 & 0.45 & 0.47 & 0.35 & 0.52 \\

\midrule
\multicolumn{10}{l}{\cellcolor{gray!12}\textbf{General Foundation-Model Judges}} \\
Gemini 3 Pro Preview & 0.63 & 0.62 & 0.64 & 0.71 & 0.83 & 0.65 & 0.63 & 0.73 & 0.59 \\
GPT-5.2 & 0.49 & 0.64 & 0.51 & 0.69 & 0.68 & 0.47 & 0.56 & 0.54 & 0.34 \\
Qwen3-VL-4B-Instruct & 0.51 & 0.64 & 0.54 & 0.54 & 0.65 & 0.45 & 0.58 & 0.59 & 0.50 \\
Qwen3-VL-8B-Instruct & 0.53 & 0.65 & 0.54 & 0.73 & 0.78 & 0.52 & 0.66 & 0.61 & 0.41 \\

\midrule
\multicolumn{10}{l}{\cellcolor{gray!12}\textbf{Progress Reward Model Judges}} \\
VLAC & 0.53 & 0.55 & 0.92 & 0.63 & 0.89 & 0.60 & 0.77 & 0.70 & 0.75 \\
GVL & 0.74 & 0.75 & 0.66 & 0.69 & 0.83 & 0.58 & 0.72 & 0.75 & 0.69 \\
RoboReward & 0.08 & 0.18 & 0.25 & 0.08 & 0.28 & 0.07 & 0.30 & 0.19 & 0.16 \\
Robo-Dopamine & 0.98 & 1.00 & 0.63 & 0.96 & 1.00 & 1.00 & 0.97 & 0.98 & 0.94 \\

\bottomrule
\end{tabular}%
}
\end{table*}

\begin{table*}[!t]
\centering
\caption{Performance comparison on the \textbf{Large} scale datasets.}
\label{tab:robopulse_large}
\resizebox{\linewidth}{!}{%
\begin{tabular}{l|cccc|ccc|c|c}
\toprule
& \multicolumn{4}{c|}{Real-World} & \multicolumn{3}{c|}{Simulation} & UMI & Human \\
\cmidrule(lr){2-5} \cmidrule(lr){6-8} \cmidrule(lr){9-9} \cmidrule(lr){10-10}
\textbf{Method} 
& Agibot-World & Agilex & Droid & Galaxea R1Lite & Libero & RoboCasa & RoboTwin2.0 & Pika & Egodex \\
\midrule

\multicolumn{10}{l}{\cellcolor{gray!12}\textbf{Discriminative Similarity-Based Methods}} \\
CLIP ViT-B/32 (I2I) & 0.65 & 0.52 & 0.61 & 0.61 & 0.76 & 0.64 & 0.65 & 0.63 & 0.51 \\
CLIP ViT-L/14 (I2I) & 0.69 & 0.58 & 0.56 & 0.73 & 0.80 & 0.63 & 0.73 & 0.68 & 0.54 \\
CLIP ViT-B/32 (T2I) & 0.45 & 0.32 & 0.32 & 0.44 & 0.33 & 0.59 & 0.35 & 0.61 & 0.54 \\
CLIP ViT-L/14 (T2I) & 0.58 & 0.40 & 0.40 & 0.45 & 0.24 & 0.58 & 0.46 & 0.42 & 0.51 \\

\midrule
\multicolumn{10}{l}{\cellcolor{gray!12}\textbf{General Foundation-Model Judges}} \\
Gemini 3 Pro Preview & 0.75 & 0.64 & 0.71 & 0.79 & 0.90 & 0.84 & 0.81 & 0.77 & 0.74 \\
GPT-5.2 & 0.53 & 0.56 & 0.49 & 0.71 & 0.56 & 0.63 & 0.70 & 0.70 & 0.59 \\
Qwen3-VL-4B-Instruct & 0.55 & 0.66 & 0.49 & 0.73 & 0.90 & 0.53 & 0.68 & 0.65 & 0.68 \\
Qwen3-VL-8B-Instruct & 0.67 & 0.72 & 0.58 & 0.77 & 0.94 & 0.66 & 0.76 & 0.77 & 0.69 \\

\midrule
\multicolumn{10}{l}{\cellcolor{gray!12}\textbf{Progress Reward Model Judges}} \\
VLAC & 0.76 & 0.66 & 0.97 & 0.69 & 0.93 & 0.53 & 0.86 & 0.81 & 0.78 \\
GVL & 0.76 & 0.88 & 0.68 & 0.82 & 0.91 & 0.81 & 0.77 & 0.78 & 0.75 \\
RoboReward & 0.22 & 0.22 & 0.39 & 0.11 & 0.51 & 0.22 & 0.25 & 0.30 & 0.42 \\
Robo-Dopamine & 1.00 & 1.00 & 0.63 & 0.97 & 1.00 & 1.00 & 1.00 & 0.98 & 0.93 \\

\bottomrule
\end{tabular}%
}
\end{table*}

\subsection{Detailed Domain Analysis of RoboPulse}
\label{sec:detail_dom_eval}

In this section, we analyze the domain-specific performance of different evaluators, summarized in Tab.~\ref{tab:robopulse_summary}. The RoboPulse benchmark encompasses four distinct domains: \textbf{Simulation} (controlled environments like Libero/RoboCasa), \textbf{Real-World} (diverse robotic setups), \textbf{UMI}-based data collection on AgileX Pika, and \textbf{Human} (ego-centric human manipulation). Analyzing performance across these domains reveals the generalization capabilities of each method.

\textbf{Simulation vs. Real-World Gap.}
Consistent with general computer vision trends, most models perform better in \textit{\textbf{Simulation}} than in the \textit{\textbf{Real-World}}. For example, \textit{Gemini 3 Pro Preview} drops from $0.72$ in simulation to $0.63$ in real-world settings, and \textit{Qwen3-VL-8B} drops from $0.65$ to $0.58$. This performance degradation is attributed to the ``sim-to-real'' gap, where real-world data introduces complex lighting, background clutter, and sensor noise that general foundation-models and discriminative baselines struggle to filter out. 
However, specialized reward models show greater resilience. \textit{Robo-Dopamine} maintains a high accuracy of $0.83$ in the real world, significantly outperforming the best general models ($0.63$). This suggests that contrastive training on large-scale robotic data helps the model learn invariant features of manipulation progress that hold true despite visual domain shifts.

\begin{wraptable}{r}{0.55\textwidth}
\centering
\caption{Overall performance summary across different domains.}
\label{tab:robopulse_summary}
\resizebox{\linewidth}{!}{
\begin{tabular}{l|cccc}
\toprule
\textbf{Method} & \textbf{Human} & \textbf{Real-World} & \textbf{Simulation} & \textbf{UMI} \\
\midrule

\multicolumn{5}{l}{\cellcolor{gray!12}\textbf{Discriminative Similarity-Based Methods}} \\
CLIP ViT-B/32 (I2I) & 0.54 & 0.57 & 0.61 & 0.58 \\
CLIP ViT-L/14 (I2I) & 0.56 & 0.58 & 0.64 & 0.58 \\
CLIP ViT-B/32 (T2I) & 0.48 & 0.46 & 0.46 & 0.52 \\
CLIP ViT-L/14 (T2I) & 0.52 & 0.49 & 0.44 & 0.43 \\

\midrule

\multicolumn{5}{l}{\cellcolor{gray!12}\textbf{General Foundation-Model Judges}} \\
Gemini 3 Pro Preview & 0.62 & 0.63 & 0.72 & 0.60 \\
GPT-5.2 & 0.49 & 0.52 & 0.55 & 0.56 \\
Qwen3-VL-4B-Instruct & 0.57 & 0.53 & 0.58 & 0.50 \\
Qwen3-VL-8B-Instruct & 0.53 & 0.58 & 0.65 & 0.58 \\

\midrule
\multicolumn{5}{l}{\cellcolor{gray!12}\textbf{Progress Reward Model Judges}} \\
VLAC & 0.67 & 0.67 & 0.71 & 0.72 \\
GVL & 0.70 & 0.69 & 0.74 & 0.69 \\
RoboReward & 0.21 & 0.15 & 0.22 & 0.18 \\
Robo-Dopamine & 0.90 & 0.83 & 0.99 & 0.98 \\

\bottomrule
\end{tabular}%
}
\end{wraptable}

\textbf{Generalization to Human and UMI Data.}
The \textit{\textbf{Human}} and \textit{\textbf{UMI}} domains represent a significant challenge due to the domain shift from standard robot arms to human hands or handheld grippers.
Remarkably, \textit{Robo-Dopamine} achieves near-perfect performance in the UMI domain ($0.98$) and robust performance on Human data ($0.90$). This indicates that the model has learned to focus on the \textit{interaction between the end-effector and the object}, rather than overfitting to specific robot morphologies. 
In contrast, discriminative CLIP-based methods (e.g., \textit{CLIP ViT-B/32 I2I}) show limited generalization, hovering around $0.54-0.58$ across these domains. This implies that simple visual similarity is insufficient for capturing progress when the agent's embodiment changes (e.g., from a robot arm to a human hand).

\textbf{Consistency of Reward Models.}
Among the reward models, we observe a hierarchy of robustness. \textit{Robo-Dopamine} consistently dominates all domains, particularly in Simulation ($0.99$). \textit{VLAC} and \textit{GVL} perform respectably (averaging $\sim0.70$ across domains) but lack the extreme precision of Robo-Dopamine in the Simulation and UMI settings. \textit{RoboReward} continues to show poor alignment across all domains ($<0.22$), confirming that its issues are fundamental to the method rather than specific to a single domain.

In conclusion, while simulators provide a cleaner signal for evaluation, the true test of a reward model lies in the complex Real-World and Human domains. Our analysis highlights that specialized reward models like \textit{Robo-Dopamine} effectively bridge the domain gap, offering reliable progress estimation even when applied to visually diverse and morphologically distinct manipulation data.

\newpage

\begin{table*}[!t]
\centering
\setlength{\tabcolsep}{3.5pt} 
\renewcommand{\arraystretch}{1.2} 

\caption{\textbf{OPD auditing on RoboTwin 2.0.} 
Performance of different policy models on three tasks. 
MC@25/50/75/100 denotes milestone coverage; MP/PPL/CRA/STR are averaged episode-level metrics.}
\label{tab:robotwin_audit_from_image}

\resizebox{\textwidth}{!}{%
\begin{tabular}{l | cccc | c | c | c c | cccc | c | c | c c | cccc | c | c | c c}
\toprule

\multirow{4}{*}{\textbf{Model}} 
& \multicolumn{8}{c|}{\textbf{Blocks Ranking RGB}} 
& \multicolumn{8}{c|}{\textbf{Handover Block}} 
& \multicolumn{8}{c}{\textbf{Handover Mic}} \\
\cmidrule(lr){2-9} \cmidrule(lr){10-17} \cmidrule(lr){18-25}

& \multicolumn{5}{c|}{\textbf{Outcome Level}} & \textbf{Process} & \multicolumn{2}{c|}{\textbf{Diagnosis}} 
& \multicolumn{5}{c|}{\textbf{Outcome Level}} & \textbf{Process} & \multicolumn{2}{c|}{\textbf{Diagnosis}} 
& \multicolumn{5}{c|}{\textbf{Outcome Level}} & \textbf{Process} & \multicolumn{2}{c}{\textbf{Diagnosis}} \\
\cmidrule(lr){2-6} \cmidrule(lr){7-7} \cmidrule(lr){8-9}
\cmidrule(lr){10-14} \cmidrule(lr){15-15} \cmidrule(lr){16-17}
\cmidrule(lr){18-22} \cmidrule(lr){23-23} \cmidrule(lr){24-25}

& \multicolumn{4}{c|}{\textbf{MC}} & \multirow{2}{*}{\textbf{MP}} & \multirow{2}{*}{\textbf{PPL}} & \multirow{2}{*}{\textbf{CRA}} & \multirow{2}{*}{\textbf{STR}} 
& \multicolumn{4}{c|}{\textbf{MC}} & \multirow{2}{*}{\textbf{MP}} & \multirow{2}{*}{\textbf{PPL}} & \multirow{2}{*}{\textbf{CRA}} & \multirow{2}{*}{\textbf{STR}} 
& \multicolumn{4}{c|}{\textbf{MC}} & \multirow{2}{*}{\textbf{MP}} & \multirow{2}{*}{\textbf{PPL}} & \multirow{2}{*}{\textbf{CRA}} & \multirow{2}{*}{\textbf{STR}} \\
\cmidrule(lr){2-5} \cmidrule(lr){10-13} \cmidrule(lr){18-21}

& {@25} & {@50} & {@75} & {@100} & & & & 
& {@25} & {@50} & {@75} & {@100} & & & & 
& {@25} & {@50} & {@75} & {@100} & & & & \\
\midrule

ACT & 
84 & 44 & 22 & 2 & 49.93 & 11.67 & 8.99 & 59.74 & 
86 & 60 & 44 & 42 & 66.35 & 47.98 & 9.6 & 65.49 & 
100 & 100 & 94 & 74 & 96.79 & 72.33 & 4.08 & 44.14 \\ 

DP & 
94 & 40 & 18 & 0 & 51.72 & 4.07 & 16.26 & 43.77 & 
92 & 52 & 50 & 44 & 66.88 & 62.18 & 1.05 & 69.64 & 
100 & 94 & 88 & 44 & 93.8 & 65.97 & 5.49 & 57.18 \\ 

RDT & 
100 & 62 & 30 & 0 & 61.23 & 6.19 & 16.3 & 39.03 & 
94 & 82 & 62 & 38 & 78.86 & 53.13 & 9.88 & 65.09 & 
100 & 100 & 100 & 100 & 100 & 84.23 & 1.45 & 39.82 \\ 

pi0 & 
96 & 66 & 40 & 8 & 63.37 & 15.85 & 11.5 & 48.39 & 
84 & 58 & 50 & 40 & 68.22 & 43.49 & 8.61 & 53.48 & 
100 & 100 & 100 & 98 & 99.42 & 88.05 & 1.03 & 42.71 \\ 

OpenVLA-OFT & 
98 & 42 & 6 & 0 & 48.28 & 2.39 & 17.78 & 38.62 & 
84 & 44 & 36 & 2 & 56.44 & 4.74 & 18.6 & 55.72 & 
100 & 100 & 100 & 76 & 94.15 & 66.2 & 5.66 & 45.14 \\ 

\bottomrule
\end{tabular}%
}
\end{table*}
\begin{table*}[!t]
\centering
\setlength{\tabcolsep}{3.5pt} 
\renewcommand{\arraystretch}{1.2} 

\caption{\textbf{OPD Auditing on RoboTwin 2.0: Task Set A.} 
Results for \textit{Hanging Mug} and \textit{Place Bread Basket} (50 rollouts each).}
\label{tab:robotwin_audit_a}

\resizebox{\textwidth}{!}{%
\begin{tabular}{l | cccc | c | c | c c | cccc | c | c | c c}
\toprule

\multirow{4}{*}{\textbf{Model}} 
& \multicolumn{8}{c|}{\textbf{Hanging Mug}} 
& \multicolumn{8}{c}{\textbf{Place Bread Basket}} \\
\cmidrule(lr){2-9} \cmidrule(lr){10-17}

& \multicolumn{5}{c|}{\textbf{Outcome Level}} & \textbf{Process} & \multicolumn{2}{c|}{\textbf{Diagnosis}} 
& \multicolumn{5}{c|}{\textbf{Outcome Level}} & \textbf{Process} & \multicolumn{2}{c}{\textbf{Diagnosis}} \\
\cmidrule(lr){2-6} \cmidrule(lr){7-7} \cmidrule(lr){8-9}
\cmidrule(lr){10-14} \cmidrule(lr){15-15} \cmidrule(lr){16-17}

& \multicolumn{4}{c|}{\textbf{MC}} & \multirow{2}{*}{\textbf{MP}} & \multirow{2}{*}{\textbf{PPL}} & \multirow{2}{*}{\textbf{CRA}} & \multirow{2}{*}{\textbf{STR}} 
& \multicolumn{4}{c|}{\textbf{MC}} & \multirow{2}{*}{\textbf{MP}} & \multirow{2}{*}{\textbf{PPL}} & \multirow{2}{*}{\textbf{CRA}} & \multirow{2}{*}{\textbf{STR}} \\
\cmidrule(lr){2-5} \cmidrule(lr){10-13}

& {@25} & {@50} & {@75} & {@100} & & & & 
& {@25} & {@50} & {@75} & {@100} & & & & \\
\midrule

ACT & 
96 & 84 & 74 & 14 & 84.88 & 23.61 & 17.12 & 51.03 & 
100 & 74 & 46 & 4 & 73.11 & 17.55 & 15.46 & 65.38 \\ 

DP & 
100 & 100 & 98 & 14 & 86.4 & 30.82 & 18.72 & 57.03 & 
100 & 94 & 74 & 16 & 87.55 & 21.33 & 16.88 & 47.96 \\ 

RDT & 
98 & 96 & 92 & 20 & 88.5 & 48.4 & 11.35 & 65.84 & 
100 & 100 & 78 & 8 & 90.4 & 16.57 & 22.68 & 37.1 \\ 

pi0 & 
100 & 96 & 92 & 16 & 95.37 & 50.29 & 8.53 & 60.91 & 
100 & 94 & 62 & 16 & 83.67 & 21.16 & 18.91 & 47.62 \\ 

OpenVLA-OFT & 
100 & 92 & 88 & 8 & 82.3 & 18.9 & 21.19 & 40.89 & 
100 & 100 & 84 & 2 & 92.61 & 8.86 & 26.25 & 31.85 \\ 

\bottomrule
\end{tabular}%
}
\end{table*}

\begin{table*}[!t]
\centering
\setlength{\tabcolsep}{3.5pt} 
\renewcommand{\arraystretch}{1.2} 

\caption{\textbf{OPD Auditing on RoboTwin 2.0: Task Set B.} 
Results for \textit{Place Bread Skillet} and \textit{Place Can Basket} (50 rollouts each).}
\label{tab:robotwin_audit_b}

\resizebox{\textwidth}{!}{%
\begin{tabular}{l | cccc | c | c | c c | cccc | c | c | c c}
\toprule

\multirow{4}{*}{\textbf{Model}} 
& \multicolumn{8}{c|}{\textbf{Place Bread Skillet}} 
& \multicolumn{8}{c}{\textbf{Place Can Basket}} \\
\cmidrule(lr){2-9} \cmidrule(lr){10-17}

& \multicolumn{5}{c|}{\textbf{Outcome Level}} & \textbf{Process} & \multicolumn{2}{c|}{\textbf{Diagnosis}} 
& \multicolumn{5}{c|}{\textbf{Outcome Level}} & \textbf{Process} & \multicolumn{2}{c}{\textbf{Diagnosis}} \\
\cmidrule(lr){2-6} \cmidrule(lr){7-7} \cmidrule(lr){8-9}
\cmidrule(lr){10-14} \cmidrule(lr){15-15} \cmidrule(lr){16-17}

& \multicolumn{4}{c|}{\textbf{MC}} & \multirow{2}{*}{\textbf{MP}} & \multirow{2}{*}{\textbf{PPL}} & \multirow{2}{*}{\textbf{CRA}} & \multirow{2}{*}{\textbf{STR}} 
& \multicolumn{4}{c|}{\textbf{MC}} & \multirow{2}{*}{\textbf{MP}} & \multirow{2}{*}{\textbf{PPL}} & \multirow{2}{*}{\textbf{CRA}} & \multirow{2}{*}{\textbf{STR}} \\
\cmidrule(lr){2-5} \cmidrule(lr){10-13}

& {@25} & {@50} & {@75} & {@100} & & & & 
& {@25} & {@50} & {@75} & {@100} & & & & \\
\midrule

ACT & 
100 & 80 & 34 & 8 & 68.66 & 34.19 & 7.99 & 69.11 & 
98 & 88 & 74 & 4 & 83.84 & 30.62 & 9.06 & 69.9 \\ 

DP & 
100 & 58 & 30 & 4 & 62.51 & 31.45 & 8.36 & 78.07 & 
88 & 60 & 34 & 12 & 60.91 & 39.87 & 5.83 & 81.04 \\ 

RDT & 
100 & 80 & 50 & 6 & 76.42 & 27.87 & 12.62 & 63.74 & 
100 & 100 & 94 & 16 & 95.94 & 52.41 & 4.6 & 69.02 \\ 

pi0 & 
100 & 80 & 54 & 16 & 77.91 & 32.43 & 11.82 & 61.28 & 
100 & 94 & 74 & 28 & 87.38 & 45.87 & 8.03 & 67.06 \\ 

OpenVLA-OFT & 
100 & 60 & 26 & 10 & 63.35 & 14.23 & 13.65 & 30.74 & 
100 & 98 & 78 & 8 & 83.98 & 34.53 & 6.85 & 69.57 \\ 

\bottomrule
\end{tabular}%
}
\end{table*}

\section{More Results on OPD Auditing}
\label{more_results}

We report additional OPD auditing results for five policy families, including \textit{ACT}, \textit{Diffusion Policy (DP)}, \textit{RDT}, $\pi_0$, and \textit{OpenVLA-OFT}, on seven RoboTwin 2.0 tasks.
Tab.~\ref{tab:robotwin_audit_from_image}, Tab.~\ref{tab:robotwin_audit_a}, and Tab.~\ref{tab:robotwin_audit_b} provide per-task OPD breakdowns, while Fig.~\ref{fig:rq2_all_tasks}--\ref{fig:rq4_all_tasks} summarize the same phenomena from an all-task view.

\paragraph{Outcome level reveals where completion collapses.}
Across tasks, milestone coverage separates early reachability from terminal completion in a way that binary success cannot.
On \textit{Blocks Ranking RGB}, all methods frequently reach the first milestone (MC@25 is 84--100), yet none reliably completes the task (MC@100 is 0--8), indicating that failures concentrate after substantial partial progress.
A different pattern appears on \textit{Handover Mic}: several methods reach late milestones, but only $\pi_0$ and RDT consistently close the last stage (MC@100 of 98 and 100), whereas DP drops sharply at MC@100 of 44, matching the last-mile bottleneck emphasized in the main text.
For the remaining tasks in Tab.~\ref{tab:robotwin_audit_a} and Tab.~\ref{tab:robotwin_audit_b}, MC@100 remains non-trivial but low for most policies, suggesting that these manipulation tasks are typically limited by final stabilization and precise placement rather than early-stage navigation.

\paragraph{Process efficiency is not implied by high MP or high MC.}
PPL separates policies that reach similar maximum progress but do so with different path efficiency.
A concrete example is \textit{Blocks Ranking RGB}: $\pi_0$ attains MP of 63.37, comparable to RDT at 61.23, yet $\pi_0$ achieves a substantially higher PPL of 15.85 than RDT at 6.19, indicating that the two policies reach similar endpoints through different progress dynamics.
On \textit{Handover Mic}, high-outcome policies also differ in efficiency, where $\pi_0$ and RDT show strong PPL (88.05 and 84.23), while OpenVLA-OFT remains less efficient even when its MP is high (PPL 66.20 with MP 94.15).

\paragraph{Diagnosis metrics expose stagnation- versus regret-dominant failure modes.}
STR highlights freezing and unproductive interaction, while CRA captures persistent backtracking relative to the historical best progress.
A representative stagnation failure appears on \textit{Place Can Basket}, where DP shows STR of 81.04 with only moderate MP of 60.91, consistent with long stretches of near-zero progress despite occasional improvements.
In contrast, OpenVLA-OFT exhibits regret-dominant behavior on tasks such as \textit{Place Bread Basket}, where it reaches high MP of 92.61 but incurs large CRA of 26.25, consistent with late-stage instability and costly recovery.
These two regimes are behaviorally distinct but would both be labeled as the same terminal failure under success rate.

\paragraph{All-task visual summaries.}
Fig.~\ref{fig:rq2_all_tasks} aggregates milestone reachability across all tasks and makes the stage at which performance saturates visually comparable.
It shows that several tasks share a common pattern of high early reachability with a sharp final-stage drop, aligning with the per-task MC tables.
Fig.~\ref{fig:rq3_all_tasks} summarizes success-conditioned execution quality across tasks.
It complements the main-text observation that DP can be highly efficient on successes while being less reliable at the outcome level on last-mile constrained tasks.
Fig.~\ref{fig:rq4_all_tasks} reports failure-only OPD fingerprints with within-task normalization.
Two stable signatures reappear across tasks: stagnation-dominant failures with high raw STR and modest MP, and regret-dominant failures with high MP but poor CRA or low PPL.
We note that N/A entries indicate that a model has too few failed episodes to form a stable fingerprint under the current sample size, which is itself consistent with near-saturated outcomes on those tasks.

\begin{figure}[!t]
    \centering
    \includegraphics[width=\linewidth]{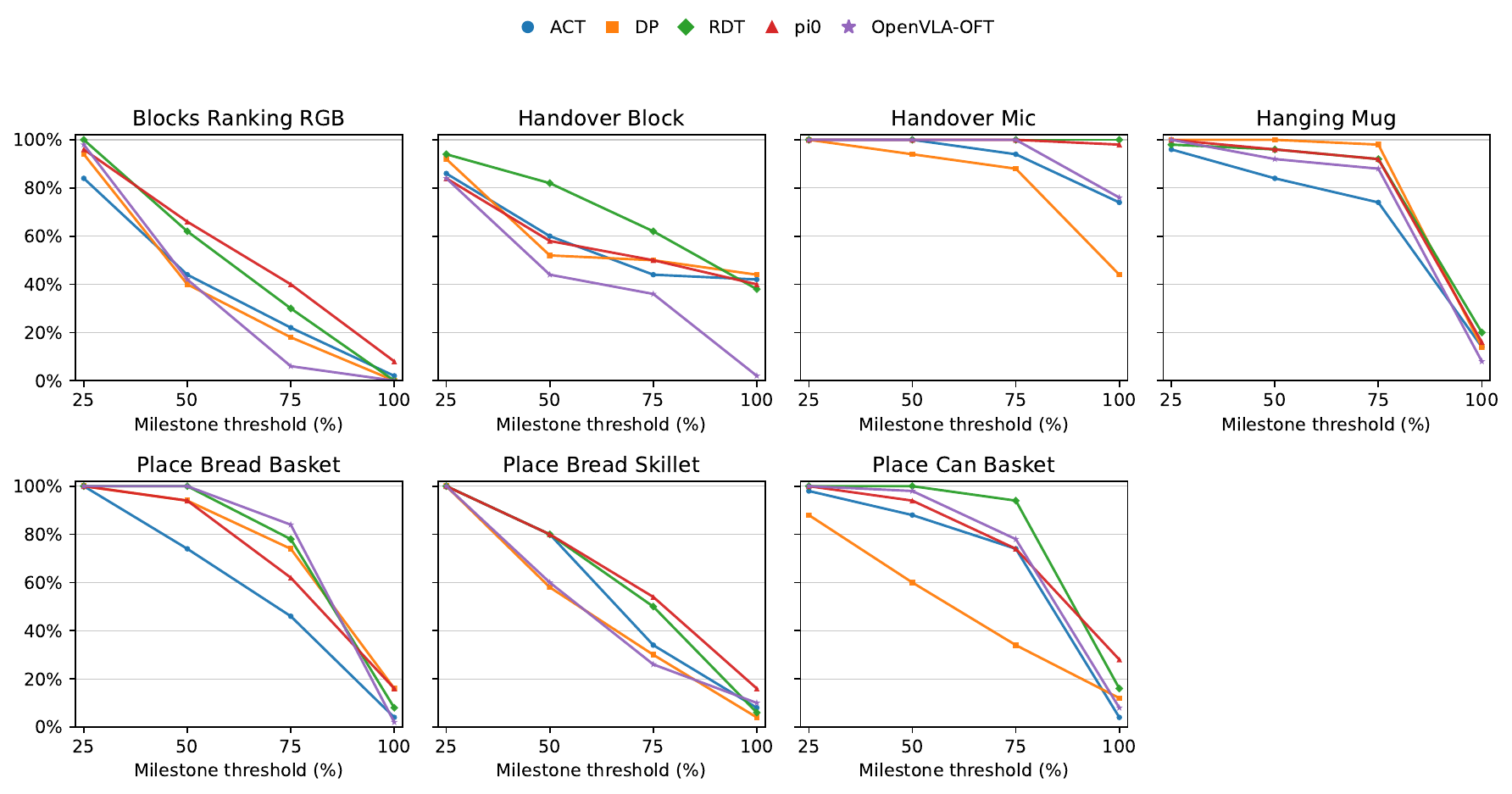}
    \caption{\textbf{Reachability profiles over all RoboTwin 2.0 tasks.}
For each task, we plot the fraction of rollouts that reach milestone thresholds at 25/50/75/100\%.
The curves localize where progress saturates along the horizon and expose last-mile bottlenecks when MC@75 is high but MC@100 drops sharply.}
    \label{fig:rq2_all_tasks}
\end{figure}

\begin{figure}[!ht]
    \centering
    \includegraphics[width=\linewidth]{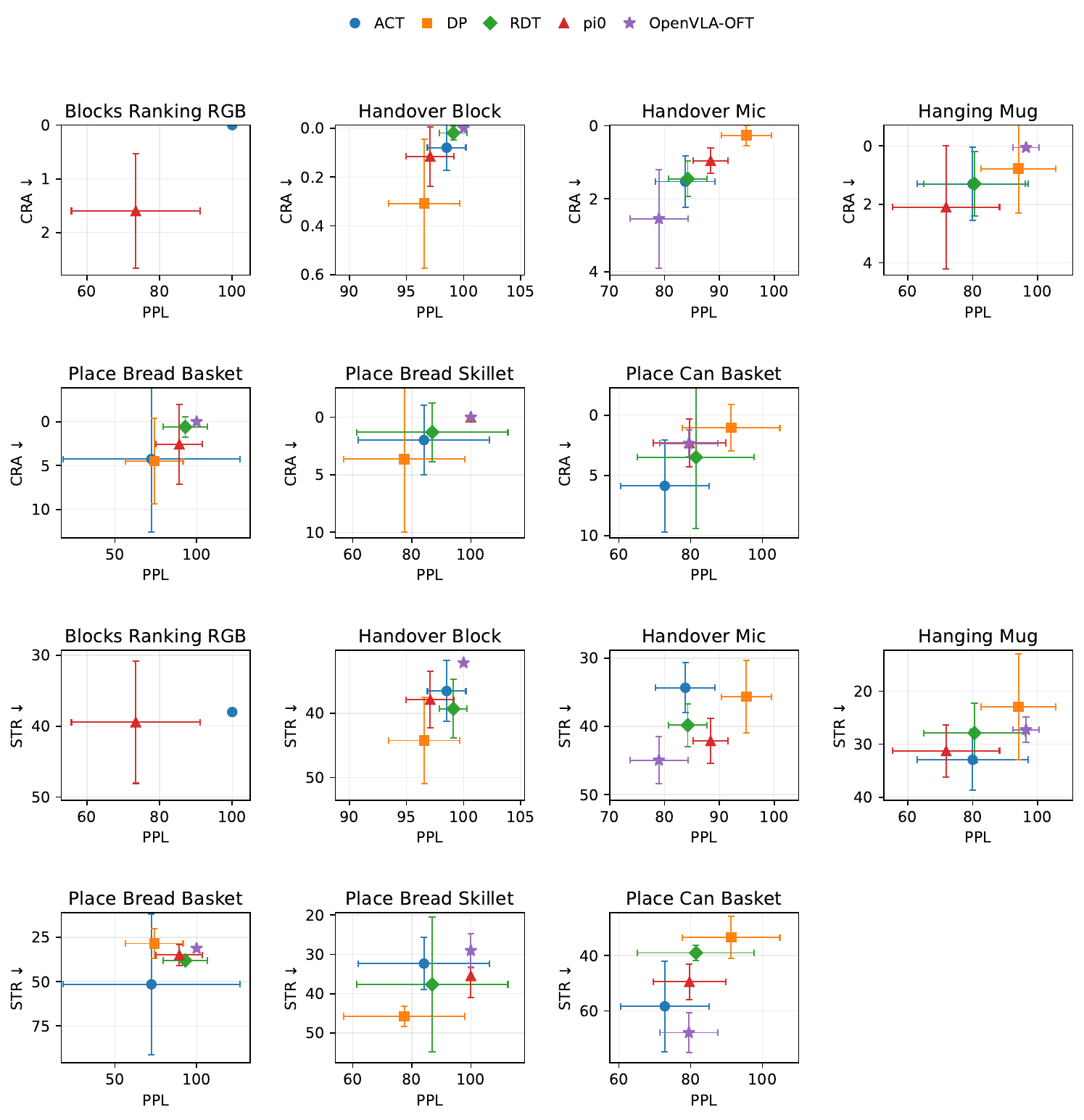}
    \caption{\textbf{Success-conditioned execution quality over all tasks.}
We compute PPL, CRA, and STR on successful episodes only and visualize cross-policy trade-offs.
This separates execution quality from success frequency and highlights policies that succeed efficiently versus those that succeed with higher correction cost or hesitation.}
    \label{fig:rq3_all_tasks}
\end{figure}

\begin{figure}[!ht]
    \centering
    \includegraphics[width=\linewidth]{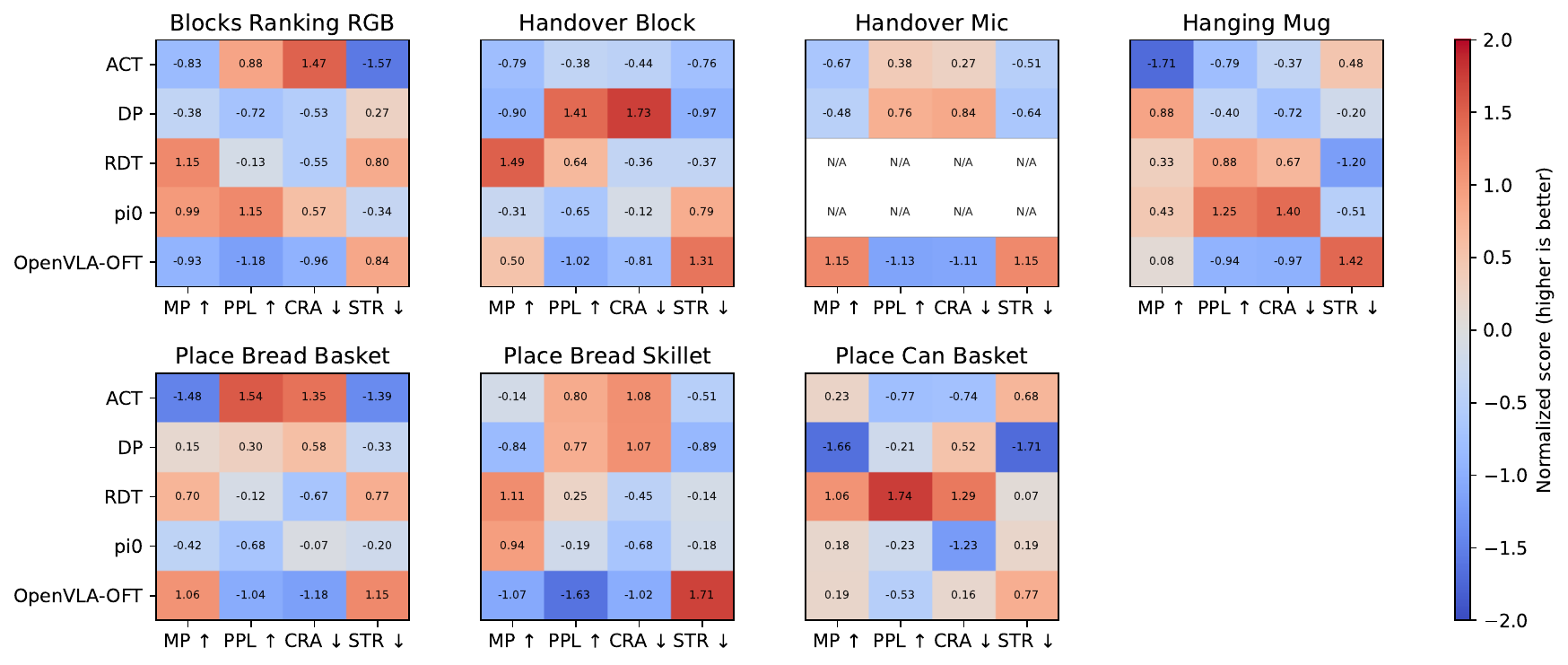}
    \caption{\textbf{Failure-only OPD fingerprints over all tasks.}
We aggregate MP, PPL, CRA, and STR over failed episodes and normalize scores within each task across policy families.
For CRA and STR, we flip the sign before normalization so that higher normalized scores consistently indicate more desirable behavior.}
    \label{fig:rq4_all_tasks}
\end{figure}

\clearpage

\section{Visualization}
\label{sec:visualization}

To intuitively understand how Robo-Dopamine evaluates robotic behavior, we visualize the reward curves generated during task execution across diverse real-world and simulation environments. Fig~\ref{fig:vis1} to \ref{fig:vis6} display the frame-by-frame progress scores (gray line) alongside key events annotated in the trajectories. We also report the computed OPD metrics (Milestone Coverage, Max Progress, PPL, CRA, STR) for each episode.

\textbf{Dense Feedback and Mistake Detection.}
A key property of an effective reward model is the ability to recognize negative progress. As shown in the \textit{Stack the wooden blocks} (Fig.~\ref{fig:vis1}) and \textit{Clean the table} (Fig.~\ref{fig:vis2}) tasks, the progress curve is non-monotonic. When the robot drops a block or jumbles up the tissues (highlighted by red arrows), Robo-Dopamine immediately penalizes the agent, causing a sharp drop in the score. This behavior is quantitatively captured by the \textit{\textbf{Cumulative Regret Area (CRA)}} metric. For instance, in Fig.~\ref{fig:vis2}, the significant regressions result in a CRA of $3.26\%$, accurately reflecting the wasted effort before recovery.

\textbf{Sensitivity to Stagnation.}
In long-horizon tasks such as \textit{Blocks Ranking RGB} (Fig.~\ref{fig:vis4}), the agent experiences multiple failures (e.g., empty grasps or grasping the incorrect block) before making meaningful progress. The reward curve remains flat or low during these phases, resulting in a higher \textit{\textbf{Stagnation Ratio (STR)}} of $16.22\%$. This demonstrates that the model does not hallucinate progress when the robot is merely moving without functional success.

\textbf{Recovery and Completion.}
Despite intermediate failures, the reward curves faithfully track the agent's recovery. In all visualized episodes, the agent eventually completes the task, indicated by the curve reaching $\approx 1.0$ at the final step. This aligns with the perfect \textit{\textbf{Milestone Coverage (MC)}} and \textit{\textbf{Max Progress (MP)}} of $100\%$. The \textit{\textbf{Path-weighted Progress Length (PPL)}} metric further contextualizes this efficiency; for example, the \textit{Place bread into the basket} task (Fig.~\ref{fig:vis5}) achieves a high PPL ($75.06\%$) because the regressions (moving away from goal) were brief and quickly corrected, whereas lower PPL scores indicate more ``struggle'' during the episode.

Overall, these visualizations suggest that Robo-Dopamine appears well aligned with the annotated progress signal in these examples and can diagnose distinct failure modes, such as regression and stagnation, through the OPD metric system.

\textbf{RoboPulse benchmark cases.}
Fig.~\ref{fig:robopulse_vis} visualizes representative RoboPulse instances, where the judge compares a BEFORE/AFTER frame pair using reference start/end frames as anchors and predicts the progress direction.
These examples illustrate a key failure mode of pure appearance matching: visually different frames can still correspond to valid progress or even terminal success, due to viewpoint changes, partial occlusion, and multiple acceptable end configurations.
In the first two cases, the AFTER frames are physically closer to completion, yet CLIP and a general-purpose VLM can mis-rank them because the dominant visual change is not aligned with task progress.
The third case highlights negative progress: the state moves away from the goal, which requires the judge to detect regression rather than similarity.
Overall, Robo-Dopamine remains consistent across these shifts, correctly recognizing both improvement and regression, while similarity-based scoring is more brittle to appearance variation.

\begin{figure*}[!t]
    \centering
    \includegraphics[width=\linewidth]{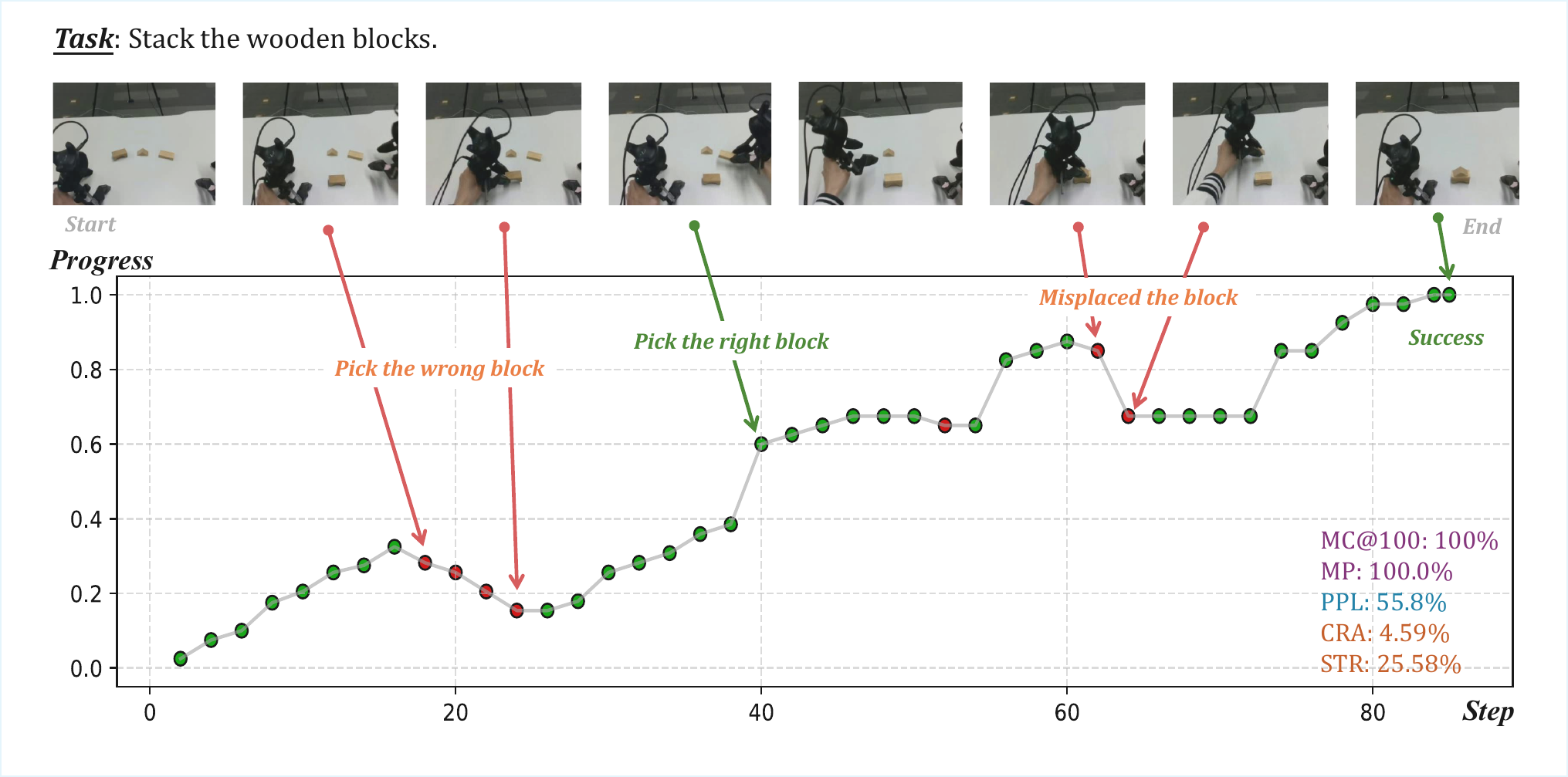}
    \caption{\textbf{Real-World Task: Stack the wooden blocks.} The reward curve demonstrates accurate tracking of manipulation errors. When the robot picks the wrong block or misplaces it (red arrows), the score drops, reflecting negative progress. The agent eventually recovers, reaching a success state (Progress $\approx 1.0$). The \textbf{CRA} of $4.59\%$ quantifies these regression events.}
    \label{fig:vis1}
\end{figure*}

\begin{figure*}[!t]
    \centering
    \includegraphics[width=\linewidth]{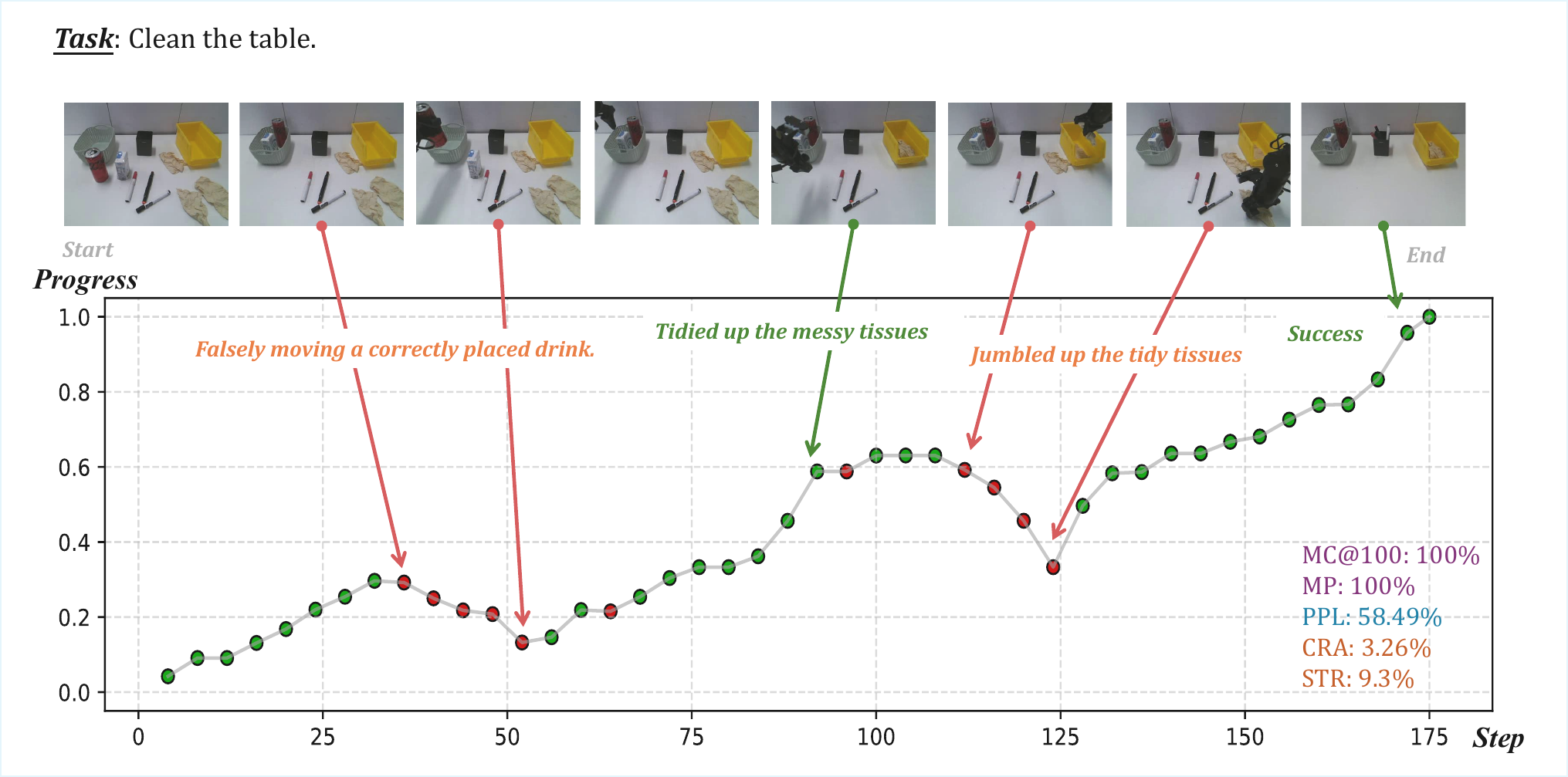}
    \caption{\textbf{Real-World Task: Clean the table.} This task involves semantic understanding of ``messiness.'' The model correctly penalizes the agent for moving a correctly placed drink and jumbling tissues, causing deep valleys in the progress curve. The successful recovery is captured by the final rise, though the efficiency (\textbf{PPL} $58.49\%$) is impacted by these mistakes.}
    \label{fig:vis2}
\end{figure*}

\begin{figure*}[!t]
    \centering
    \includegraphics[width=\linewidth]{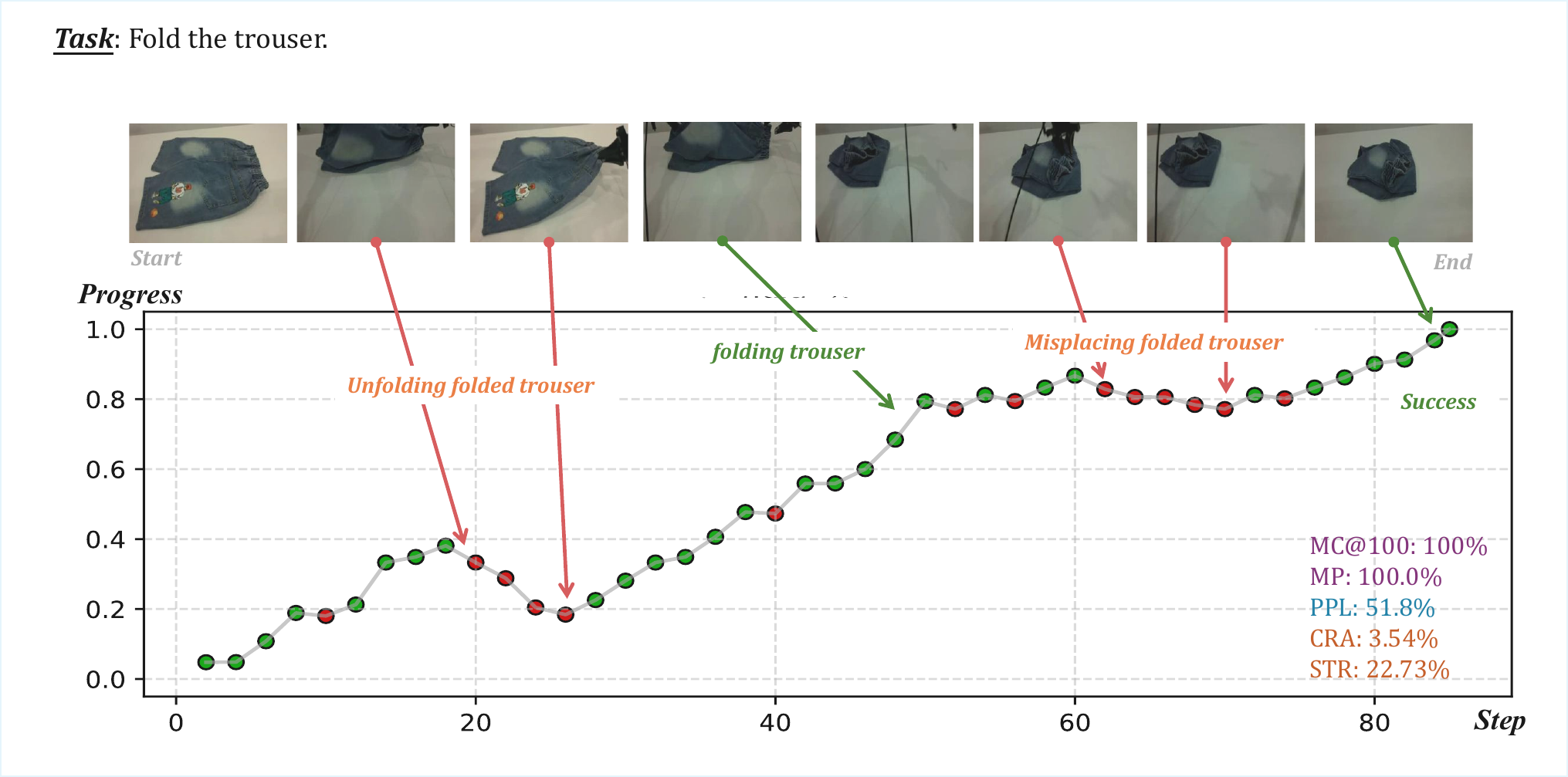}
    \caption{\textbf{Human Task: Fold the trouser.} Evaluating manipulation of deformable objects is challenging. Robo-Dopamine correctly identifies the regression when the trouser is accidentally unfolded (Step $\sim 18$) and when the folded trouser is misplaced (Step $\sim 60$). The curve aligns well with the visual state of the fabric.}
    \label{fig:vis3}
\end{figure*}

\begin{figure*}[!t]
    \centering
    \includegraphics[width=\linewidth]{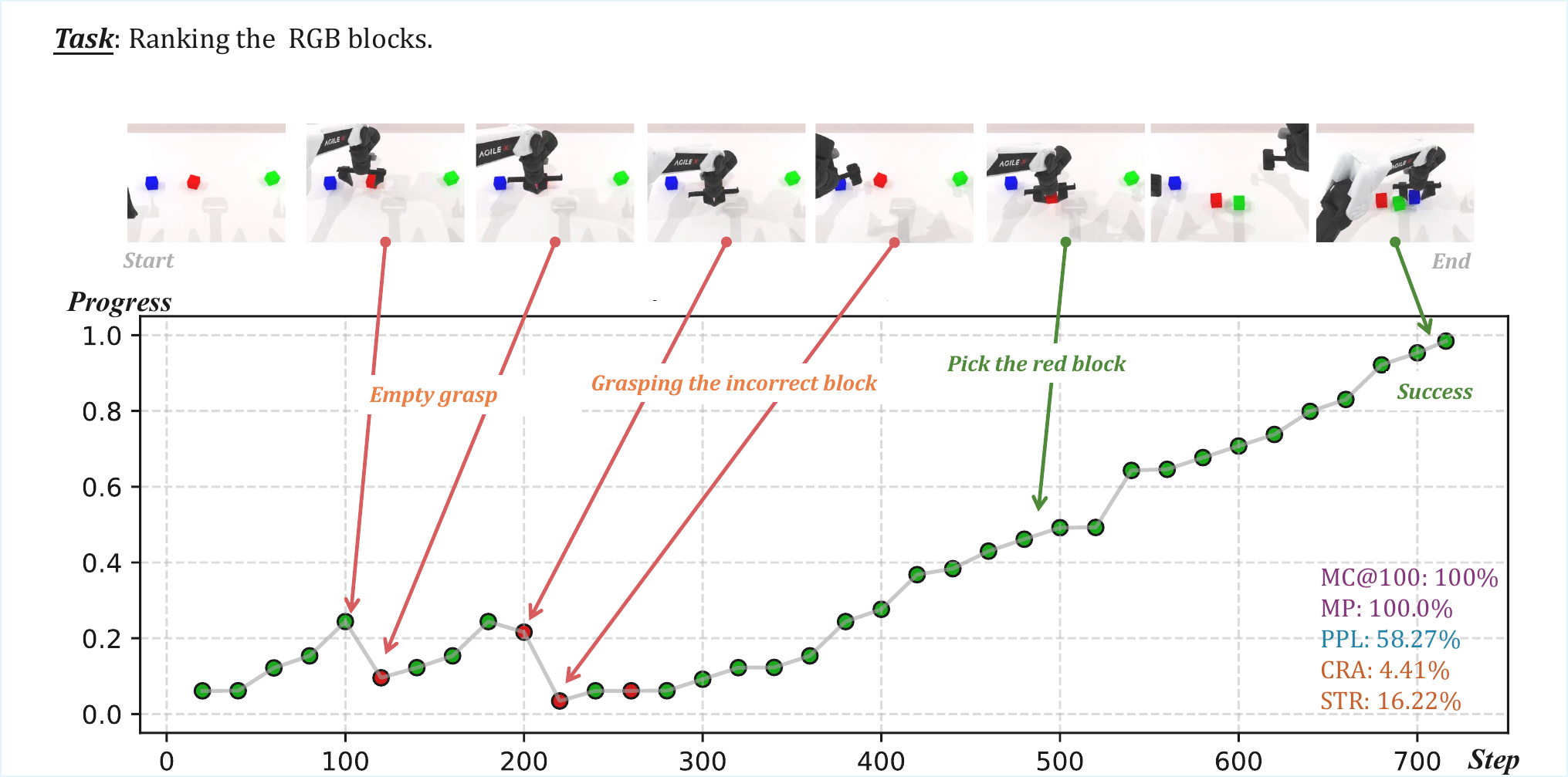}
    \caption{\textbf{Simulation Task: Blocks Ranking RGB.} A long-horizon task ($\sim 700$ steps). The initial phase is dominated by empty grasps and incorrect interactions, leading to a long period of stagnation (high \textbf{STR} of $16.22\%$ and flat curve). Once the correct sorting sequence begins (Step $\sim 500$), the reward signal increases steadily.}
    \label{fig:vis4}
\end{figure*}

\begin{figure*}[!t]
    \centering
    \includegraphics[width=\linewidth]{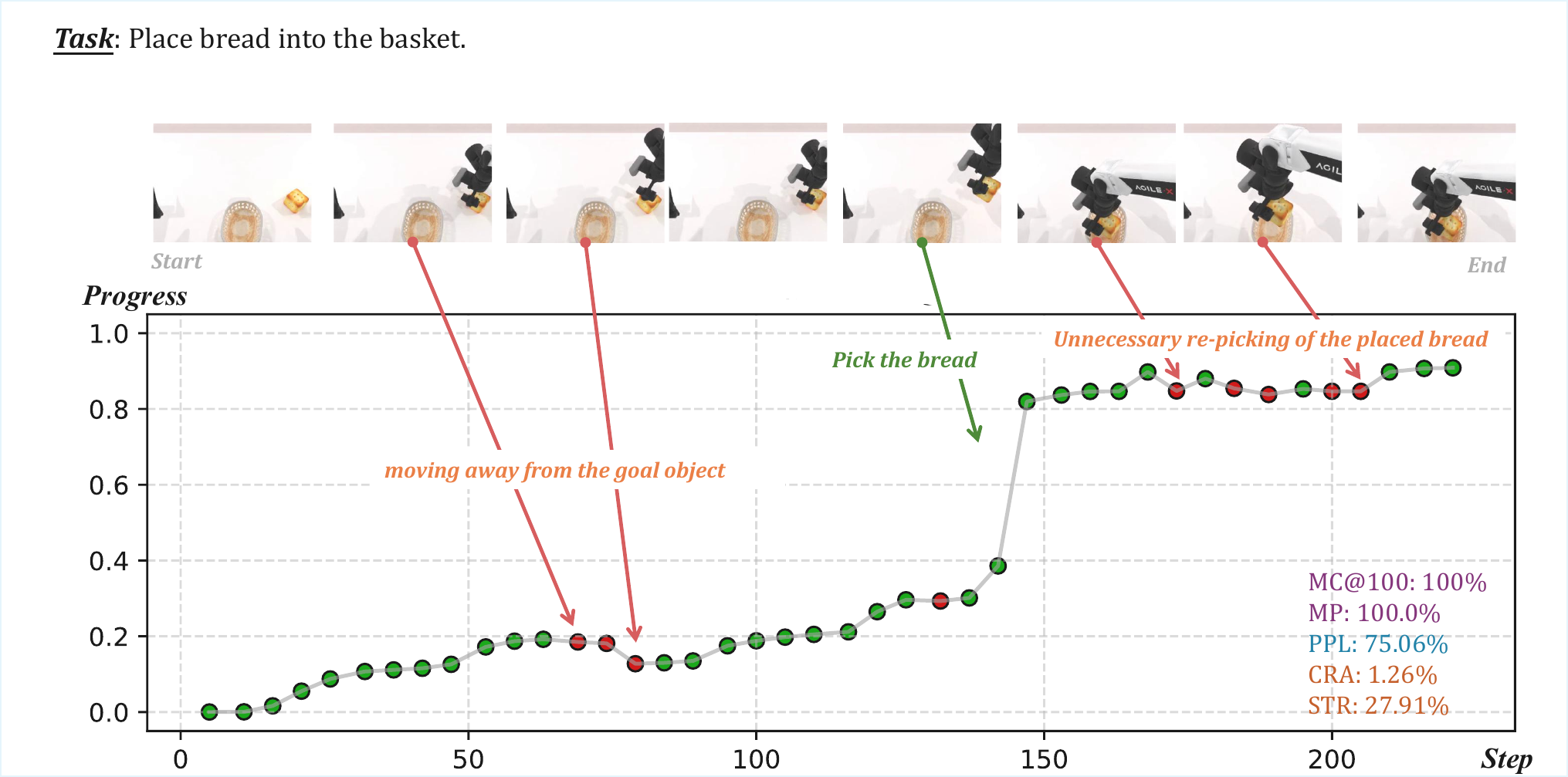}
    \caption{\textbf{Simulation Task: Place bread into the basket.} The model exhibits sensitivity to spatial relations. Moving the bread away from the basket (Step $\sim 75$) causes a dip in the score. The high \textbf{PPL} ($75.06\%$) indicates that despite minor deviations, the overall trajectory was relatively efficient compared to other tasks.}
    \label{fig:vis5}
\end{figure*}

\begin{figure*}[!t]
    \centering
    \includegraphics[width=\linewidth]{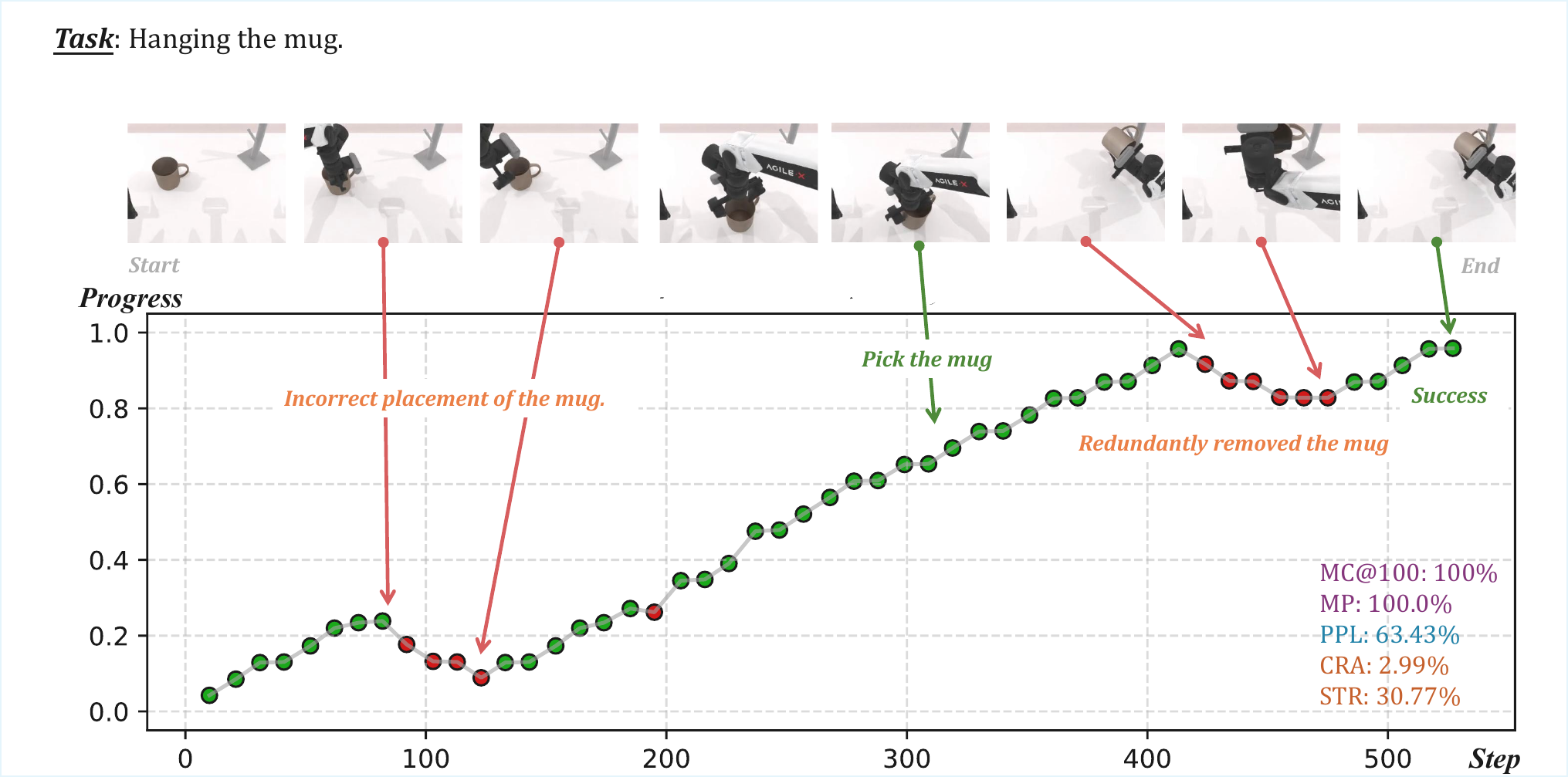}
    \caption{\textbf{Simulation Task: Hanging the mug.} Fine-grained manipulation tracking. The curve captures the ``redundant removal'' of the mug (Step $\sim 450$) as a regression before the final successful placement. This granularity allows for the detection of suboptimal behavior even in successful trajectories.}
    \label{fig:vis6}
\end{figure*}

\begin{figure*}[!t]
    \centering
    \includegraphics[width=1.03\linewidth]{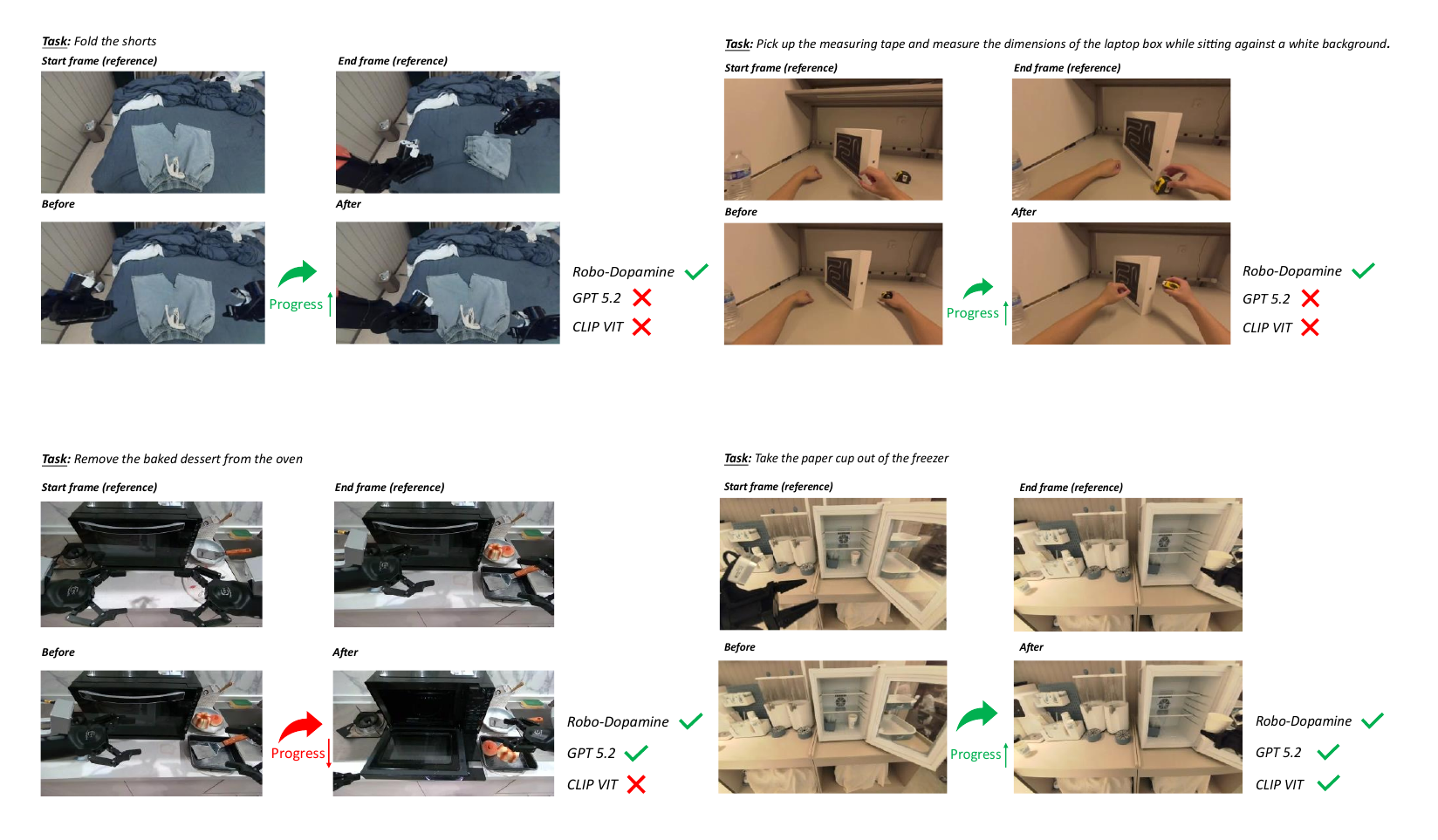}
    \caption{\textbf{RoboPulse qualitative examples for progress direction judging.}
Each panel shows the task, reference start/end frames, and a sampled BEFORE/AFTER pair from an episode.
The arrow indicates the ground-truth progress direction.
We report whether each judge predicts the correct direction.
Robo-Dopamine is robust to appearance shifts and detects both improvement and regression, whereas CLIP can fail when visual similarity is misaligned with physical progress.}
    \label{fig:robopulse_vis}
\end{figure*}

\clearpage

\section{Prompt}
\label{prompt}

In this section, we provide the exact prompt templates used to evaluate the Vision-Language Models (VLMs) and language-conditioned reward models on the RoboPulse benchmark. To ensure reproducibility and transparency, we disclose the full system instructions and input structures.

\textbf{General Foundation-Model Judges.} For general-purpose VLMs (Gemini 3 Pro, GPT-5.2, Qwen3-VL), we utilize a unified pairwise comparison prompt shown in Fig.~\ref{fig:prompt_fm}. The prompt is structured to enforce a rigorous evaluation protocol:
\begin{itemize}
    \item \textbf{Role Assignment:} Defines the model as an expert robotic judge to prime it for physical reasoning.
    \item \textbf{Task Context:} Dynamically inserts the language instruction (e.g., ``Stack the blocks'') to ground the visual analysis.
    \item \textbf{Visual Inputs:} Presents the ``Start State'' and ``Goal State'' (if available) as reference anchors, followed by the two frames to be compared ($S_A$ and $S_B$).
    \item \textbf{Output Schema:} Enforces a strict binary output in \texttt{<score>} tags for automated parsing.
\end{itemize}

\textbf{Baseline Reward Models.} For specific reward model baselines, we strictly follow the prompting protocols defined in their original implementations to ensure fair comparisons. Fig.~\ref{fig:prompt_roboreward}, Fig.~\ref{fig:prompt_vlac}, and Fig.~\ref{fig:prompt_gvl} display the templates used for RoboReward, VLAC, and GVL, respectively. These prompts are tailored to trigger the specific capabilities (e.g., failure detection or success classification) claimed by each method.

\begin{figure}[!h]
    \centering
    \includegraphics[width=\linewidth]{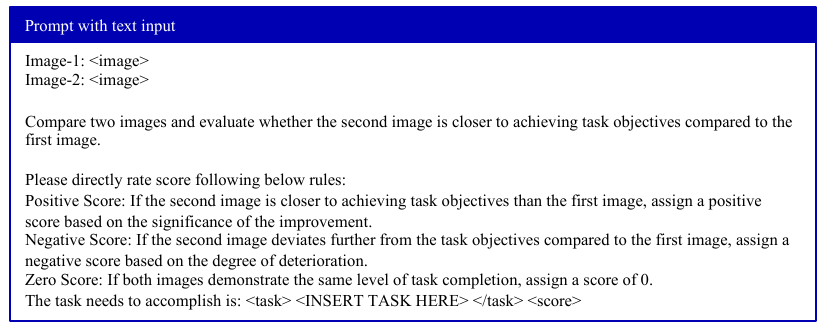}
    \caption{\textbf{Prompt template for {VLAC}~\cite{vlac} on RoboPulse.}}
\label{fig:prompt_vlac}
\end{figure}

\begin{figure}[!h]
    \centering
    \includegraphics[width=\linewidth]{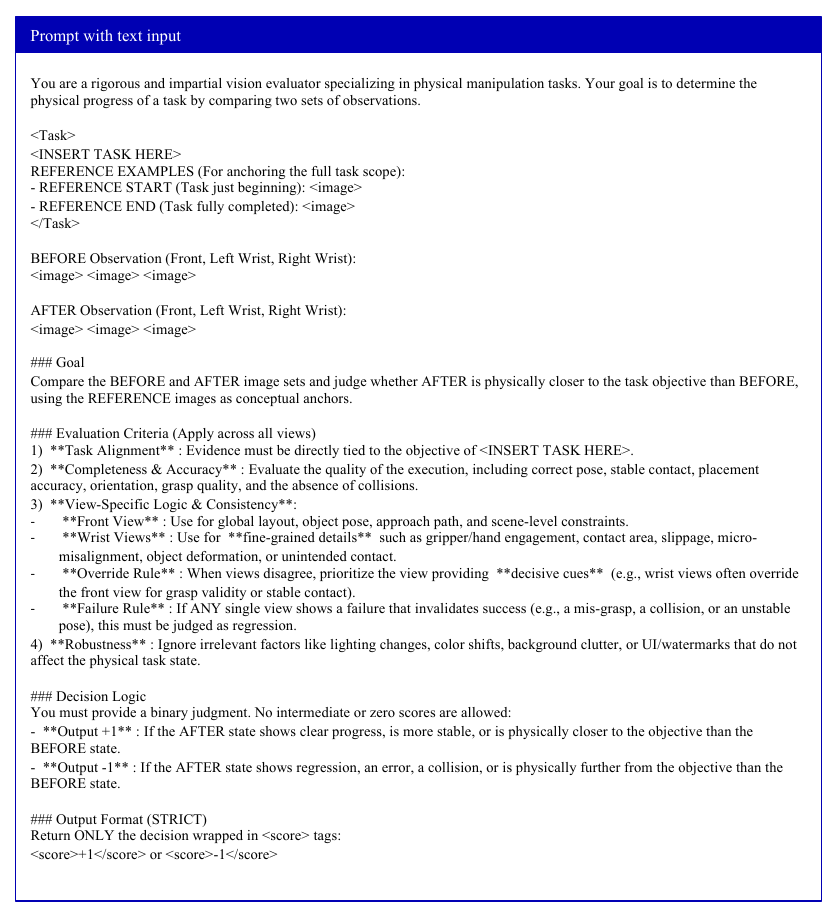}
    \caption{\textbf{Prompt template for Gemini 3 Pro~\cite{gemini3pro}, GPT-5.2~\cite{gpt5.2}, Qwen3-VL~\cite{qwen3vl} pairwise progress judgment on RoboPulse.}
We provide the full system and user prompt, including task description, reference anchors, multiview BEFORE/AFTER observations, and the required output schema used for automated parsing.}
\label{fig:prompt_fm}
\end{figure}

\begin{figure}[!h]
    \centering
    \includegraphics[width=\linewidth]{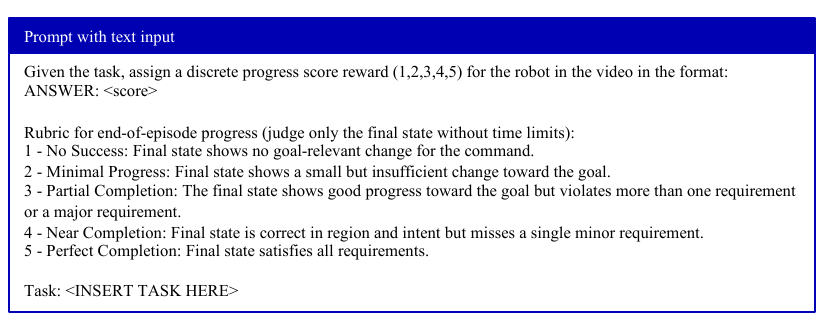}
    \caption{\textbf{Prompt template for {RoboReward}~\cite{roboreward} on RoboPulse.}}
\label{fig:prompt_roboreward}
\end{figure}

\begin{figure}[!h]
    \centering
    \includegraphics[width=1\linewidth]{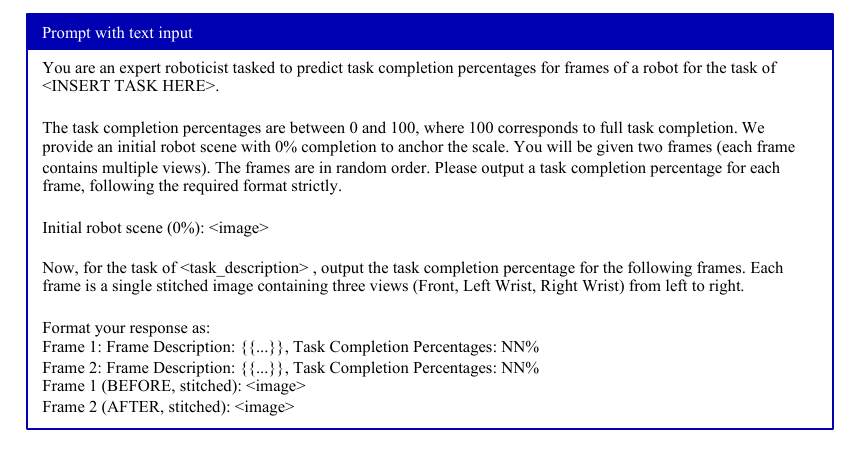}
    \caption{\textbf{Prompt template for {GVL}~\cite{gvl} on RoboPulse.}}
\label{fig:prompt_gvl}
\end{figure}

\end{document}